\DeclareFontFamily{U}{calligra}{}
\DeclareFontShape{U}{calligra}{m}{n}{<->callig15}{}
\def\cF{{\cal F}}
\def\cG{{\cal G}}
\def\cD{{\cal D}}
\def\cO{{\cal O}}
\def\cS{{\cal S}}
\def\event{\mathcal{E}}
\def\cG{\mathcal{G}}
\def\dis{{\sf d}}
\def\reals{{\mathbb R}}
\def\eps{{\varepsilon}}
\def\prob{{\mathbb P}}
\def\E{{\mathbb E}}
\def\scale{{\sf d}}
\def\L0{{L_i}}
\def\de{{\rm d}}
\def\<{\langle}
\def\>{\rangle}
\def\hth{\widehat{\theta}}
\def\hSigma{\widehat{\Sigma}}
\def\supp{{\rm supp}}
\def\F{{\sf F}}
\def\ind{{\mathbb I}}
\def\F{{\sf F}}
\def\P{{\mathbb{P}}}
\def\sT{{\sf T}}
\def\event{\mathcal{E}}
\def\v*{v_i}
\def\T*{T_i}
\def\u*{u_i}
\def\F*{F_i}
\def\reg{\mathcal{R}}
\def\cL{\mathcal{L}}
\def\hth{{\widehat{\theta}}}
\def\cX{\mathcal{X}}
\def\event{\mathcal{E}}
\def\l1u{W}
\def\bv{\mathbf{v}}
\newcommand{\ajcomment}[1]{}
\newcommand{\labitem}[2]{%
\def\@itemlabel{\text{#1}}
\item
\def\@currentlabel{#1}\label{#2}}
\DeclareMathAlphabet{\mathpzc}{OT1}{pzc}{m}{it}
 \def\vm{v_{t}^-}
  \def\bm{b_{t}^-}
  \def\cO{{\cal O}}
 \def\ind{{\mathbb I}}
  \def\reg{{\sf Reg}}
  \def\tr{\tilde{r}}
  \def\epi{E}
  \def\initial{I}
  \def\lF{{l_F}}
  \def\uF{{u_F}}
  \def\M{M}
  \def\b{b_{-it}^+}
  \def\v{v_{-it}^+}
 \renewcommand{\qed}{\hfill \mbox{\raggedright $\square$}}
\renewcommand{\P}[1]{\mathbb{P}\left(#1\right)}
\newcommand{\px}{{\cal D}}
\def\<{\langle}
\def\>{\rangle}
\def\pv{\beta}%parameter vector
\def\cX{{\cal{X}}}
\def\nbuyer{N}
\def\dnoise{F}
\def\fnoise{f}
\def\maxpv{B_p}
\def\maxn{B_n}
\def\covx{\Sigma_x}
\def\rev{{\sf{rev}}}
\def\mbf{\mathbf}
\def\dis{\gamma}
\def\bb{\mbf{b}}
\def\bv{\mbf{v}}
\def\br{\mbf{r}}
\def\is{{i^\star}}
\def\hbeta{\widehat{\beta}}
\def\Lie{{\sf L}}
\def\shade{s}
\def\length{{\lceil\ell_k^{2/3}\rceil}}
\def\oscale{\alpha_0}
\def\scale{\alpha}
\newcommand{\new}[1]{{{\color{black}{#1}}}}
\newcommand{\ajb}[1]{{{\color{black}{#1}}}}
\newcommand{\nc}[1]{{{\color{black}{#1}}}}
\newcommand{\ngg}[1]{{{\color{black}{#1}}}}
\newcommand{\CRR}[1]{{{\color{black}{#1}}}}
  \newtheorem{propo}{Proposition}[section]
\newtheorem{coro}[propo]{Corollary}
\newtheorem{thm}[propo]{Theorem}
\newenvironment{policy}[1][htb]
  {% Update algorithm name
   \begin{algorithm}[#1]%
  }{\end{algorithm}}
\begin{document}
%%%%%%%%%%%%%%%%

\RUNAUTHOR{Golrezaei et al.}

% Title or shortened title suitable for running heads. Sample:
% \RUNTITLE{Bundling Information Goods of Decreasing Value}
% Enter the (shortened) title:
\RUNTITLE{Dynamic Incentive-Aware Learning: Robust Pricing in Contextual Auctions}

% Full title. Sample:
% \TITLE{Bundling Information Goods of Decreasing Value}
% Enter the full title:
\TITLE{Dynamic Incentive-aware Learning: Robust Pricing in Contextual Auctions}

% Block of authors and their affiliations starts here:
% NOTE: Authors with same affiliation, if the order of authors allows,
%   should be entered in ONE field, separated by a comma.
%   \EMAIL field can be repeated if more than one author

\ARTICLEAUTHORS{%
\AUTHOR{Negin~Golrezaei}
\AFF{Sloan School of Management, Massachusetts Institute of Technology, Cambridge, MA, \EMAIL{golrezae@mit.edu}} %, \URL{}}
\AUTHOR{Adel~Javanmard}
\AFF{Data Sciences and Operations Department, University
of Southern California, Los Angeles, CA, \EMAIL{ajavanma@usc.edu}}
\AUTHOR{Vahab~Mirrokni\footnote{The names of the authors are in alphabetical order. Moreover, part of this work is done when Negin Golrezaei was a postdoctoral researcher  at  Google Research, New York. }
}
\AFF{Google Research, New York, NY, \EMAIL{mirrokni@google.com}}
% Enter all authors
} % end of the block

\ABSTRACT{%
  Motivated by pricing in ad exchange markets, we consider the problem  of  robust learning of reserve  prices against strategic  buyers in repeated contextual second-price auctions. Buyers' valuations \new{for} an item depend on the context  that describes the item.  However, the seller is not aware of  the relationship between the context  and buyers' valuations, i.e., buyers' preferences. The seller's goal is to design a learning policy to set reserve prices via observing the past sales data, and her objective is  to minimize her regret for revenue, where the regret  is computed against   a clairvoyant policy that knows buyers' heterogeneous  preferences. Given the seller's goal,  utility-maximizing buyers  have the incentive to bid untruthfully in order to manipulate the seller's learning policy.  {We propose  learning policies that are robust to such strategic behavior. These policies use  the outcomes of the auctions, rather than the submitted bids, to estimate the preferences  while controlling the long-term effect of the outcome of each auction on the future reserve prices. 
   When  the market noise distribution is known to the seller, we propose a policy called  Contextual Robust Pricing (CORP) that achieves a T-period regret  of  $O(d\log(Td) \log (T))$, where $d$ is the dimension of {the} contextual information. When  the market noise distribution is unknown to the seller, we propose two policies whose regrets are sublinear in $T$.}}
\KEYWORDS{pricing, robust learning, strategic buyers repeated second-price auctions, online advertising}

%\HISTORY{}

\maketitle
%%%%%%%%%%%%%%%%%%%%%%%%%%%%%%%%%%%%%%%%%%%%%%%%%%%%%%%%%%%%%%%%%%%%%%

% Samples of sectioning (and labeling) in OPRE
% NOTE: (1) \section and \subsection do NOT end with a period
%       (2) \subsubsection and lower need end punctuation
%       (3) capitalization is as shown (title style).
%
%\section{Introduction.}\label{intro} %%1.
%\subsection{Duality and the Classical EOQ Problem.}\label{class-EOQ} %% 1.1.
%\subsection{Outline.}\label{outline1} %% 1.2.
%\subsubsection{Cyclic Schedules for the General Deterministic SMDP.}
%  \label{cyclic-schedules} %% 1.2.1
%\section{Problem Description.}\label{problemdescription} %% 2.

% Text of your paper here

\section{Introduction}  
{In many online marketplaces, both sides of the market have access to  rich dynamic contextual information about the products that are being sold over time. On the buying side, such information can influence the willingness-to-pay of the buyers for the products, potentially in a heterogeneous way. On the selling side, the information can help the seller differentiate the products and set contextual and possibly personalized prices. To do so,  the seller needs to learn the impact of this information on buyers' willingness-to-pay. Such contextual learning can be challenging for the seller when there are  repeated interactions between the buying and the selling sides. With repeated interactions, the utility-maximizing  buyers may have the incentive to act strategically and trick the learning policy of the seller into lowering their prices.  Motivated by this, our key research question is as follows:
How can the seller dynamically optimize  (personalized) prices in a \emph{robust} manner, taking into account 
the strategic behavior of the buyers?}

{One of the online marketplaces that faces this problem is the online advertising market. In this market, a prevalent approach to sell  ads is via running real-time second-price auctions in which  advertisers can use an abundance of detailed contextual
information before deciding what to bid. In this practice, advertisers can target Internet users based on their (heterogeneous) preferences and targeting criteria. Targeting can create a thin and uncompetitive  market in which few advertisers show  interest in each auction. In such a thin market, it is crucial for the ad exchanges to effectively optimize the reserve prices in order to boost their revenue. However, learning the optimal reserve prices is rather difficult due to frequent interactions between  advertisers and  ad exchanges.   
 
{Inspired by this environment, we study a model in which a seller runs repeated {(lazy)} second-price auctions with reserve over time. In the lazy auction, an item is being sold to the buyer with the highest submitted bid as long his bid exceeds his reserve.\footnote{\ngg{Another version of the second price auction is called ``eager". In this version, the buyers whose bids are less than their reserve price are first removed from the auction. Then, the item is allocated to one of the remaining buyers who has the highest submitted bid.}}} 
 The valuation (willingness-to-pay) of each buyer \new{for the item} in period $t$, which is his private information, depends on an observable $d$-dimensional  contextual  information in that period and his preference vector. 
  We focus on an important special case of this contextual-based  valuation model in which the buyer's value is a linear function of his preference vector and contextual  information, plus some random noise term, where the noise models the impact of contexts that are not measured/observed by the seller.\footnote{\ngg{Appendix \ref{sec:discussion} discusses how  our results can be  extended to some of the nonlinear valuation models.}} The preference vector, which is unknown to the seller and fixed throughout the time horizon,  varies across  buyers. Thus, the preference vectors  capture heterogeneity in buyers' valuation.}\footnote{\ngg{In Appendix \ref{sec:discussion}, we discuss pricing under the settings where  the preference vectors change  over time and as a result, the obtained data is perishable. }}
  
    {The seller's goal is to design a policy that dynamically  learns/optimizes  personalized reserve prices. 
  The buyers are fully aware of the learning policy used by the seller and  act strategically in order to maximize their (time-discounted) cumulative utility. Dealing with such a strategic  population of buyers, the seller aims at 
    extracting as much revenue  as  the clairvoyant policy {that is cognizant of the preference vectors a priori}. These vectors determine the relationship between the 
 valuation of the buyers and  contextual information. 
        \new{Put} differently, the seller would like to minimize her regret where the regret is defined as the difference between the seller's  revenue  and that under the  {clairvoyant policy}.  Note that the clairvoyant policy provides a strong benchmark because the policy
     posts the optimal personalized  reserve prices based on the observed contexts. 
  
  As stated earlier,  the main hurdles in designing a low-regret learning policy in this setting are the frequent interactions between the seller and the buyers. 
  Due to such interactions, the strategic buyers might have the incentive to {bid untruthfully. This way,}  they may sacrifice their short-term utility in order to deceive the  seller, to post them lower future reserve prices. 
Thus, while a single shot second-price auction is a truthful mechanism, repeated second-price auctions in which the seller aims at  dynamically  learning optimal reserve prices of {strategic and  utility-maximizing} buyers may not be truthful.  The untruthful bidding behavior of the  buyers makes it hard for the seller to learn the optimal reserve prices, and this, in turn, can lead to her revenue loss. This highlights the necessity to 
design a robust learning policy that reduces  buyers' incentive to follow untruthful strategy.  
Beside this hurdle, the availability  of the dynamic  contextual information requires the seller to change the reserve prices dynamically over time, based on the contextual information. To do so, the seller needs to learn how  buyers react to such information and based on the reactions, posts  (dynamic) personalized reserve prices. The need to have a personalized reserve price is caused by  heterogeneity in  buyers' preferences.}
  
{We consider setting where the seller (firm) is more patient than the buyer. We formalize it by considering time-discounted utility for the buyers. This is motivated by various {applications}. For example, in online advertisement markets, the advertisers (buyers) who {retarget Internet users}  prefer showing their ads to the users \new{who visited their website}  sooner {rather} than later.} 
  
{\ngg{In this paper, we propose three learning policies.} The first policy, which we call Contextual Robust Pricing (CORP), is tailored to a setting where the distribution of the noise term in buyers' valuation is known to the seller.  We will refer to this noise as  market or valuation noise.  
  By studying this setting, we can characterize the seller's revenue loss due to 
 her  lack of knowledge about the buyers' (heterogeneous) response to contextual information. \ngg{The second policy, that we call  CORP-II, is a variant of the first policy. This policy lends itself to a setting where the unknown  market noise distribution is fixed throughout the time horizon and belongs to a location–scale family.\footnote{\CRR{A location–scale family is a family of probability distributions parametrized by a location parameter and a non-negative scale parameter. Then, if a probability distribution function of a random variable $Y$ belongs to this family, the probability distribution function of random variable $aY+b$ also belongs to this family.  The location-scale families are quite broad and contain Normal, Elliptical, Cauchy, Uniform, Logistic, Laplace, and Extreme value distributions, as examples.}} The third policy, which is called Stable CORP (SCORP), is designed to the setting where the market noise distribution varies over time and as a result, the seller does not have the intention of learning the market noise distribution. She instead would like to design a learning policy that is robust to the uncertainty in the noise distribution.}

{In the remaining part of the introduction, we briefly discuss the salient characteristics of each policy separately and defer the formal description to Sections~\ref{sec:corp} and~\ref{sec:unknownF}.}

\begin{itemize}
\item \textbf{{CORP Policy:}} {When the market noise distribution is known to the seller, under a log-concavity assumption on the noise distribution,  our CORP policy gets  {the cumulative} T-period regret of order    
$O\left(Nd \left(\log(Td)\log(T) +\frac{\log^2(T)}{\log^2(1/\gamma)}\right)\right)$, where the regret is computed against the clairvoyant policy that knows the preference vectors as well as the market noise distribution. Here, $N$ is the number of buyers, $\gamma$ is the buyers' discount factor, and $O\left(Nd \frac{\log^2(T)}{\log^2(1/\gamma)}\right)$ is the extra regret due to the strategic behavior of the buyers.} The policy works in an episodic manner {where the  length of episodes doubles each time.  Some} of the periods in each episode are randomly assigned to exploration, and the rest of the periods are dedicated to exploitation. 
At the beginning of each episode, CORP updates its estimates of the preference vectors by running a maximum Likelihood estimator using only the auction outcomes from the previous episode and then adheres to those estimates throughout the episode. During the exploitation periods, CORP sets the reserves based on its estimates of the preference vectors and {its}  knowledge of the noise distribution.
 As time progresses, the policy becomes more confident about its estimates and consequently uses those estimate over a longer episode.

We now highlight two important aspects of CORP. 
As explained earlier, the CORP policy has an episodic structure and updates its estimate of preference vectors only at the beginning of each episode. 
Such design makes the policy robust by restricting the future effect of the submitted bids. Specifically, bids in an episode are not used in choosing the reserve prices until the beginning of the next episode. Therefore, there is always a delay until a buyer observes the effect of a bid on reserves. Then, considering the fact that buyers are impatient and discount the future, they are more incentivized to bid truthfully.

{There is another important aspect of the policy that ensures its robustness: its estimation method. Rather than using the submitted bids to estimate the preference vectors, the policy  simply uses the \emph{outcome of the auctions}. 
 Because of this feature of the policy, bidding untruthfully does not always result in lower reserve prices; {instead,   
 it} can {impact} the future reserve prices of a buyer when it  leads to {changing the outcome of an auction, i.e., \new{when} a buyer loses an auction due to underbidding or
 a buyer wins an auction due to overbidding.} 
 {As it becomes more clear later, the CORP-II and SCORP policies are also designed in a way to enjoy the aforementioned  robustness properties.}}
\smallskip

\item \ngg{\textbf{{CORP-II Policy:}} We design this policy for the setting where market noise distribution, which is fixed throughout time horizon, is unknown and belongs to a location–scale family.  This policy obtains a regret in the order of $O\left(Nd \log(Td) \sqrt{T} + N^2 d \Big(1+\frac{1}{\log^2(1/\gamma)}\Big) \log^3(T) \right)$ against a clairvoyant policy that knows the preference vectors and  market noise distribution. Similar to the CORP policy, CORP-II estimates the preference vectors and parameters of the market noise distribution using a maximum Likelihood estimator.  It also enjoys an episodic structure. However,   due to uncertainty in the market noise distribution, the length of the episodes grows at a slower rate, compared with that in CORP.  } 

\item \textbf{{SCORP Policy:}} 
{\ngg{Our SCORP policy is designed for a setting where the time-varying market noise distribution is unknown to the seller and belongs to an ambiguity set. Then, under the  log-concavity assumption on the noise distribution, our  policy that knows the ambiguity set,  obtains  {the} T-period  regret of order   
  $O\left( N\sqrt{d  \log(Td)}\; T^{2/3} +\ngg{ \frac{N}{\log(1/\gamma)} \log(T)\; T^{1/3}} \right)$.} Here, the regret is computed against a benchmark called \new{\emph{robust}}; see Definition \ref{propo:benchmark-WC}.  The robust benchmark  bears some resemblance to the benchmark used in the regret analysis of CORP; it knows the true preference vectors and the ambiguity set and based on this knowledge chooses  the reserve prices that work well against the worst distribution in the ambiguity set. \ngg{{In contrast to the benchmark used in CORP-II, the benchmark here does not intend to learn the noise distribution, as the  distribution of the noise can be time-varying. It instead posts ``robust" reserve prices.}} } 
         
        {{The increase in the regret, compared to CORP-II, is due to the fact that the noise distribution is time-varying and as a result, the seller cannot hope to learn it. {Because of this,}  the policy  spends more time on exploration compared to CORP-II, which leads to its higher regret.} \ngg{SCORP uses the same episodic structure as CORP-II but dedicates the beginning {portion} of each episode to pure exploration. Concretely, in episode $k$, with length $\ell_k$, pure exploration phase consists of $\lceil\ell_k^{2/3}\rceil$ periods.       
     Spending more time on exploration is not the sole difference between CORP-II and SCORP.  Given that the noise distribution is  time-varying,  SCORP uses a least-square estimator to update the estimates of preference vectors, while 
        CORP-II  employs  the maximum Likelihood estimator, taking advantage of the fact the market noise distribution is fixed and belongs to a location–scale family.} }

\end{itemize}
 
  \ngg{The rest of the paper is organized in the following manner.  In Section \ref{sec:related}, we review the  literature related to our work. Section \ref{sec:model} formally defines our model.  We present the CORP in Section \ref{sec:corp} and  present CORP-II and SCORP policies in  
   Section \ref{sec:unknownF}.  
Finally, we conclude in Section \ref{sec:conclude}. 

This paper has an electronic companion. Appendix \ref{sec:lowerB} reviews  lower bounds on regret in different pricing settings. In Appendix \ref{sec:discussion}, we provide a discussion on (i) extending our policies to a setting with some  non-linear valuation models, and (ii) learning how to price when data is perishable.   {Appendices} \ref{proof:thm-main}, \ref{proof:thm-F}, and \ref{proof:thm-main2} provide the proof of the regret bound of  CORP, CORP-II, and SCORP,  respectively.}

 \section{Related Work}\label{sec:related}
 In this section, we briefly discuss the literature related to our work. 
 
 \textit{Dynamic Pricing with Learning:} 
 Our work is related to the growing body of research on dynamic pricing with learning; see \citep{den2015dynamic} for a survey.  
 \citep{rothschild1974two, araman2009dynamic, farias2010dynamic, harrison2012bayesian, cesa2015regret, ferreira2016online, cheung2017dynamic} studied dynamic pricing with demand uncertainty in the non-contextual and Bayesian settings. In contrast, \citep{broder2012dynamic, den2013simultaneously,besbes2009dynamic} studied this type of problems in the frequentist settings. In these settings, the parameters of the model, which are unknown (but fixed), are estimated using the maximum Likelihood  (ML) method or other estimation techniques. We note that there are two important aspects that distinguish our work from this line of literature: the presence of the contextual information and strategic behavior of the buyers. In the following, we elaborate on these aspects by reviewing the related work.

 \medskip
  \textit{Contextual Dynamic Pricing with Learning in {Non-strategic Environment:}} Recently, several works considered the problem of dynamic pricing in a {non-strategic} setting when the unknown demand function depends on the customers' characteristics (aka contextual information). {In such settings, customers are not strategic in a sense that they do not consider the impact of their current actions on the future prices they will see.}
  {\cite{chen2015statistical}} studied this problem when the demand function follows the logit model and proposed an ML-based learning algorithm. \ngg{{\cite{leme2018contextual, cohen2016feature}, and \cite{lobel2016multidimensional}}   proposed a learning algorithm based on the binary search method when the  demand function is linear and deterministic.} In their models, buyers have homogenous preference vectors and are non-strategic. Hence, the problem reduces to a single buyer setting, where the buyer acts myopically, {i.e.,  the buyer} does not consider the impact of the current actions on the future prices. In our setting, however, the seller interacts with a heterogeneous set of buyers in repeated second-price auctions, rather than the posted-price mechanism. Thus, the seller should estimate a preference vector per buyer and use these estimates to set personalized contextual-based reserve prices.  There is also a new line of literature  that studied dynamic pricing with demand learning when the contextual information is high dimensional (but sparse){; see} \cite{javanmard2016dynamic, ban2017personalized}.  Similar problems have been investigated in \cite{bastani2015online,javanmard2017perishability}.

   {As mentioned earlier,  in our setting, the seller repeatedly  interacts with a small number of strategic and heterogeneous buyers.} We note that  \cite{edelman2007strategic} presented empirical evidence that showed buyers in online advertising markets act strategically. There is  also a large body of literature that studied dynamic pricing in a setting where buyers are strategic and the demand function is known \new{a priori}; see, for example, \cite{borgs2014optimal, besbes2015intertemporal,golrezaei2017dynamic}.\footnote{{Very recently, \cite{chen2018markdown} study dynamic pricing with unknown demand. Here, customers  have unit-demand, arrive over time, and time their purchase strategically. }} This literature highlights the importance of considering the strategic behavior of buyers in updating prices over time.

    \medskip
   \textit{Pricing with Strategic Buyers and Demand Learning:}  \cite{amin2013learning, medina2014learning}, and \cite{kanoria2017dynamic} examined the problem of dynamic pricing with strategic buyers in a non-contextual environment.\footnote{{Learning with strategic players has been studied very recently  in different settings including spread betting markets \citep{birge2018dynamic} and multi-armed bandit settings \citep{braverman2017multi}.}
    }
    In \cite{amin2013learning} and \cite{medina2014learning}, the seller repeatedly interacts with a single strategic buyer via a posted-price mechanism. Similar to our setting, the seller is more patient than the buyer in a sense that the buyer discounts his future utility.  {\cite{amin2013learning}} showed that no learning  algorithm can obtain a sub-linear regret when the buyer  is as patient as the seller. In addition, via designing learning {policies}, they demonstrated that the seller can get a sub-linear regret bound when the buyer is less patient. 
   
 \cite{kanoria2017dynamic} studied dynamic pricing when a group of strategic buyers competes with each other in  repeated non-contextual second-price auctions. A main difference with our setting is that in their model, buyers are as patient as the firm and hence there is no time-discount factor for buyers' utilities.  They designed a near-optimal {elegant} pricing policy in which the reserve price of each buyer is computed using the submitted bids of other buyers. {Specifically, for any $\epsilon>0$, their policy can be designed to achieve $(1-\epsilon)$ of the expected revenue obtained under the static Myerson optimal auction {for the valuation distribution}. Note that this corresponds to \new{a linear regret bound} in our terms. Indeed in the setting that buyers do not discount their future utilities and buyers are utility-maximizer, it is impossible to get a sub-linear regret~\citep{amin2013learning}. Further, in \citep{kanoria2017dynamic} it is 
 assumed that {products to be sold are ex-ante identical, and that} buyers are homogenous and  their valuations are all drawn from a single distribution, which is unknown to the seller.

With respect to the homogeneity assumption, we point out that there  exists empirical evidence that buyers are indeed heterogeneous \citep{guimaraes2011sales, johnson2003multiproduct, golrezaei2017boosted}. It is not surprising that the heterogeneity  in the markets makes the design of  selling mechanisms more difficult. In addition, such difficulties get more severe when the seller needs to design dynamic selling mechanisms  for a group of strategic buyers that  compete with each other repeatedly. 
 
Recently, \cite{incentive2017} studied a similar problem in a static non-contextual setting where the seller has access to $m$ (strategic) data points and using these data points, she would like to design a mechanism that can incentivize the buyers {to be truthful in the first place.} They show that when the market power of each buyer is negligible, designing such a mechanism is feasible. To achieve this result, they apply the technique of differential privacy~\citep{MT07}.

 {Closer to the spirit of this paper,~\cite{amin2014repeated} studies the problem of pricing inventory in a repeated posted-price auction. The authors propose a pricing algorithm whose regret is in the order of $O(\sqrt{\log T}\, T^{2/3})$ in a contextual setting, against a strategic buyer. \footnote{\ajb{Dependency on $d$ is hidden in the big-$O$ notation.}}}
     We point out that our regret result  improves upon~\cite{amin2014repeated} in the following directions: 
\begin{itemize}[leftmargin=*]
\item [-] We allow for market noise in our model, whereas~\cite{amin2014repeated} considers noiseless setting which posits that buyer's valuation is given as a linear function of features. \ngg{Due to this difference, their algorithm and regret bound obtained for noiseless setting in \cite{amin2014repeated} cannot be  applied to our noisy setting and vice versa.} Nonetheless, 
by adding the noise component, we make the model {richer}. {When the noise distribution is known, our CORP policy obtains  a T-period regret  of  $O(d\log(Td) \log (T))$. In addition, when the noise distribution is unknown, our SCORP policy, which is \emph{doubly} robust against strategic buyers and the uncertainty in the noise distribution, obtains  a T-period regret  of $O(d\sqrt{\log(Td)}\;T^{2/3})$.}

\item [-] We consider a market of strategic buyers who participate in a second-price auction at each round, while~\cite{amin2014repeated}, motivated by targeting in online advertising, considers a single buyer \nc{case}. Note that in case of a single buyer, there is no notion of bid, as the buyer only needs to decide if he is willing to purchase the item at the posted price. By contrast, in a market of buyers, each submitted bid of a buyer can potentially affect the utility of that buyer (instant and long-term utility), other buyers' utilities and the seller's revenue. \ngg{We note that Section 5 in~\cite{amin2014repeated} considers an extension to {the multiple buyers case} but assumes that the highest valuation {in} each period $t$ can be written as $\<x_t,\beta\>$ for a fixed parameter vector $\beta$, and product feature (context) $x_t$, which we find to be {a strong} assumption. }  

\end{itemize}

\textit{Behavior-based Pricing:}  Our work is also related to the literature on behavior-based pricing where the seller \new{uses} the past behavior of the buyers to update the prices \citep{hart1988contract, schmidt1993commitment, fudenberg2006behavior, acquisti2005conditioning, esteves2009survey, bikhchandani2012behavior}. 
In this literature, it is mostly assumed that 
the buyer's valuation, which is drawn from a publicly known distribution, is fixed throughout the time horizon. Thus, the seller does not need to learn the valuation distribution; instead, the seller aims at learning the realized valuation of the buyer. Note that considering a static  valuation for a buyer 
in the online advertising market is not reasonable, as in this market, buyer's valuation can depend on the  rich contextual information, which varies over time. 
 We also note that in the  behavior-based pricing literature, the seller and  buyer usually get engaged in repeated games, where each of the player responds to other player's strategy to form  a perfect Bayesian Nash equilibrium.

\section{Model}
\label{sec:model}
Before we describe the model, we adopt {some notation} that will be used throughout the paper. For an integer $a$, we write $[a] = \{1,2,\dotsc, a\}$. In addition, for a vector $v\in \reals^d$, we denote its {$j^{\text{th}}$} coordinates by $v_j$, for $j\in [d]$, and indicate its $\ell_2$ norm by $\|v\|$. For two vectors {$v, u\in \reals^d$, $\<u,v\> = \sum_{j=1}^d u_j v_j$} represents their inner product. {Finally, $\ind(\cdot)$ denotes the indicator function: $\ind(A) =1$ when event $A$ happens, and is zero otherwise.   }

We consider a firm who runs  repeated second-price auctions with \emph{personalized} reserve over a finite time horizon with length $T$. In each period $t\ge 1$, the firm would like to sell an item to one of $\nbuyer$ buyers. The item in period $t$ is represented by an observable feature (context) vector denoted by $x_t \in \reals^d$. We assume that the features are drawn independently from a fixed distribution $\px$, with a bounded support $\cX\subseteq \reals^d$.  
Note that the length of the time horizon $T$ and {distribution $\px$} are unknown to the firm. For the sake of normalization and without loss of generality, we assume that $\|x_t\|\le 1$, and {hence take $\cX = \{x\in \reals^d:\, \|x\|\le 1\}$.} 
We let $\covx =\E[x_tx_t^\sT]$ be {the second moment matrix} of distribution $\px$, and assume that $\covx$ is a positive definite matrix, where $\covx$ is unknown to the firm.

For buyers' valuations, we consider a feature-based model that captures heterogeneity among the buyers. In the following, we discuss the specifics of the valuation model.
Valuation of buyer $i \in [\nbuyer]$ for an item in period $t\ge 1$ depends on the feature vector $x_t$ and period $t$ {and is denoted by $v_{it}(x_t)$.} 
  We assume that $v_{it}(x_t)$ is a linear function of a preference vector $\pv_i$ and the feature vector $x_t$.  \ngg{(We relax this assumption in Appendix \ref{sec:discussion}.)} That is, 
\begin{align}v_{it} (x_t)= \<x_t,\pv_i\> +z_{it} ~~~~~ i\in [\nbuyer],~ t\ge 1\,. \label{eq:val}\end{align}
Whenever it is clear from the context, 
we may remove the dependency of valuation  $v_{it} (x_t)$ on the feature vector $x_t$ and denote it by $v_{it}$. 
Here, $\pv_i \in \mathbb{R}^d$ represents the buyer $i$'s preference vector, and for the sake of normalization, we assume  that  
$\|\pv_i\| \le \maxpv$, $i\in [N]$, {where $\maxpv$ is a constant.} 
The terms $z_{it}$'s, $i\in [N]$, $t\ge 1$, {which are independent of the feature vector $x_t$,}  are idiosyncratic shocks {and are} referred to as noise. \ngg{The noise terms are drawn independently and identically
from a mean zero distribution $\dnoise: [-\maxn,\maxn]\rightarrow [0,1]$ with continuous density $\fnoise : [-\maxn,\maxn] \rightarrow \mathbb{R}^+$}, where $\maxn$ is a constant.\footnote{{The noise aims at capturing features that are not observed/measured by the firm.}} We assume that the firm knows  the distribution of the noise $\dnoise$. We relax this assumption later in Section \ref{sec:unknownF}. Note that the valuation of buyer $i$, {$v_{i t}$}, is not known to the firm, as the preference vector $\pv_i$ and realization of the noise $z_{it}$ are not observable to her. \new{In addition}, by our normalization, $v_{it}(x_t)\le B$, with $B = \maxpv+\maxn$.

We make the following assumption on distribution of the noise $\dnoise$.

\begin{assumption}[Log-concavity] \label{assump:logcancavity} $F(z)$ and $1-F(z)$ are log-concave \new{in $z\in[-\maxn,\maxn]$.}
\end{assumption} 

Assumption \ref{assump:logcancavity}, which is prevalent in the economics literature \citep{bagnoli2005log}, holds by several common probability distributions including uniform, and {(truncated)}
Laplace, exponential, and logistic {distributions}.  
A few remarks are in order regarding Assumption~1. If distribution $F$ is log-concave and its density $f$ is symmetric, i.e., $f(z) = f(-z)$, then $1-F(z) = F(-z)$ is also log-concave. \CRR{Moreover, if  density $f$  is log-concave, the cumulative distribution function $F$ and the reliability function  $1-F$ are also log-concave~\citep{bagnoli2005log}. This implies that Assumption \ref{assump:logcancavity} is satisfied when the density $f$ is log-concave.} We also point out that if a density has a monotone hazard rate (MHR), i.e., $\dfrac{f(z)}{1-F(z)}$ is increasing in $z$, then $1-F(z)$ is log-concave. This point, in turn, shows that all MHR and symmetric densities satisfy Assumption \ref{assump:logcancavity}.
\medskip

We next {describe the repeated second-price auctions and} 
discuss the firm's problem. The goal of the firm is to maximize the cumulative expected revenue {in} {repeated second-price auctions.} The firm tries to achieve this by choosing reserves in a \emph{dynamic} and \emph{personalized} manner.  
\subsection{{Second-price Auctions} with Dynamic Personalized Reserves}
Before defining {a second-price} auction, we need to establish some {notation}. For buyer $i\in [N]$ and period $t\ge 1$, we let $p_{it}$ be the payment from buyer $i$ {in} period $t$. Further, let 
$q_{it}$ be the allocation variable: $q_{it} = 1$  if the item in period $t$ is allocated to buyer $i$ {and is zero otherwise.} We also let $b_{it}$ be the bid submitted by buyer $i$ and $r_{it}$ be the reserve price posted by the firm 
for buyer $i$ in period $t$.  We define $\bb_t = (b_{1t}, \dotsc, b_{Nt})$ {and} $\br = (r_{1t}, \dotsc, r_{Nt})$ as the vectors of bids and reserves in period $t${, respectively}. Moreover, we denote by $H_{\tau}$ the history set observed by the firm up to period $\tau$. This set includes buyers' bids and reserve prices for all $t<\tau$:
\begin{align}\label{def:history}
H_\tau = \{(x_1, \br_1,\bb_1), \dotsc, (x_{\tau-1}, \br_{\tau-1}, \bb_{\tau-1})\}\,.
\end{align} 

Below, we explain the details of the second-price auction with reserve. In period {$t\ge 1$,}
\medskip
\begin{itemize}
\item   The firm  observes the feature vector $x_t\sim \px$. In addition, each buyer $i\in[N]$ learns his valuation $v_{it}$, defined in Equation (\ref{eq:val}).
\item   \ngg{For each buyer, the firm computes reserve price $r_{it}$, as a function of history set $H_{t}$ and the  feature vector $x_t$.}
\item  Each buyer $i \in [\nbuyer]$ submits a bid of $b_{it}$. 
\item  Let $i^{\star} = \arg\max_{i \in [N]} \{b_{it}\}$. If $b_{\is t} \ge r_{\is t}$, then the item is allocated to buyer $\is$, and we have $q_{\is t} = 1$. In case of a tie, the item is allocated  uniformly at random to one of the buyers among those with the highest bid. For all buyers who do not get the item, we have $q_{it} = 0$.
\item  For each buyer $i$, if he gets the item $(q_{it} = 1)$, then he pays $p_{it} =  \max\left\{r_{it}, \max_{j\ne i} \{b_{jt}\}\right\}$. Otherwise, $p_{it} =0$. 
\end{itemize}
To lighten the notation, we henceforth use the following shorthands. For each period $t$, we let $b^+_t$ and $b^-_t$ respectively denote the highest and  second highest bids. Likewise, we define $v^+_t$ and $v^-_t$ as the highest and  second highest valuations in period $t$. We also let $r^+_t$ be the reserve price of {the} buyer with the highest bid. Therefore, $b_{\is t} = b^+_t$, $r_{\is t} = r^+_t$, and the firm receives {a} payment of $\max\{ r^+_t,b^-_t\}$ if the item gets allocated and zero otherwise.{ We assume that for all periods $t$, $b_t^+\le \M$ for some constant $\M>0$. In words, buyers submit bounded bids.}
\medskip

This version of the second-price auctions is called \emph{Lazy} auctions \citep{paes2016field}. Here, the item is allocated to a buyer with the highest submitted bid, as long as the buyer clears his reserve. In other words, the item will not be allocated to any buyer when the buyer with the highest submitted bid does not clear his reserve price. \ngg{As stated in the introduction, there is another version of second-price auctions called \emph{Eager} auctions. In Eager auctions, all the buyers that do not clear their reserve prices are eliminated first, and then the item is allocated to one of the remaining buyers that has the highest submitted bid.}
\cite{paes2016field} {showed} that these two versions do not dominate each other in terms of their yield revenue. Thus, here we focus on the Lazy auctions, as  reserve prices in these auctions can be optimized effectively; see Proposition \ref{prop:opt_reserve}. {We further note  that \cite{kanoria2017dynamic} argued that even when buyers are homogeneous, designing a learning algorithm for eager second-price auctions that can incentivize the buyers to bid truthfully is very challenging.}

The firm's decision in any period $t\ge 1$ is to find optimal reserve price $r_{it}$, $i\in [\nbuyer]$, and her objective is to maximize her \new{(cumulative)} expected revenue. Note that revenue of the firm is the total payment she collects  from the buyers over the length of the time horizon. Let  
\begin{align}
\rev_t  ~=~ \E\Big[{\sum_{i\in [N]}}
 p_{it}q_{it} \Big]~=~
\E \left[\max\{b^-_t, r^+_t\} \ind(b^+_t\ge r^+_t)\right]\, 
\end{align} 
be the expected revenue of the firm in period $t\ge 1$, where the 
  expectation is w.r.t. to the noise distribution $F$,   feature distribution $\px$, and any randomness in the bidding strategy of buyers and  learning policy used by the firm. Then, the total revenue of the firm is given by  $\sum_{t=1}^T \rev_t$.

Maximizing the firm's revenue is equivalent to minimizing her regret where the regret is defined as the difference between the firms' revenue and the maximum expected revenue that the firm could earn if she knew the preference vectors $\{\pv_i\}_{i\in [\nbuyer]}$. In the next section, we will formally define the firm's regret.

\subsection{Benchmark and Firm's  Regret}
As stated earlier, the firm's objective {is} to minimize her regret\nc{,} which is the maximum expected revenue loss 
 relative to a benchmark policy that knows the preference vectors $\{\pv_i\}_{i\in [\nbuyer]}$ in hindsight. 
 {When the preference vectors and noise distribution $F$ {are} known, to set the optimal reserves $r_{it}$, 
the benchmark policy does not need any knowledge from the history set $H_t$.}  
 {Thus, with the knowledge of  the preference vectors, all buyers are incentivized to bid truthfully against the benchmark policy. This is the case because single-shot second-price auctions are strategy proof~\citep{myerson1981optimal}. }
 
 We next characterize the benchmark policy. Let $r^{\star}_{it}$ be the reserve of buyer $i$ in period $t$ posted by the benchmark policy and following our convention, we denote by $r^{\star+}_t$ the reserve price of the buyer with the highest bid.

\begin{propo}[Benchmark] \label{prop:opt_reserve}
{If the firm knows the preference vectors $\{\pv_i\}_{i\in[\nbuyer]}$ and (fixed) noise distribution $F$,} then the optimal reserve price of buyer $i\in [\nbuyer]$ for a feature vector $x\in \cX$ is given by
 \begin{align}
 r_{i}^{\star}(x) ~=~ \arg\max_{y} \big\{y\big(1-F(y-\new{\<x, \pv_i\>})\big)\big\} ~~ i\in [\nbuyer],~ x\in \cX\,,\label{def:rstar}
 \end{align}
  and hence $r^{\star}_{it} = r^{\star}_i(x_t)$. In addition, in any period $t\ge 1$, the benchmark expected revenue 
  is given by  
\begin{align}\label{eq:BenchmarkRev}
\new{\rev^{\star}_{t}} ~= ~\E \big[\max\{v^-_t, r^{\star+}_t\} \ind(v^+_t\ge r^{\star+}_t)\big]\,,
\end{align}
 where expectation is w.r.t. to the noise distribution $F$ and the feature distribution $\px$. \end{propo}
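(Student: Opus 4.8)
The plan is to reduce the benchmark's dynamic problem to a static, per-period optimization by invoking strategy-proofness, and then to solve that static problem using the decoupling structure that is special to lazy auctions together with the log-concavity assumption.

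First I would argue that against the benchmark every buyer bids truthfully. Since the benchmark knows $\{\pv_i\}_{i\in[\nbuyer]}$ and $F$, it computes its reserves as a deterministic function of the current context $x_t$ alone, without consulting the history $H_t$. The reserves being independent of the submitted bids, the single-shot second-price auction with reserve is strategy-proof \citep{myerson1981optimal}, so $b_{it}=v_{it}$ is a dominant strategy. Substituting $b^+_t=v^+_t$ and $b^-_t=v^-_t$ into $\rev_t=\E[\max\{b^-_t,r^+_t\}\ind(b^+_t\ge r^+_t)]$ yields the benchmark revenue formula in (\ref{eq:BenchmarkRev}); it then remains only to verify that the reserves in (\ref{def:rstar}) are revenue-optimal.

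Next I would establish the decoupling property of lazy auctions. Writing $\rev_t=\E[\sum_i p_{it}q_{it}]$ and noting that buyer $i$ wins and pays exactly when $v_{it}\ge\max\{r_{it},V_{-i}\}$, where $V_{-i}:=\max_{j\ne i}v_{jt}$, buyer $i$'s contribution equals $\E[\max\{r_{it},V_{-i}\}\,\ind(v_{it}\ge\max\{r_{it},V_{-i}\})]$. The key point is that this term depends on buyer $i$'s own reserve $r_{it}$ but on none of the rival reserves, since $V_{-i}$ involves only rival \emph{values}; hence maximizing $\rev_t$ over the whole reserve vector splits into $\nbuyer$ independent scalar problems. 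Fixing $x_t=x$, writing $\mu_i=\<x,\pv_i\>$, and conditioning on $V_{-i}=m$ (using that $z_{it}$ is independent of $V_{-i}$), buyer $i$'s conditional expected payment becomes $g_i(\max\{r_{it},m\})$ with $g_i(y):=y\,(1-F(y-\mu_i))$.

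It then suffices to show that $r_{it}=r^\star_i(x):=\arg\max_y g_i(y)$ maximizes $g_i(\max\{r_{it},m\})$ \emph{pointwise in $m$}, which is where Assumption \ref{assump:logcancavity} enters: because $\log g_i(y)=\log y+\log\big(1-F(y-\mu_i)\big)$ is a sum of concave functions on $y>0$ (and $g_i\le 0$ for $y\le 0$), $g_i$ is log-concave, hence unimodal with a well-defined maximizer $r^\star_i(x)$. A short case analysis on whether $m$ lies to the left or right of $r^\star_i(x)$ — using that $g_i$ is nonincreasing past its peak — gives $g_i(\max\{r^\star_i(x),m\})\ge g_i(\max\{r,m\})$ for every $r$ and every $m$; integrating over the law of $V_{-i}$ and summing over $i$ completes the optimization. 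I expect the main obstacle to be precisely this pointwise-domination step: one must verify that a \emph{single} reserve is simultaneously optimal against every realization of the competing value $V_{-i}$, a property that fails for eager auctions and that hinges entirely on the unimodality supplied by log-concavity.
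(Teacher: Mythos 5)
Your proposal is correct, and its overall skeleton (truthfulness of buyers against the benchmark via strategy-proofness, decoupling of the lazy auction across buyers, reduction to a scalar problem per buyer, log-concavity as the driver of optimality) matches the paper's. However, the key optimization step is handled by a genuinely different argument. The paper writes buyer $i$'s contribution as
\begin{align*}
W_{it}(r) \;=\; \int_r^\infty v\, h_{it}(v)\bigl(1-F(v-\<x_t,\pv_i\>)\bigr)\,\de v \;+\; r\, H_{it}(r)\bigl(1-F(r-\<x_t,\pv_i\>)\bigr)\,,
\end{align*}
where $H_{it}$ (with density $h_{it}$) is the law of the highest competing valuation, then differentiates and observes that $H_{it}(r)$ factors out of $W_{it}'(r)$, so the first-order condition coincides with the stationarity condition of $y\bigl(1-F(y-\<x_t,\pv_i\>)\bigr)$; strict log-concavity of that map (sum of $\log y$ and the concave $\log(1-F)$) then identifies the stationary point as the unique global maximizer. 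You instead condition on the realization $m$ of the highest rival valuation, write the conditional payment as $g_i(\max\{r,m\})$, and prove the pointwise domination $g_i(\max\{r_i^\star(x),m\})\ge g_i(\max\{r,m\})$ for every $m$ via unimodality, then integrate. Your route is more elementary and slightly more robust: it needs no density or differentiability for the rival-maximum distribution (so it tolerates atoms), it sidesteps the paper's unaddressed subtlety that $W_{it}'(r)=0$ also holds wherever $H_{it}(r)=0$, and it makes transparent that a \emph{single} reserve is optimal against every rival realization, which is the real content of the decoupling remark following the proposition. What the paper's calculus route buys in exchange is the explicit formulas for $W_{it}'$ and $W_{it}''$, which are reused later (Lemma \ref{techlem1}) in the regret analysis, so the differentiation is not wasted work in context. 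One small polish for your write-up: to justify the $\arg\max$ in \eqref{def:rstar} being well defined as a unique point, note that $\log y$ is \emph{strictly} concave, so $g_i$ is strictly log-concave on $y>0$ and the maximizer is unique; your domination argument itself only needs some maximizer, but the statement implicitly asserts uniqueness.
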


We refer to Appendix~\ref{proof:prop-opt_reserve} for the proof of Proposition~\ref{prop:opt_reserve}. 
We {remark} that the benchmark revenue $\rev^\star_t$ is measured against truthful buyers, while the firm's revenue under our policy is measured against strategic buyers {who may not necessarily follow the truthful strategy.}

{Observe that the optimal reserve price of buyer $i$ in period $t$, denoted by  $r^{\star}_{it}$, solves the following optimization problem
 \[r^{\star}_{it}~=~ \arg\max_{y}~\{y\cdot \P{v_{it}(x_t) \ge y}\} ~=~ \arg\max_{y}\big\{y\cdot \P{\new{\< x_t,\beta_i\>} +z_{it} \ge y}\big\}\,.  \]
 This shows that the optimal reserve price of buyer $i$ does not depend on the number of buyers participating in the auction or their preference vectors.  \ngg{In other words, in (lazy) second-price auctions, when the preference vectors are known to the firm and the noise distribution is log-concave, the problem of optimizing  reserve prices can be decoupled.}\footnote{{This is not the case for the eager second-price auctions.}}
 } 
{Because of this, the benchmark, defined in Proposition \ref{prop:opt_reserve},} has a simple structure: For any feature vector $x\in \cX$, 
the optimal reserve price of buyer $i$, $r_{i}^{\star}(x)$, only depends on $\pv_i$ and feature $x$, and is independent of $\pv_j$, $j\ne i$. 

{In fact,} the benchmark policy offers the best mapping from the features to reserve prices, {where this mapping does not change with time.} 
{This is due to the fact the noise distribution $F$ and the feature distribution $\cD$ remain unaltered across time and  $\rev^{\star}_t$ is the \emph{expected revenue} of the firm in period $t$, with the expectation taken w.r.t. the context vector $x_t$ and valuation noise.}

Note that in non-contextual \new{settings}, \new{the} regret is measured against a policy that posts a single fixed optimal vector of reserve prices. {In \new{our} {contextual} setting,} by contrast, we would like to compare our learning policy with the best mapping from \new{the} feature (context) vectors to \new{the} vector of reserve prices. In addition, the benchmark's optimal mapping depends on the buyer's preference vector $\{\pv_i\}_{i\in [\nbuyer]}$; that is, the benchmark offers personalized  reserve prices. 
   Competing with such a  strong benchmark that takes into account the impact of contexts as well as the heterogeneity among buyers is one of the challenges faced by the firm.

Having defined the benchmark, we are now ready to formally define the regret of a firm's policy $\pi$. Recall that the firm's decision is to optimize {reserve prices}. To set the reserve prices optimally, the firm needs to learn the preference vectors $\{\pv_i\}_{i\in [\nbuyer]}$. To do so, the firm faces the trade-off between exploration and exploitation. {Such a trade-off}  is not the only hurdle that the firm is facing: the buyers can act strategically and interfere with the learning process of the firm by bidding untruthfully. Let us stress that the buyers' behavior not only {affects} the outcome of the current auction {but} also can impact the future outcomes. The reason is that the firm can use the history set in posting reserves. Therefore, {in general,} each buyer's bid may have a \emph{perpetual} {effect} on the firm's revenue. In Section \ref{sc:strategic}, we further elaborate on the buyers' bidding behavior. Thus, the firm's goal is to deploy a robust learning policy that limits the long-run effect of each bid and tries to {incentivize} the buyers to be truthful. 

Consider a policy $\pi$ that posts a vector of reserve prices $\br^\pi_t = (r^\pi_{1t}, \ldots, r^\pi_{Nt})$, as {a function} of history set $H_t$ observed by the firm. 
Suppose that the buyers submit bids of $\bb_t = (b_{1t}, \ldots, b_{Nt})$, $t\ge 1$, where $\bb_t $ may not be equal to {the} vector of valuations $\bv_t = (v_{1t}, \dotsc, v_{Nt})$. {The submitted bid of buyer $i$, $b_{it}$, can depend on  the learning policy used by the firm, context $x_t$, his valuation $v_{it}$, and history $H_{it}$, where
 \[H_{it} =\{(v_{i1}, b_{i1}, q_{i1}, p_{i1}),\ldots, (v_{i(t-1)}, b_{i(t-1)}, q_{i(t-1)}, p_{i(t-1)}) \}.\]}  
  Recalling our notation, we write {$r^{\pi+}_t$} to denote the reserve price, set by policy $\pi$, of the buyer with the highest bid {in period $t$.} Then, the expected revenue of the firm under policy $\pi$ in period $t$ reads as 
\begin{align}
\rev^{\pi}_t ~=~ \E \big[\max\{b^-_t, r^{\pi+}_t\} \ind(b^+_t\ge r^{\pi+}_t)\big]\,,
\end{align}
 where expectation is w.r.t. to the noise distribution $F$, {feature} distribution $\px${, and} any randomness in bidding strategy of the buyers.  
 
Then, the worst-case cumulative regret of policy $\pi$ is defined by
\begin{align}\label{regret}
\reg^\pi(T) = \max\Big\{\sum_{t=1}^T (\rev^\star_t-\rev^\pi_t ):\, \|\beta_i\|\le \maxpv, \text{ for }i\in[N],\, \supp(\px)\subseteq \cX \Big\}\,.
\end{align}
{Note that the regret of the policy $\pi$ is not a function of the feature distribution $\px$ and the feature vectors $\{\pv_i\}_{i\in [N]}$. That is, we compute the regret of the  policy $\pi$ against the worst feature distribution $\px$ and  preference  vectors $\{\pv_i\}_{i\in [N]}$.}

In the next section, we {discuss} buyers' bidding behavior. 

\subsection{{Utility-maximizing} Buyers}\label{sc:strategic}
We assume that each buyer $i\in [\nbuyer]$ is risk neutral and aims at  maximizing his (time-discounted) cumulative {expected} utility. \ngg{The utility of buyer $i$ in period $t\ge 1$ with valuation $v_{i t}$ is given by
\[u_{it} ~=~ \gamma^t(v_{it}q_{it} - p_{it})\,,\] 
where $\dis \in (0, 1)$ is a discount factor.  The discount factor highlights the fact that the firm is more patient than the buyers. For instance, in online advertising markets, advertisers are willing to show their ads {to the users} who just visited their websites.\footnote{Such a practice is known as retargeting \citep{amin2014repeated, golrezaei2017boosted}.} As another example, in cloud computing markets, the consumers would like to access enough capacity whenever they need it \citep{borgs2014optimal}.}
Note that through the allocation variables $q_{it}$, utility $u_{it}$, depends on the submitted bids of all the buyers{, $\bb_t$, and their reserve price $\br_t$} used by the firm.

\ngg{In any period $t$, each buyer $i$ would like to maximize his  time-discounted cumulative utility that he will earn in any period $\tau\ge t$,  which is defined as
\[U_{it}~=~ \sum_{\tau=t}^\infty  \E[u_{i\tau}]\,.   \] }
   We note that \cite{amin2013learning} showed that it is impossible to get a sub-linear regret when buyers are utility-maximizer and do not discount {their future utilities.}   \ngg{We further remark  that the firm does not need to know $\gamma$ as our policies are oblivious to $\gamma$. However, as we show later, our  regret bounds depend on $\gamma$.}

\emph{All buyers fully know the learning policy that the firm is using to set the reserves.}\footnote{This assumption is inspired by the literature on the behavior-based pricing where it is shown  that the {firm} can earn more revenue by committing to a pricing strategy \citep{hart1988contract, salant1989inducing}. See also \cite{aviv2008optimal, aviv2015responsive} for a similar insight.} 
\CRR{More precisely, if the policy involves randomization,  the buyers know in advance the policy and not the realization of the policy.}
Armed with this knowledge, buyers can potentially increase their future utility they earn via {bidding untruthfully.}
{Particularly, a buyer can underbid (shade) his bid by submitting  bid $b_{it}< v_{it}$, or he can  overbid  by submitting  bid $b_{it}> v_{it}$.}  
{Both shading and overbidding can potentially impact the firms' estimate of preference vectors of the buyers and this, in turn, can hurt the firms' revenue.}  
However, shading can lead to a utility loss in the current period, as by shading,  the buyer may lose an auction that he would have won by bidding truthfully. {Similarly, overbidding can result in a utility loss  in the current period, as by overbidding  the buyer might end up paying more than his valuation.}

We next {present our robust policy,} named CORP, for learning  preference vectors $\{\pv_i\}_{i\in [N]}$ through interaction with utility-maximizing buyers in {repeated}  second-price {auctions} with reserve.

\section{CORP:  A Contextual Robust Pricing Policy}\label{sec:corp}
In this section, we present our learning policy. {The description of the policy is provided in Table \ref{alg:corp}.} For reader's convenience,  we also provide a schematic representation of CORP in Figure~\ref{fig:corp}. The policy works in an episodic manner. It tries to learn the preference vectors by using Maximum Likelihood Estimation (MLE) and meanwhile sets the reserve prices based on its current estimates of the preference vectors. Episodes are indexed by $k= 1, 2, \ldots$, where the length of each episode, denoted by $\ell_k$, is given by $2^{k-1}$. Thus, episode $k$ starts in period $\ell_{k} = 2^{k-1}$ and ends in period $\ell_{k+1}-1 = 2^k -1$. Note that the length of episodes {increases} exponentially with $k$. {Throughout, we use notation $\epi_k$ to refer to periods in episode $k$, i.e., $\epi_k \equiv\{\ell_k, \dotsc, \ell_{k+1} - 1\}$.}

\begin{figure}[]
{{\centering
\includegraphics[scale = 0.6]{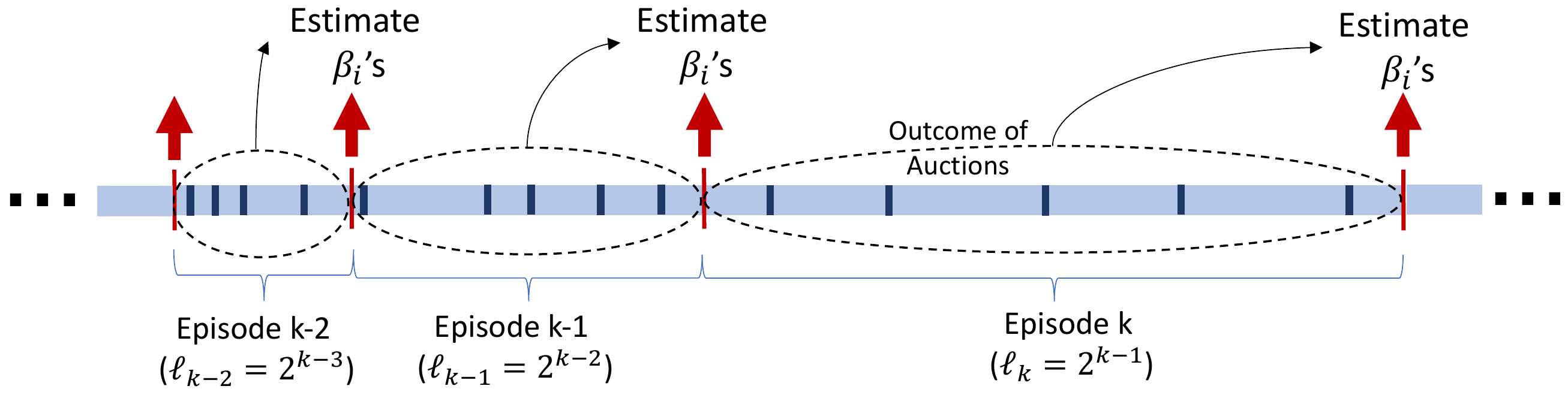}
 \caption{Schematic representation of the CORP policy. CORP has an episodic structure and updates its estimates of buyers' preference vectors at the beginning of each episode. The estimates are computed via the maximum log-likelihood method using outcomes of auctions. The dark blue rectangles show the random exploration periods. }\label{fig:corp}
}}
\end{figure}

At the beginning of each episode $k$, we  estimate the preference vectors of the buyers using the outcome of the auctions ($q_{it}$'s) in the previous episode, i.e., episode $k-1$, and we do not change our estimates during episode $k$. 
Let $\widehat \pv_{ik}$ be the estimated preference vector of buyer $i$ at the beginning of episode $k$. Then, $\widehat \pv_{ik}$ solves the following optimization problem:
\begin{align}{\hbeta_{ik}} ~=~ \underset{\|\pv\|\le \maxpv}{\arg\min\;} {{\cal L}_{ik}(\pv)}, ~~ i\in[N]\,, \label{eq:beta_estimate}\end{align}
where 
\begin{align}\nonumber
{\cal L}_{ik}(\pv) ~=~ -\frac{1}{\ell_{k-1}}\sum_{t \in \epi_{k-1}}\Big\{&q_{it} \log\big((1-F(\max\{{\b}, r_{it}\}- \<x_t,\pv\>))\big)\\&+(1-q_{it}) \log\big(F(\max\{{\b}, r_{it}\}- \<x_t,\pv\>)\big)\Big\}  \label{eq:L}
\end{align}
is the negative of  the log-likelihood function. {Here,
$\b$  refers to the maximum bids of buyers other than buyer $i$, in period $t${; that is, $\b = \max_{j\ne i} b_{jt}$.}  Then, buyer $i$ {wins} the auction {in period $t$} if and only if $b_{it} > \max\{\b,r_{it}\}$. {Throughout the manuscript, to avoid ties, we  make a simplifying assumption that the submitted bids are distinct.}  {Similarly, we define $\v = \max_{j\ne i} v_{jt}$. }
 } {Note that $F(\max\{{\b}, r_{it}\}- \<x_t,\pv\>)$ is the probability of event $\<x_t,\pv\> + z_{it} \le \max\{{\b}, r_{it}\}$, which is the probability that buyer $i$ does not win the item at time $t$, {upon bidding truthfully}.}\footnote{{Since the noise density $f$ is zero outside the interval $[-\maxn,\maxn]$, we have $F(z) = 0$, for $z\le -\maxn$ and $F(z) = 1$ for $z\ge \maxn$. \new{In addition,} $q_{it}\in\{0,1\}$. In computing the negative log-likelihood function, we adopt the convention {of} $0\times (-\infty) = 0$. }} 
 {The log-likelihood function ${\cal L}_{ik}(\pv)$ is computed after running the auctions in all the periods of episode $\epi_{k-1}$. Therefore,  the firm has access to the required knowledge to compute the log-likelihood function ${\cal L}_{ik}(\pv)$. Specifically, by the time the firm computes   ${\cal L}_{ik}(\pv)$, she has access to the submitted bids of the buyers in periods $t\in \epi_{k-1}$ as well as the reserve prices used in these periods.}

\ngg{Now, one may wonder why CORP does not use a simple mean square error estimator for which the estimate of  preference vector $\beta_i$ at the beginning of episode $k$ is given by   $\underset{\|\pv\|\le \maxpv}{\arg\min\;} \sum_{t \in \epi_{k-1}} (b_{it}- \<x_t,\pv\> )^2$.  This quadratic estimator, unlike the maximum Likelihood estimator used in CORP, 
uses the submitted bids  of buyers directly and as a result, it is rather  vulnerable to the strategic behavior of the buyers. This is so because the mean square error estimators are  sensitive to outliers and  this undesirable property would be an advantage to strategic buyers to manipulate the seller's pricing policy. In light of this observation,  to estimate the  preference vector of any buyer $i$, CORP applies a maximum Likelihood estimator that only uses the outcome of the auctions $q_{it}$, that we refer to as ``censored bids", and submitted  bids of other buyers expect buyer $i$; see the definition of the log-likelihood function in {Equation} (\ref{eq:L}). This makes the estimation procedure of the policy robust to untruthful bidding behavior of the buyers, as  untruthful bidding may not necessarily lead to a different outcome. In addition, due to this feature of the learning policy, the buyers are incentivized to bid truthfully unless they are interested in changing the outcome of the auction at the expense of losing their current utility. Later when we outline the proof of the regret bound of CORP, we further elaborate on this.  
}

After estimating the preference vectors at the beginning of {each} episode $k$, the policy proceeds to {use} its estimation to set reserve prices. In particular, the reserve price in period $t \in \epi_k$, $r_{it}$, solves \begin{align}\label{def:r}
r_{it} ~=~ \underset{y}{\arg\max\;} \big\{y\big(1-F(y-\new{\< x_t,\widehat \pv_{ik}\>})\big)\big\}\,.
\end{align}
Note that by Proposition \ref{prop:opt_reserve}, if $\widehat \pv_{ik} =  \pv_{i}$, then $r_{it} = r_{it}^{\star}$ where $r_{it}^{\star}$ is the optimal reserve price of buyer $i$ in period $t$.

\begin{policy}[t]
\begin{center}
\fbox{ 
\begin{minipage}{0.9\textwidth}
\footnotesize{
\parbox{0.95\columnwidth}{ \vspace*{3mm} \label{alg:determin_boosts}
\textbf{CORP:  A Contextual Robust Pricing}
\begin{itemize}[leftmargin = -0.1em]
\item [] {\underline{Initialization}}:  For any $k \in \mathbbm{Z}^+$, let $ \ell_{k}= 2^{k-1}$ and {$\epi_k =\{\ell_k, \dotsc, \ell_{k+1}-1\}$}. Moreover, we let $r_{i1} = 0$ 
and {$\widehat \pv_{i1} = 0$} for any $i\in[N]$.
\item[] \underline{Updating Preference Vectors}: At the start of each episode $k = 1, 2, \ldots$, i.e, at the beginning of period $t= \ell_{k}$, estimate the preference vectors, denoted by $\{\widehat \pv_{ik}\}_{i\in[N]}$,  as follows 
\begin{align}\label{optimization}
\widehat \pv_{ik} = \underset{\|\pv\|\le \maxpv}{\arg\min\;} {\cal L}_{ik}(\pv), ~~ i\in[N]\,,
\end{align}
where $ {\cal L}_{ik}(\pv)$ is defined in Equation (\ref{eq:L}).
\item[] \underline{Setting Reserves:} In  each  episode  $k = 1, 2, \ldots$, and for any 
period $t$ in this episode, i.e.,  $t\in\epi_k$,  
\begin{itemize}
\item[-] \textit{Exploration Phase:} With probability $\frac{1}{\ell_k}$, choose one of the $N$ buyers uniformly at random and offer him the item at price of $r \sim \text{\sf uniform}(0, B)$, where $\text{\sf uniform}(0, B)$ is the uniform distribution in the range $[0, B]$. For other buyers, set their reserve prices to $\infty$.
\item [-]  \textit{Exploitation Phase:} With probability $1-\frac{1}{\ell_k}$, observe the feature vector $x_t$ and  set {the} reserve of each buyer $i\in [N]$ to 
\[r_{it} = \arg\max_{y} \big\{y\big(1-F(y-\new{\< x_t,\widehat \pv_{ik}\>})\big)\big\}\,.  \]
\end{itemize}
\end{itemize}}
}
\end{minipage}
}
\end{center}  
\vspace{0.1cm}
\caption{CORP Policy}\label{alg:corp}
\end{policy}

We now discuss some of the important features of our policy.

 \begin{itemize}
  \item In each episode $k$, every period $t$ is assigned to exploitation with probability $1-{1}/{\ell_k}$, and is assigned to exploration with probability $1/\ell_k$. In the exploration periods, the firm chooses one of the buyers at random and allocates the item to him if his submitted bid is above a reserve price $r\sim \text{\sf uniform}(0, B)$ where $\text{\sf uniform}(0, B)$ is the uniform distribution in the range $[0, B]$.
  In exploitation periods, the firm exploits her current estimate of the preference vectors to set the reserve prices where  the estimates  are obtained by applying the MLE  method to the outcomes of auctions in episode $k-1$. 

{As stated earlier, in the exploration periods, we randomly choose prices. However, the firm  does not crucially use this randomness to identify the best reserve prices. Recall that to set the reserve prices, the firm uses all the data points in the previous episode, not only the data points in the exploration periods.  The main purpose of setting reserve prices randomly in the exploration periods  is to motivate the buyers to be truthful. Note that the buyer  does not know if in a given period $t$, the prices are set randomly. Thus, if he underbids in such a period, with a positive probability, he loses the opportunity to obtain a positive utility. Considering this and the fact that the buyers discount the future utilities, the randomized reserve prices  incentivize the buyers to bid truthfully.   }

\item{Another important factors that makes the CORP policy robust is its episodic structure  and impatience of buyers.
In the CORP policy, submitted bids in episode $k$ are not used in setting reserve prices until the beginning of the episode $(k+1)$. Therefore, there is always a delay until  buyers observe the effect of a bid on their reserves. Then, since buyers are impatient and maximize their discounted cumulative utility, they have less incentive to bid untruthfully. 
This is a salient property of the CORP policy that bounds the perpetual effect of each bid and, as we will see in the analysis, leads to robustness of the learning policy to \new{the strategic} behavior of buyers. 
}
\end{itemize}

\subsection{{Regret Bound of the CORP Policy}}

We now state our main result on the regret of the CORP policy.
\ngg{\begin{thm}[Regret Bound: Known  Noise Distribution]\label{thm:main}
{Suppose that Assumption \ref{assump:logcancavity} holds} and the firm knows the market noise distribution {$F$}. Then,   
the  T-period worst-case regret of the CORP policy  is \ngg{at most $O\left(Nd \left(\log(Td)\log(T) +\frac{\log^2(T)}{\log^2(1/\gamma)}\right)\right)$,} 
 where the regret is computed against the benchmark, defined in Proposition \ref{prop:opt_reserve}.
\end{thm}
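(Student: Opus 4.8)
The plan is to decompose the regret episode-by-episode and, within each episode, to separate the exploration periods from the exploitation periods, writing $\reg^{\pi}(T)=\sum_{k}\big(\reg^{\mathrm{explore}}_{k}+\reg^{\mathrm{exploit}}_{k}\big)$. The exploration term is immediate: in episode $k$ the firm explores with probability $1/\ell_k$ and loses at most $O(B)$ revenue per exploration period, so $\E[\reg^{\mathrm{explore}}_{k}]=O(1)$; since there are $O(\log T)$ episodes, the total exploration regret is $O(\log T)$, which is dominated by the claimed bound. The bulk of the work is the exploitation regret, which I would further split into a truthful-exploitation part (the regret one would incur if buyers bid truthfully in exploitation) plus an extra part caused by untruthful bidding.

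For the truthful part the first ingredient is a revenue-sensitivity lemma. Writing $R_{x,\pv}(y)=y\big(1-F(y-\<x,\pv\>)\big)$, Assumption~\ref{assump:logcancavity} guarantees (through the monotone-virtual-value property of log-concave $1-F$) that $R_{x,\pv}$ has a unique maximizer $r^{\star}_i(x)$, that this maximizer is Lipschitz in $\<x,\pv\>$ with bounded derivative, and that $R_{x,\pv}$ has bounded curvature at its peak. Using the decoupling of optimal reserves noted after Proposition~\ref{prop:opt_reserve}, this yields, for each exploitation period $t\in\epi_k$, a second-order bound $\rev^{\star}_t-\rev^{\pi}_t\le C\sum_{i}\E\big[\<x_t,\hbeta_{ik}-\pv_i\>^2\big]\le C\sum_{i}\E\big[\|\hbeta_{ik}-\pv_i\|^2\big]$, i.e.\ the per-period loss is quadratic in the estimation error because we operate at the smooth optimum of the revenue curve.

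The second ingredient is an MLE concentration bound $\E\big[\|\hbeta_{ik}-\pv_i\|^2\big]=O\!\big(d\log(Td)/\ell_{k-1}\big)$ under truthful bidding. Here I would show that the negative log-likelihood $\mathcal{L}_{ik}$ is strongly convex: its Hessian is the empirical Fisher information, lower bounded by combining (i) a positive per-sample information floor from log-concavity of both $F$ and $1-F$, with (ii) a lower bound on the smallest eigenvalue of the empirical feature second-moment matrix, which holds with high probability once $\ell_{k-1}\gtrsim d\log(Td)$ by matrix concentration and is aided by the random-reserve exploration guaranteeing coverage of the relevant price range. Standard $M$-estimation/self-normalized arguments then give the $d\log(Td)/\ell_{k-1}$ rate (the $\log(Td)$ absorbing tail control and a union bound over the $N$ buyers and $O(\log T)$ episodes). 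Combining with the sensitivity lemma, each episode contributes $\ell_k\cdot O(d\log(Td)/\ell_{k-1})=O(d\log(Td))$ per buyer (using $\ell_k=2\ell_{k-1}$), and summing over $O(\log T)$ episodes and $N$ buyers produces the first term $O\big(Nd\log(Td)\log T\big)$.

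The final and hardest step is bounding the extra regret from strategic deviation, where the $\gamma$-dependent term is produced. The robustness rests on three features of CORP: only censored outcomes $q_{it}$ (not bids) enter $\mathcal{L}_{ik}$; episode $k$'s data is not used until episode $k+1$, a delay of at least the remaining length of episode $k$; and random exploration prices penalize underbidding. I would formalize an incentive argument: a deviation perturbs $\hbeta_{i,k+1}$ only if it flips an outcome $q_{it}$, and by the $O(1/\ell_k)$ sensitivity of the MLE minimizer to a single corrupted observation (the same strong-convexity constant as above), a buyer who flips $m$ outcomes in episode $k$ lowers his discounted future payments by at most a geometrically discounted amount that must exceed the immediate utility he forgoes. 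Because any episode-$k$ deviation is discounted by at least $\gamma^{\Theta(\ell_k)}$ while its benefit grows only polynomially in $\ell_k$, profitable deviations are confined to early episodes; carrying this bookkeeping through, the number of profitably flipped outcomes and their quadratic propagation into the subsequent estimation error sum to $O\big(Nd\log^2 T/\log^2(1/\gamma)\big)$, the two factors of $1/\log(1/\gamma)$ arising from geometric sums over the discounted deviation horizon. I expect this joint incentive-and-robustness accounting — simultaneously controlling how much a utility-maximizing buyer will distort the data and how that distortion feeds through the MLE into revenue — to be the main obstacle, since it is precisely where the estimation analysis and the game-theoretic analysis interact most tightly.
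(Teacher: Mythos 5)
Your overall architecture — exploration regret of $O(\log T)$, a second-order sensitivity bound at the optimal reserve turning per-period exploitation regret into $\sum_i\E[\<x_t,\hbeta_{ik}-\pv_i\>^2]$, an MLE strong-convexity/concentration bound, and an incentive argument bounding how many outcomes a rational buyer will flip — is essentially the paper's proof (Lemmas~\ref{techlem1} and~\ref{techlem2}, Proposition~\ref{propo:learning}, Proposition~\ref{propo:lies}). But your accounting of the strategic part has a genuine gap: you let untruthful bidding enter the regret \emph{only} through the channel ``flipped outcomes corrupt $\hbeta_{i,k+1}$ and hence future reserves.'' Untruthful bids also damage revenue \emph{in the current period}, with no mediation through estimation: the benchmark revenue $\rev^\star_t$ is computed from valuations ($v_t^-$, $v_t^+$), while the policy collects $\max\{b_t^-,r_t^+\}\ind(b_t^+\ge r_t^+)$ computed from bids. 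Shading by losing buyers depresses $b_t^-$ below $v_t^-$, and overbidding can change the identity of the winner and hence lower the reserve applied to him. This is the term $\Delta_{2,t}$ in~\eqref{Delta-2t}, and controlling it requires Lemma~\ref{techlem3} together with the bounds~\eqref{claim:shades} and~\eqref{claim:overbids} on the \emph{cumulative amounts} of shading by losers and overbidding by winners — bounds on amounts, not just on counts of flipped outcomes, since a distortion that flips no outcome (and so is not a ``lie'') still lowers $b_t^-$. Nor can you dismiss this effect by arguing exploitation bids are payoff-irrelevant: CORP's MLE uses the outcomes of \emph{all} periods of the previous episode, including exploitation periods, so buyers retain an incentive to distort there (this is exactly why the paper's CORP analysis needs $\Delta_{2,t}$, whereas for CORP-II/SCORP, which learn only from pure-exploration periods, it is superfluous).

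A second, logical flaw: your claim that ``any episode-$k$ deviation is discounted by at least $\gamma^{\Theta(\ell_k)}$ while its benefit grows only polynomially in $\ell_k$, so profitable deviations are confined to early episodes'' does not hold, because the \emph{cost} of an episode-$k$ deviation is incurred at times $t\ge \ell_k$ and is therefore discounted by the same $\gamma^{\Theta(\ell_k)}$; in the incentive inequality the large discount factors cancel, and what survives is a comparison of polynomial prefactors, yielding $|\Lie_{ik}| = O\big((\log \ell_{k-1} + \log N)/\log(1/\gamma)\big)$ lies \emph{in every episode} — this is precisely the computation in~\eqref{main-ineq}. Your stated reasoning is also inconsistent with your own conclusion: if deviations really occurred only in finitely many early episodes, the strategic term would be $O(1)$, not $O\big(Nd\log^2(T)/\log^2(1/\gamma)\big)$; the correct mechanism is that the per-episode lie count carries one factor $1/\log(1/\gamma)$ and enters the estimation error \emph{squared}, and the resulting $\ell_k\cdot d\,|\Lie_{ik}|^2/\ell_{k-1}^2$ terms sum geometrically over episodes. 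Finally, your incentive bookkeeping needs the paper's additional step for ``small'' deviations: a lie whose shading/overbidding amount is below $1/\ell_{k-1}$ incurs essentially no exploration penalty, so it cannot be ruled out by incentives; the paper bounds such lies separately (they occur only when the valuation lands within a window of width $1/\ell_{k-1}$ of the decision threshold, hence with probability $O(1/\ell_{k-1})$ each, and a multiplicative Azuma bound gives $O(\log \ell_{k-1})$ of them with high probability).
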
}

{The regret bound of CORP presented in Theorem \ref{thm:main} consists of two terms $O(Nd \log(Td)\log(T)$ and $O\left(Nd \frac{\log^2(T)}{\log^2(1/\gamma)}\right)$. The first term of the regret bound  is due to the estimation error in preference vectors; that is, this term exists  even if buyers were not strategic. The second term is due to the strategic behavior of the buyers. Observe that the second term decreases as buyers get less patient; i.e., $\gamma$ gets smaller.}

The proof of Theorem~\ref{thm:main} is proved in Appendix~\ref{proof:thm-main}. To bound the regret of our policy, 
we note that untruthful bidding  has two  undesirable effects for the firm.  First of all, {both overbidding and shading}  increase the estimation errors of the preference vectors and thus, consequently, introduce errors in the reserve prices set by the firm. 
{Second of all, bidding untruthfully can lower the second highest submitted bid as well as the reserve price of the winner, and  this reduces the firm's revenue.}   
By this observation, we divide the policy's regret into two parts, where each part captures 
 the negative consequences of  one of the aforementioned  effects.
 
  \ngg{To bound the regret associated with the first effect, in Proposition~\ref{propo:learning}, we  determine to which  extent  buyers' ``lies" impact the estimation errors of the preference vectors and the reserve prices.  We say a buyer lies when his untruthful bid changes the outcome of the auction{, $q_{it}$, for this buyer} relative to the truthful bidding. That is, we say buyer $i$  lies in period $t$ if
 $\ind(v_{it} > \max\{{\b}, r_{it}\})\neq\ind(b_{it} > \max\{{\b}, r_{it}\})$ holds.
Our notion of lies is inspired by our maximum Likelihood estimator  
  used in CORP. To estimate the  preference vector of any buyer $i$, the  maximum Likelihood estimator, defined in (\ref{eq:L}), only uses the outcome of the auctions and submitted  bids of other buyers expect buyer $i$. Thus, untruthful bidding can change our estimate of preference vector of buyer $i$ and consequently his reserve price  only  when he ``lies". 
  }

  { Having established the impact of lies, we then show that 
 the number of times that a   buyer lies in each episode is logarithmic in the length of the episode; see Proposition~\ref{propo:lies}.
This bound is derived using the property of our estimation method. As stated earlier, to influence the estimation of  the preference vectors and reserve prices,  buyers need to change the outcome of the auctions where such a change is costly for them as it can lead to utility loss. We then make use of the fact that 
buyers are utility-maximizer and discount the future to bound the number of lies.} 
{Precisely, to get this bound, we compare the long-term excess utility obtained from a lie with the instant utility loss that it causes.}  {In particular, we derive a lower bound on the utility loss  of the buyers in episode $k$  by focusing on \new{the} random exploration periods.  Note that buyers are not aware  whether a period is an exploration period. Thus, with a positive probability, any untruthful bid leads to a utility loss. We further derive an upper bound on the (future) utility gain of the untruthful bidding in episode $k$. Our upper bound is the total discounted utility that any buyer can hope to achieve in the next episodes. Thus, the bound includes potential future utility gains that a buyer can enjoy by manipulating other buyers' strategy and their reserve prices. Then, by arguing that for any \new{utility-maximizer} buyer,  the upper bound  on the utility gain should be greater than or equal to the lower bound on the utility loss, we bound the number of lies of the buyer.  By characterizing  the impact of lies in Proposition \ref{propo:learning} and bounding the number of lies in Proposition \ref{propo:lies}, we are able to bound the regret associated with the first effect, {namely the gap between the posted reserves and the optimal ones.}     }

  To bound the regret associated with the second effect, we  quantify the impact of {bidding untruthfully on the second highest bids and reserve of the winner}; see Lemma \ref{techlem3}. {When buyers bid untruthfully, the second highest bid may decrease. Further, the winner of the auction can change and this, in turn, can lower the reserve price of the winner.} Any of these events will hurt the firm's revenue. {To quantify this impact, we
  upper bound the amount of underbidding and overbidding} from each buyer using the fact that the buyer is utility-maximizer; see Proposition \ref{propo:lies}. {To do so,} we employ a similar argument that we used to bound the number of lies.

{After characterizing the regret due to both effects, we bound the total regret during each episode, and show that} the total regret  is logarithmic in the length of the  episode. The proof is completed by noting that there are $O(\log T)$ {episodes} up to time $T$, as the length of episodes doubles each time.

%======================================================
{
\section{Knowledge of Market Noise Distribution}\label{sec:unknownF}

In the CORP policy, we assumed that the market noise distribution is known to the seller or can be estimated sufficiently well from side data.  However, we do not always have this commodity in practice. Ideally, we prefer a pricing policy that uses such knowledge minimally, if at all. To relax this assumption, we consider setting where the market noise distribution is not fully known, but is believed to belong to a known ensemble of distributions:  
\begin{itemize}
\item \emph{\CRR{Unknown (Fixed) Distribution from a Known Location-scale Ensemble:}} 
Suppose that the unknown market noise distribution belongs to a known location-scale family of log-concave distributions. To recall, a log-scale family of distribution is a one that for a variable $Y$ belonging to this family, the distribution function of random variable $aY+b$ also belongs to this family.
Some examples include Normal, Uniform, Exponential, Logistic, and Extreme value distribution, to name a few.
We propose a policy that achieves a regret of $O\left(Nd \log(Td) \sqrt{T} + N^2 d \Big(1+\frac{1}{\log^2(1/\gamma)}\Big) \log^3(T) \right)$. (In Appendix \ref{sec:lowerB}, we further show that even when buyers are not strategic,  no other policy can have a regret better than $\Omega(\sqrt{T})$.) To design CORP-II,  we adopt the MLE estimator in CORP to also estimate the parameters of the noise distribution as well as the preference vectors, and in this sense, we follow the path pursued in~\citep{javanmard2016dynamic} for the case of the single, non-strategic buyer. We refer to Section~\ref{sec:noise-parametric} for the details.   

\item \emph{Unknown (Time-varying) Distribution from a Given Ambiguity Set:} This is a more general setting where the noise distribution is unknown and \emph{time-varying}, but belongs to a given ambiguity set of the log-concave distribution. 
 In Section~\ref{sec:rcorp}, we propose the so-called SCORP policy whose regret is of order $O\Big( N\sqrt{d  \log(Td)}\; T^{2/3} +\ngg{ \frac{N}{\log(1/\gamma)} \log(T)\; T^{1/3}} \Big) $.   

\end{itemize}

\subsection{\CRR{Unknown Distribution from a Known Location-Scale Ensemble}}\label{sec:noise-parametric}
 \CRR{Consider a location-scale class of distributions $\cF$:
\begin{align}\label{eq:Fset}
 \cF\equiv \Big\{F_{m,\sigma}:\, m\in \reals,\, \sigma \in [\underline{\sigma}, \bar{\sigma}],\, \underline{\sigma}>0, \, F_{m,\sigma}(x) = F((x-m)/\sigma)\Big \}\,,   
\end{align}
where $F$ is a known log-concave distribution with mean zero and variance one.} 
We assume that the noise  distribution is $F_{m,\sigma}\in \cF$, with \emph{unknown} parameters $m$, $\sigma$, where $m$ and $\sigma^2$ respectively corresponds to the mean and the variance of the noise distribution. Without loss of generality, we can assume that $m =0$; otherwise, the mean of the noise term can be absorbed in the features as an intercept term.   \ngg{As an example, the set $\cF$ can be class of uniform distribution with the support of $[-a,a]$ where $a\in [\underline a, \bar a]$ is unknown and $\underline a, \bar a> 0$. As another example, the set $\cF$ can be a class of (truncated) normal distributions with unknown standard deviation from an interval.}

We let $\oscale \equiv 1/\sigma$ and in the model~\eqref{eq:val}, we multiply both sides with $\oscale$. This leads to 
\begin{align}\label{eq:val2}
\tilde{v}_{it} = \<x_t, \theta_i\> + \tilde{z}_{it}, \quad i\in [N], \;\; t\ge 1\,,
\end{align}
where $\tilde{v}_{it} = \oscale v_{it}$, $\theta_{i} = \oscale \beta_{i}$ and $\tilde{z}_{it} = \oscale z_{it}$. Notably, distribution of $\tilde{z}_{it}$ is  $F$.  The valuation model~\eqref{eq:val2} is similar to our previous model where the market noises were drawn from a  known distribution $F$. 
For this setting, we propose a pricing policy, named CORP-II which is very similar to the CORP policy: It has an episodic structure, where the length of episodes grows {exponentially}, namely episode $k$ is of length $\ell_k = 2^{k-1}$.  The first $\lceil \sqrt{\ell_k}\rceil$ periods of episode $k$ are the pure exploration periods. In each of these period, a buyer is chosen in a round robin fashion (call him $i^\circ_t$) and offer him the item at price of $r_t\sim {{\sf uniform}}(0,B)$. For other buyers, we set their reserves to $\infty$. In the remaining periods of the episode, we set the reserve prices based on the current estimates of the preference vectors and the scale parameter $\alpha_0$. 
Specifically, we let $I_k$ be the set of pure exploration periods in episode $k$ (so $|I_k| = \lceil \sqrt{\ell_k}\rceil$) and form the negative log-likelihood function to estimate both $\beta_i$ and $\alpha_0$ using the outcomes of auctions in $I_k$: 
 \begin{align}
\tilde{{\cal L}}_{ik}(\theta,\alpha) ~=~ -\Big\lceil\frac{N}{|I_k|}\Big\rceil \sum_{{\{t \in I_k, i^\circ_t = i\}}}\Big\{&q_{it} \log\left(1- F(\alpha r_t- \<x_t,\theta\>)\right) \nonumber\\
&+(1-q_{it}) \log\left( F(\alpha r_t- \<x_t,\theta\>)\right)\Big\}\,.  \label{eq:L-F}
\end{align} 
The function $\tilde{\cal L}_{ik}$ is indeed the negative log-likelihood for allocation variables $q_{it}$ for $t\in I_k$, conditional on the feature vectors $x_t$, and the events $i^\circ_t = i$, $r_{it} = r_t$, assuming that buyer $i$ is truthful. The term $\lceil N/|I_k| \rceil$ comes from the fact that the buyer $i^\circ_t$ is chosen in a round robin fashion and so for a fixed $i$, there are $1/N$ fraction of the periods in $I_k$ contributing to the log-likelihood function $\tilde{\cal L}_{ik}$.

Let $(\widehat \theta_{ik}, \widehat \alpha_{0k})$ solves the following optimization problem: \begin{align}(\widehat \theta_{ik},\widehat \alpha_{0k}) ~=~ \underset{\|(\theta,\alpha)\|\le \tilde \maxpv}{\arg\min\;} {\tilde{{\cal L}}_{ik}(\theta, \scale)}, ~~ i\in[N]\,, \label{eq:beta_estimateF}\end{align}
with $\tilde\maxpv = \maxpv + 1/\underline{\sigma}$. 

\begin{figure}[]
{{\centering
\includegraphics[scale = 0.6]{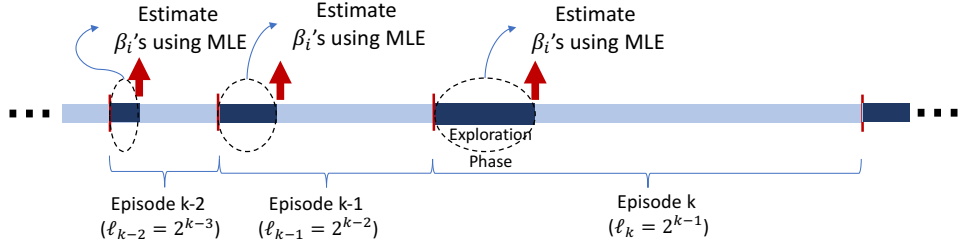}
 \caption{Schematic representation of the CORP-II policy. CORP-II has an episodic structure and updates its estimates of buyers' preference vectors at the beginning of each episode. Each episode $k$ starts with a pure exploration phase $I_k$ of length $\lceil\ell_k^{1/2} \rceil$. CORP-II  estimates preference vectors $\beta_i$ using MLE applied to the outcomes of auctions run in the previous pure exploration phase.  }\label{fig:corp-II}
} }
\end{figure}

In the exploitation phase, 
the reserve of each buyer $i\in[N]$ is set to
\begin{align}\label{eq:reserve}
r_{it} = \arg\max_y \,\{y(1-F(\widehat\alpha_{0k} y - \<x_t, \widehat\theta_{ik}\> )\}\,.
\end{align}
The formal description of CORP-II policy is given in Table~\ref{alg:corp-II}. We also provide a schematic representation of CORP-II in Figure~\ref{fig:corp-II}.
CORP-II has a very similar structure to CORP. The main difference is that unlike CORP, we have some forced (pure) exploration  periods  at the beginning of each episode at which we experiment with random prices to learn both the preference vectors and the parameters of the noise distribution. The force exploration periods  are required because  in the current setting the market noise distribution is not fully known; see lower bound  on regret in Appendix \ref{sec:lowerB}.
By comparison, in CORP, the market noise distribution $F$ is fully known to the seller and  we have much fewer of exploration periods (recall that in CORP, in each period of episode $k$, we do exploration with probability $1/\ell_k$ and so  in expectation, we have only one exploration period in each episode). 

%=====================================================================
\begin{policy}[t]
\begin{center}
\fbox{ 
\begin{minipage}{0.9\textwidth}
\footnotesize{
\parbox{0.95\columnwidth}{ \vspace*{3mm} \label{alg:determin_boosts}
\ngg{
\textbf{CORP-II:  A Contextual Robust Pricing }
\begin{itemize}[leftmargin = -0.1em]
\item [] {\underline{Initialization}}:  For any $k \in \mathbbm{Z}^+$, let $ \ell_{k}= 2^{k-1}$,  {$\epi_k =\{\ell_k, \dotsc, \ell_{k+1}-1\}$, and $\initial_k =\{\ell_k, \dotsc, \ell_k+ \lceil \sqrt{\ell_k}\rceil \}$.}  Moreover, we let $r_{i1} = 0$  
and {$\widehat \pv_{i1} = 0$} for any $i\in[N]$.
\item[] For $k=1, 2, \dotsc$,  do the following steps:
\item[] \underline{Pure Exploration Phase}: For $t\in I_k$, 
a buyer is chosen in a round robin fashion (call him $i^\circ_t$) and offer him the item at price of $r_t\sim {{\sf uniform}}(0,B)$. For other buyers, we set their reserves to $\infty$.
\item[] \underline{Updating Estimates}: At the end of the exploration phase, update the estimate of the preference vectors by applying {MLE} to the previous pure exploration phase:
\begin{align}\label{optimization2-F}
(\widehat \theta_{ik}, \widehat \alpha_{0k}) = \underset{\|(\theta,\alpha)\|\le \tilde\maxpv}{\arg\min\;} {\tilde{\cal L}}_{ik}(\theta,\alpha), ~~ i\in[N]\,,
\end{align}
where $ \tilde {\cal L}_{ik}(\theta,\alpha)$ is given by Equation (\ref{eq:L-F}).
\item [] \underline{Exploitation Phase:} For other $t\in E_k$,  
 observe the feature vector $x_t$ and  set {the} reserve of each buyer $i\in [N]$ to 
 \begin{align}\label{def:r-WC-F}
r_{it} = \underset{y}{\arg\max}\, \left\{y\big(1-F(\widehat\alpha_{0k} y- \<x_t, \widehat\theta_{ik}\>)\big)\right\}\,.
\end{align}
\end{itemize}
}}
}
\end{minipage}
}
\end{center}  
\vspace{0.1cm}
\caption{CORP-II Policy}\label{alg:corp-II}
\end{policy}
%======================================================================

Note that by the decoupling property (as stated in Proposition~\ref{prop:opt_reserve}), the benchmark policy can focus on each buyer separately. This leads to the following optimal reserve, which optimizes the revenue obtained from buyer $i$:
\[
r^\star_i(x) = \arg\max_y \,\{y(1-F(\alpha_0 y - \<x,\theta_0\>))\}\,.
\]
Our next theorem characterizes the regret bound of CORP-II policy against a benchmark that knows the buyers' preference vectors $\beta_i$ and the true market noise distribution. 
%%%%
%%%%%
\begin{thm}[\CRR{Regret Bound: Unknown  Noise Distribution from a Location-Scale Ensemble}] \label{thm-F}
Consider the valuation model~\eqref{eq:val} where the market noise $z_{it}$'s are generated from a distribution belonging to $\cF$, defined in~\eqref{eq:Fset}. Suppose that $F$ in the
definition of $\cF$ satisfies  Assumption~\ref{assump:logcancavity}. Then, the T-regret worst-case regret of CORP-II policy is at most 
$$
O\left(Nd \log(Td) \sqrt{T} + N^2 d \Big(1+\frac{1}{\log^2(1/\gamma)}\Big) \log^3(T) \right)\,,
$$ 
where the regret is computed against the clairvoyant that knows the buyers' preference vectors $\beta_i$ and the noise distribution. 
\end{thm}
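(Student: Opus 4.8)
The plan is to mirror the architecture of the proof of Theorem~\ref{thm:main}, isolating the two sources of regret---estimation error and strategic (untruthful) bidding---but now carrying the extra scale parameter $\alpha_0=1/\sigma$ through every step. After the reparametrization~\eqref{eq:val2}, the unknowns are the pairs $(\theta_i,\alpha_0)$, and by the decoupling property of Proposition~\ref{prop:opt_reserve} the benchmark reserve $r^\star_i(x)=\arg\max_y\{y(1-F(\alpha_0 y-\<x,\theta_i\>))\}$ depends only on buyer $i$'s own parameters. I would first split the horizon into the $O(\log T)$ episodes and, within episode $k$, into three buckets: (i) the $\lceil\sqrt{\ell_k}\rceil$ forced pure-exploration periods, each losing at most $B$ in revenue and hence contributing $O(\sqrt{\ell_k})$; (ii) the exploitation periods, whose regret I bound through the estimation error of $(\widehat\theta_{ik},\widehat\alpha_{0k})$; and (iii) a correction for untruthful bids. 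Since $\sum_{k\le O(\log T)}\sqrt{\ell_k}=O(\sqrt T)$, bucket (i) is dominated by the first term of the claimed bound.

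The analytic heart is bucket (ii). For each buyer only the $|I_k|/N=\lceil\sqrt{\ell_k}\rceil/N$ round-robin exploration rounds feed the log-likelihood $\tilde{\cal L}_{ik}$ in~\eqref{eq:L-F}, so I would establish, along the lines of~\cite{javanmard2016dynamic}, a high-probability bound of the form $\|\widehat\theta_{ik}-\theta_i\|^2+(\widehat\alpha_{0k}-\alpha_0)^2=O\big(Nd\log(Td)/\sqrt{\ell_k}\big)$. This rests on two ingredients: a uniform lower bound on the Fisher information (equivalently, strong convexity of $\E[\tilde{\cal L}_{ik}]$ near the truth), which follows from log-concavity of $F$ together with the fact that the uniformly random exploration prices $r_t\sim{\sf uniform}(0,B)$ and the positive-definite feature second moment $\covx$ make the $(d+1)$-dimensional design well conditioned; and standard concentration of the empirical score over $\lceil\sqrt{\ell_k}\rceil/N$ samples, with the $\log(Td)$ factor produced by a union bound over buyers and episodes. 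A second-order Taylor expansion of the per-period revenue around the maximizer $r^\star_{it}$ (whose first-order term vanishes by optimality and whose curvature is bounded away from zero by log-concavity) then converts the parameter error into a quadratic reserve-price error, so each exploitation period costs $O\big(Nd\log(Td)/\sqrt{\ell_k}\big)$ and the whole episode costs $O\big(\sqrt{\ell_k}\,Nd\log(Td)\big)$; summing the geometric series over episodes yields the first term $O\big(Nd\log(Td)\sqrt T\big)$.

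For bucket (iii) I would reuse the ``lie''-counting machinery behind Propositions~\ref{propo:learning} and~\ref{propo:lies}. A buyer can perturb $\widehat\theta_{ik}$ or $\widehat\alpha_{0k}$ only by flipping the censored outcome $q_{it}$ of one of his own exploration rounds (a \emph{lie}); because these rounds are forced and the buyer cannot distinguish an exploration period from an exploitation one, each lie carries a strictly positive expected instantaneous utility loss, while its only benefit is a discounted future gain that the episodic delay pushes at least one episode ahead. Balancing the per-episode instantaneous loss against the geometric future gain bounds the number of lies per buyer in episode $k$ by $O\big(\log(\ell_k)(1+1/\log(1/\gamma))\big)=O\big(k(1+1/\log(1/\gamma))\big)$; feeding this count back through the estimation and revenue bounds (which is where the second factor of $N$ enters, since fewer samples per buyer amplify the effect of each flipped outcome) and combining with the analogue of the direct shading/overbidding effect (lowered second-highest bid and winner's reserve, controlled as in the CORP analysis via utility-maximization), then summing $\sum_{k\le O(\log T)}k^2=O(\log^3 T)$, gives the second term $O\big(N^2 d(1+1/\log^2(1/\gamma))\log^3 T\big)$.

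The main obstacle, and the genuine departure from the CORP analysis, is the joint estimation of the scale $\alpha_0$ together with the direction $\theta_i$ from binary win/lose feedback: I must show that the random reserve prices inject enough independent variation in the threshold $\alpha_0 r_t-\<x_t,\theta\>$ to make the Fisher information of the full $(d+1)$-dimensional parameter uniformly positive definite, and that this conditioning survives both the projection onto $\{\|(\theta,\alpha)\|\le\tilde\maxpv\}$ and the presence of a bounded number of adversarially flipped outcomes. Once this curvature bound is in hand, the quadratic-regret translation and the geometric episode sums are routine.
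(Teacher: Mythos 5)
Your proposal follows essentially the same route as the paper's proof: the same reparametrization carrying the scale parameter, the same episodic decomposition with pure-exploration cost $O(\sqrt{T})$, the same joint MLE analysis whose crux---uniform positive-definiteness of the $(d+1)$-dimensional design induced by the uniformly random reserves---the paper establishes by a Schur-complement computation on the second moment of the augmented feature $[-x_t;\, r_{it}/\sqrt{d}]$, the same second-order Taylor conversion of parameter error into per-period regret, and the same lie-counting bound fed back into the estimation error (this is exactly where the extra factor of $N$ enters, since each buyer contributes only $|I_k|/N$ samples). One cosmetic slip: for the Taylor step you need the curvature of the revenue function bounded \emph{above} (as in Lemma~\ref{techlem1}), not bounded away from zero.

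There is, however, one step in your bucket (iii) that would fail as described. First, a minor point: in CORP-II buyers \emph{can} distinguish exploration from exploitation periods, since the policy is common knowledge and the pure-exploration rounds occupy deterministic positions (the first $\lceil \sqrt{\ell_k}\rceil$ periods of episode $k$, with a round-robin buyer); the expected cost of a lie comes not from indistinguishability but from the uniform randomness of the reserve $r_t$ within an exploration round, which is how Proposition~\ref{propo:lies2} works. Second, and more substantively, you propose to control the direct shading/overbidding effect on revenue (the analogue of $\Delta_{2,t}$) ``as in the CORP analysis via utility-maximization.'' That mechanism has no purchase here: in CORP the quantitative bounds \eqref{claim:shades} and \eqref{claim:overbids} are generated by the per-period exploration probability $1/\ell_k$, which penalizes untruthful bids in \emph{every} period, whereas CORP-II exploitation periods carry no exploration probability at all, so shading there incurs no such instantaneous loss. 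The paper instead dispenses with $\Delta_{2,t}$ entirely: bids submitted in exploitation periods are never used to update the estimates, so they have no effect on the continuation game; combined with strategy-proofness of the second-price auction, truthful bidding is a best response in every exploitation period, and the term vanishes. With that simpler observation replacing your CORP-style control, the rest of your argument goes through and yields the claimed bound.
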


We refer to Appendix~\ref{proof:thm-F} for the proof of Theorem~\ref{thm-F}. Note that the extra regret due of the strategic behavior of the buyer  is in the order of $O\left(\frac{d N^2  \log^3(T)}{ \log^{2}(1/\gamma)} \right)$. Interestingly, the extra regret  scales only poly-logarithmically in $T$. }
%==========================Unknown Distribution =================================
\subsection{Unknown Distribution from a Given Ambiguity Set}\label{sec:rcorp} The CORP policy presented in this paper is assumed to know the {market} noise distribution $F$. This knowledge is used in forming the log-likelihood estimator to {learn} preference vectors $\beta_i$'s and also in setting the reserves for buyers as in~\eqref{def:r}. \ngg{Furthermore, we relaxed this assumption in the previous section by assuming that $F$ is unknown and fixed and belongs to a location–scale family, and for this setting, we presented CORP-II policy.  Nevertheless, in practice, it may very well be the case that distribution $F$ is unknown and time-varying and  as  a result, it cannot be well approximated.} To address this problem, we propose a variant of the CORP policy, called Stable Contextual Robust Pricing (SCORP), which is robust against the lack of knowledge of $F$. Specifically, we consider an \emph{ambiguity} set $\cF$ of possible probability distributions for the {market} noise and propose a policy that works well for every probability distribution in the ambiguity set.

 We make the following assumption on the ambiguity set $\cF$. This assumption is analogous to Assumption~\ref{assump:logcancavity}.

\begin{assumption}[Log-concavity of the Ambiguity Set $\cF$]\label{assump1}
All functions $F\in \cF$ are log-concave. 
\end{assumption}

To be fair, in this case, we compare the regret of our policy against a benchmark policy, called  robust,  that knows the true preference vectors $\beta_i$ and the ambiguity set $\cF$ that includes $F$, but is oblivious {to} distribution $F$ itself. The robust benchmark is defined as follows:

\begin{definition} [Robust Benchmark]\label{propo:benchmark-WC}
In the robust benchmark, the reserve price of buyer {$i\in [N]$} for a feature vector $x\in \cX$ is given by
\begin{align}\label{def:rstar-WC}
r^\star_i(x) = \underset{y}{\arg\max}\,~\underset{F\in \cF}{\min}\, \left\{y\left(1-F(y-\new{\<x, \beta_i\>})\right)\right\}\,, \quad i\in {[N]}, \quad x\in \cX\,,
\end{align}
and thus $r^\star_{it} = r^\star_{i}(x_t)$ in this case.
\end{definition}

The robust benchmark is motivated by our previous benchmark presented in Proposition \ref{prop:opt_reserve}. In our previous benchmark, we show that given a distribution $F$ and context $x$, the revenue-maximizing reserve price for buyer $i$ solves  $r^\star_i(x) = {\arg\max}_y\, \left\{y\left(1-F(y-\new{\<x,\beta_i\>})\right)\right\}$. In the robust benchmark, the posted reserve prices have a similar form. However, the reserve prices which solves the optimization problem (\ref{def:rstar-WC}), are chosen  in a robust way so that the benchmark performs well  despite the uncertainty in the market noise distribution. \ngg{We will discuss the complexity of optimization problem (\ref{def:rstar-WC}) in Section \ref{sec:complexity}.}

We note from the robust benchmark, as well as our learning policy, that we will present shortly  do not aim at learning the distribution of market noise, as this distribution can vary across periods. For instance, in online advertising, the distribution of the noise can depend on many different factors including the time of the day and demographic information of the Internet users. Thus, instead of trying to learn the market noise distribution, we would like to use  reserve prices that are robust to the uncertainty in the noise distribution.

\ngg{We are now ready to present {our SCORP policy.}  This policy is a modified version of the CORP-II policy. For reader's convenience, we also provide a schematic representation of SCORP in Figure~\ref{fig:scorp}.
{Similar to the CORP-II policy,} SCORP has an episodic theme, with the length of episodes growing exponentially. As before, we denote the set of periods in episode $k$ by $\epi_k$, i.e., $\epi_k =\{\ell_k, \dotsc, \ell_{k+1}-1\}$, with $\ell_k = 2^{k-1}$. Each episode $k$ starts with a pure exploration phase of length $\length$. 
{As before, we use notation $\initial_k$ to refer to periods in the pure exploration phase of episode $k$, i.e., $\initial_k =\{\ell_k, \dotsc, \ell_k+ \length \}$.} 
During each period in {$\initial_k$}, we choose one of the $N$ buyers uniformly at random and 
offer him the item at price of $r\sim{\sf uniform}(0,B)$. For other buyers, we set their reserve prices to $\infty$. Observe that in SCORP, because distribution $F$ is time-varying,   we dedicate more periods to pure exploration than CORP-II.    In the remaining periods of the episode (i.e., $\epi_k\backslash\initial_k$), we offer the reserve prices based on the current estimates of the preference vectors which are obtained by applying the least-square estimator to the outcomes of auctions in the pure exploration phase, $\initial_k$; see Equations~\eqref{optimization2} and \eqref{def:r-WC}. {This is the exploitation phase as we set reserves based on our best guess of the preference vectors.}}
 
 \CRR{We note that the SCORP policy follows a similar structure to the LEAP policy proposed in Amin et al. (2014). Assuming that the time horizon $T$ is known, the LEAP policy designates the first $O(T^{2/3})$ periods to pure exploration and the remaining periods to exploitation. Using the doubling trick, the LEAP policy is extended to the setting with an unknown time horizon, which then admits a similar structure to SCORP. In comparison between SCORP and LEAP, it is worth highlighting a few points: 1) SCORP generalizes LEAP to the case of multi-buyers. 2) LEAP considers a setting with noiseless valuations while SCORP allows for noise component in the valuation function. As a consequence, the pricing functions differ in the two policies. SCORP uses \eqref{def:r-WC}, which involves a min-max optimization over the ambiguity set $\cF$ while in LEAP, the prices in the exploitation periods are set as $p_t = \<x,\hth\>-\eps_t$, for an appropriate choice of $\eps_t$.
3) More on the technical part, SCORP updates its estimate on the preference vectors only at the end of each exploration phase while LEAP update its estimate at each period by taking a gradient step on the prediction loss.}

\begin{figure}[]
{{\centering
\includegraphics[scale = 0.6]{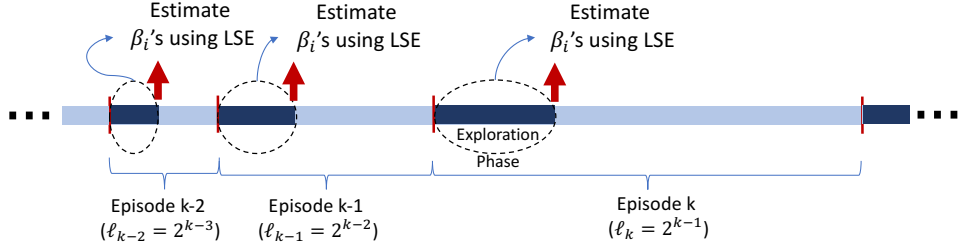}
 \caption{Schematic representation of the SCORP policy. SCORP has an episodic structure and updates its estimates of buyers' preference vectors at the beginning of each episode. Each episode $k$ starts with a pure exploration phase $I_k$ of length $\lceil\ell_k^{2/3} \rceil$. SCORP  estimates preference vectors $\beta_i$ using LSE applied to the outcomes of auctions run in the previous pure exploration phase.  }\label{fig:scorp}
} }
\end{figure}

The formal description of SCORP is given in {Table \ref{alg:scorp}}.
%\medskip

\begin{policy}[t]
\footnotesize{
\begin{center}
\fbox{
\begin{minipage}[htb]{0.9\textwidth}
\parbox{0.97\columnwidth}{ \vspace*{3mm}
\textbf{SCORP: Stable Contextual Robust Pricing Policy }
\begin{itemize}[leftmargin=-0.1em]
\item [] {\underline{Initialization}}:  For any $k \in \mathbbm{Z}^+$, let $ \ell_{k}= 2^{k-1}$,  {$\epi_k =\{\ell_k, \dotsc, \ell_{k+1}-1\}$, and $\initial_k =\{\ell_k, \dotsc, \ell_k+ \length \}$.} Moreover, we let {$r_{i1} = 0$} and $\new{\widehat \pv_{i1}} = 0$ for any $i\in[N]$.
\item[] For $k=1, 2, \dotsc$,  do the following steps:
\item[] \underline{Pure Exploration Phase}: For $t\in \initial_k$, choose one of the $N$ buyers uniformly at random and offer him the item at price of $r \sim \text{\sf uniform}(0, B)$. For other buyers, set their reserve prices to $\infty$.
\item[] \underline{Updating Estimates}: At the end of the exploration phase, update the estimate of the preference vectors by applying {the least-square estimator} to the previous pure exploration phase:
\begin{align}\label{optimization2}
\widehat \pv_{ik} ~=~ \arg\min_{\|\pv\|\le \maxpv} \tilde{\cal L}_{ik}(\pv), ~~ i\in[N]\,,
\end{align}
where $ \tilde {\cal L}_{ik}(\pv)$ is given by 
\begin{align}\label{eq:L_2}
\tilde \cL_{ik}(\beta) ~=~ \frac{1}{|\initial_k|} \sum_{t \in \initial_k}  (BNq_{it} - \<x_t,\beta\>)^2\,.
\end{align}
\item [] \underline{Exploitation Phase:} For $t\in \epi_k\backslash\initial_k$,  
 observe the feature vector $x_t$ and  set {the} reserve of each buyer $i\in [N]$ to 
\begin{align}\label{def:r-WC}
r_{it} = \underset{y}{\arg\max}\,~\underset{F\in \cF}{\min}\, \left\{y\big(1-F(y-\new{\<x_t, \hbeta_{ik}\>})\big)\right\}\,.
\end{align}
%\end{itemize}
\end{itemize}
}%\hspace{-0.5em}
\end{minipage} 
}
\end{center}}
\vspace{0.1cm}
\caption{SCORP Policy}  \label{alg:scorp}
\end{policy}

Having presented our policy, we now highlight few important remarks about the estimation process of the policy. \ngg{(i) Since the noise distribution is unknown and time-varying, SCORP\; employs the least-square estimator (LSE) rather than the maximum Likelihood method, used in CORP and CORP-II; compare  Equations (\ref{eq:L}) and (\ref{eq:L-F}) with (\ref{eq:L_2}). To apply the  least-square estimator, similar to the CORP and CORP-II policies, SCORP\;uses the outcome of the auctions, \emph{not} the submitted bids. In particular, SCORP minimizes the loss function $\tilde {\cal L}_{ik}(\pv)$ where the loss function  is designed in a way to provide an unbiased estimator of the preference vectors under the truthful bidding strategy. To see why note that to estimate  the preference vectors, we only use the outcome of the auctions, $q_{it}$, in the exploration periods $I_k$ where in exploration periods, the prices are chosen uniformly at random in $[0, B]$. Then, 
when buyers are truthful, the expectation of $BNq_{it}$ w.r.t. the randomness in prices and the noise in the valuations is $\E[BN \frac{1}{N} \Pr[v_t\ge r_{it}] | v_t]= \E[v_t]  = \<x_t,\beta\>$. This implies that under the truthful bidding strategy, the expectation of $ BNq_{it}- \<x_t,\beta\>$ for any exploration period  $t\in I_k$ is zero. Put differently,  even under truthful bidding,  the expectation of $ BNq_{it}- \<x_t,\beta\>$ for exploitation periods $t\notin I_k$ is not zero and as a result, we only use the data in the exploration periods to estimate the preference vectors, and this, in turn, enforces the SCORP policy to dedicated $O(T^{2/3})$ periods to exploration.  
 } 
This makes SCORP robust to the strategic behavior of the buyers.  (ii) Due to uncertainty in the noise distribution, for estimation, SCORP only utilizes the auction outcomes in the exploration phase where it does price experimentation. 
This is in contrast to CORP policy where all the auction outcomes in the previous episode are used to estimate the preference vectors. {It is worth noting that in our analysis of \new{the} regret, we give up on the revenue collected during pure exploration phases and only use the outcomes of auctions in these phases to bound the estimation error of the preference vectors. }   
 (iii) So far, we argued SCORP\;is designed in a way to ensure robustness against strategic buyers. Importantly, we also note that the choice of reserve prices in the exploitation phase of SCORP makes this policy robust against the uncertainty in the noise distribution. Thus, SCORP\; is indeed {doubly robust.}

Our next result upper bounds the regret of the SCORP\, policy.
%\end{assumption}

\ngg{\begin{thm}[Regret Bound: Unknown Noise Distribution from an Ambiguity Set ]\label{thm:main2}
{Suppose that Assumption \ref{assump1} holds}, and that  the market noise distribution is unknown and belongs to uncertainty set $\cF$. Then,   
%Suppose that the firm uses a repeated second price auction with reserve prices given by the CORP policy. Then, 
the  T-period worst-case regret of the SCORP\, policy  %against $N$ strategic buyers 
%with discount factor $0<\gamma<1$
 is at most \[O\left( N\sqrt{d  \log(Td)}\; T^{2/3} +\ngg{ \frac{N}{\log(1/\gamma)} \log(T)\; T^{1/3}} \right) = O(N \sqrt{d  \log(Td)}\; T^{2/3})\,, \]
  where the regret is computed against the robust benchmark. 
\end{thm}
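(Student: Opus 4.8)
The plan is to decompose the worst-case regret of SCORP into three pieces and bound each separately: (i) the revenue forgone during the pure-exploration phases $\initial_k$, (ii) the exploitation loss in $\epi_k\backslash\initial_k$ caused by the estimation error $\|\hbeta_{ik}-\beta_i\|$ under the idealization that buyers bid truthfully, and (iii) the extra loss created by strategic bidding. Since one exploration period loses at most $B$ in revenue and $|\initial_k|=\length$, part (i) contributes $\sum_{k:\,\ell_k\le T}O(\ell_k^{2/3})=O(T^{2/3})$ by a geometric-series argument (dominated by the last episode), which is already absorbed by the claimed leading term. The substance is in (ii) and (iii).

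For (ii) I would first control the estimator~\eqref{optimization2}--\eqref{eq:L_2}. Under truthful bidding the identity $\E[BNq_{it}\mid x_t]=\<x_t,\beta_i\>$ (the random choice of buyer and the ${\sf uniform}(0,B)$ price cancel against the $BN$ scaling) makes $\tilde\cL_{ik}$ the empirical risk of an unbiased linear regression whose mean-zero ``noise'' $BNq_{it}-\<x_t,\beta_i\>$ is bounded by $O(BN)$. Combining a matrix-concentration lower bound $\lambda_{\min}\big(\tfrac1{|\initial_k|}\sum_{t\in\initial_k}x_tx_t^\sT\big)\gtrsim\lambda_{\min}(\covx)$ (valid once $|\initial_k|\gtrsim d\log d$; the finitely many earlier episodes contribute only lower-order $\mathrm{poly}(d)$ regret) with a vector concentration bound on $\sum_{t\in\initial_k}x_t(BNq_{it}-\<x_t,\beta_i\>)$ yields $\|\hbeta_{ik}-\beta_i\|=O\big(N\sqrt{d\log(Td)/|\initial_k|}\big)$ with high probability, where the factor $N$ comes from the $BN$ range of the regression target.

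The crucial structural step is to show the per-period exploitation regret is \emph{first order} in the estimation error, $O(\|\hbeta_{ik}-\beta_i\|)$, rather than second order as in the known-distribution analysis of CORP. Writing $s=\<x_t,\beta_i\>$ and $\hat s=\<x_t,\hbeta_{ik}\>$, the robust benchmark posts $r^\star_i(x_t)=\arg\max_y\min_{F\in\cF}y(1-F(y-s))$ while SCORP posts the same rule~\eqref{def:r-WC} evaluated at $\hat s$; but the realized revenue is taken under the \emph{actual}, time-varying noise $F_t\in\cF$, for which neither reserve is a pointwise maximizer. Hence the envelope cancellation that gives a quadratic bound in the smooth known-$F$ case is unavailable, and a mean-value argument only gives per-period loss $O(|r^\star_i(x_t)-r_{it}|)$; since the robust reserve map $s\mapsto\arg\max_y\min_{F\in\cF}y(1-F(y-s))$ is Lipschitz (a consequence of Assumption~\ref{assump1}, as in the non-robust Proposition~\ref{prop:opt_reserve}), this is $O(|s-\hat s|)=O(\|\hbeta_{ik}-\beta_i\|)$. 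Summing over a length-$\ell_k$ episode gives $\ell_k\cdot O(N\sqrt{d\log(Td)/\ell_k^{2/3}})=O(N\sqrt{d\log(Td)}\,\ell_k^{2/3})$, and summing the geometric series over the $O(\log T)$ episodes produces the leading term $O(N\sqrt{d\log(Td)}\,T^{2/3})$. This first-order phenomenon is exactly why SCORP incurs $T^{2/3}$ regret and why $|\initial_k|=\ell_k^{2/3}$ is the right exploration length, balancing parts (i) and (ii).

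Finally, for (iii) I would argue that buyers bid truthfully in every exploitation period: an exploitation outcome never enters the estimator, so an exploitation bid affects no future reserve, and truthful bidding is myopically optimal in a single second-price auction. Thus all strategic behavior is confined to $\initial_k$, where a buyer flips an outcome $q_{it}$ only to bias $\hbeta_{ik}$ and thereby lower his own reserves in $\epi_k\backslash\initial_k$. Mirroring the number-of-lies bound used for CORP (Proposition~\ref{propo:lies}), each such lie incurs a strictly positive \emph{instant} expected utility loss because the price $r\sim{\sf uniform}(0,B)$ is unobserved at bidding time, whereas its benefit is discounted by $\gamma$ and by the delay until the manipulated estimate is used; balancing the two bounds the number of lies per buyer per episode by $O(\log(\ell_k)/\log(1/\gamma))$. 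Each lie shifts $\hbeta_{ik}$ by $O(BN/|\initial_k|)$, so it inflates the first-order per-period exploitation regret by $O\big(\tfrac{N}{|\initial_k|}\cdot\tfrac{\log\ell_k}{\log(1/\gamma)}\big)$; multiplying by $\ell_k$ and summing the geometric series gives the strategic term $O\big(\tfrac{N}{\log(1/\gamma)}\log(T)\,T^{1/3}\big)$. Adding the three parts and noting that $T^{2/3}$ dominates $T^{1/3}$ completes the proof. I expect the number-of-lies bound in step (iii)---making the instant-loss-versus-discounted-future-gain comparison rigorous when the manipulated estimate is consumed within the same episode---to be the main obstacle, with the first-order revenue-loss argument of step (ii) a close second.
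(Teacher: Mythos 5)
Your proposal follows essentially the same route as the paper's own proof: the same three-way decomposition (pure-exploration loss, first-order exploitation loss driven by the least-squares estimation error, and a strategic term confined to the exploration phases), the same unbiasedness identity $\E[BNq_{it}\mid x_t]=\<x_t,\beta_i\>$ combined with matrix/martingale concentration, the same Lipschitz property of the robust reserve map under Assumption~\ref{assump1} (the paper's Lemma~\ref{techlem2-2}), the same observation that the mean-value bound is only \emph{first order} because the robust reserve is not a stationary point of the revenue under the true time-varying $F$ (Equation~\eqref{regret0-2}), and the same instant-loss-versus-discounted-gain bound on lies (Propositions~\ref{propo:learning2} and~\ref{propo:lies2}). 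The only differences are bookkeeping: you fold each lie's $O(BN/|\initial_k|)$ shift directly into the per-period regret, whereas the paper carries $|\Lie_{ik}|$ inside the squared estimation error and takes a square root, and your explicit tracking of the $N$ factor from the $BN$ regression range is if anything more careful than the paper's, which absorbs it into unnamed constants.
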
} 
\ngg{Observe that due to the strategic behavior of the buyers, the firm suffers from an extra regret of  $O\left( \ngg{ \frac{N}{\log(1/\gamma)} \log(T)\; T^{1/3}} \right)$, where this regret shrinks as $\gamma$ decreases.} \ngg{We note that while the regret of the CORP-II policy is in the order of $O(d \log(Td)\sqrt{T})$, the regret of SCORP is  $O(\sqrt{d\log(Td)}\,T^{2/3})$. 
The higher regret of SCORP is mostly due to the fact that distribution $F$ is time-varying, and because of this, SCORP cannot make use of the exploratory effect of the noise. Considering this,  the SCORP policy dedicates more periods to exploration  This implies that  the SCORP policy learns preference vectors at a slower rate than  CORP-II policy. The slower learning rate is the main {reason} behind the higher regret of SCORP.}
% exploration phase of the SCORP policy. As mentioned earlier, the

The proof of Theorem \ref{thm:main2} is provided in Appendix~\ref{proof:thm-main2}. 

\begin{remark} SCORP\,policy provides a very general machinery to design low-regret doubly robust  learning policies, against different benchmarks.\footnote{The firm may care about other objectives apart from her revenue.  For instance, the firm might be interested in maximizing a convex combination of the  welfare and revenue, or due to contracts and deals, she might be willing to prioritize some of the buyers by offering them lower reserve prices.} To make it clear, assume that firm uses a benchmark that posts reserve price of  $r_{i}^*(x)=
\rho(\<x, \beta_i\>, \cF) $  for buyer $i$, under {context vector} $x\in \cX$. Here,  $\rho(\<x, \beta_i\>, \cF) = \arg\max_y \min_{F\in \cF} G(y, \<x, \beta_i\>, F)$, where $G: \reals\times \reals\times \cF\rightarrow \mathbb{R}$. 
 Then, as long as $\rho(\<x, \beta_i\>, \cF)$ is Lipschitz in its first argument, we can design a  low-regret doubly robust  learning policies against this benchmark by only changing the exploitation phase of the SCORP policy. Particularly, in a period $t$  in the exploitation phase of episode $k$,  we set  $r_{it} =\rho(\<x, {\widehat\beta_{ik}}\>, \cF)$; see Equation~\eqref{def:r-WC} for comparison.  \end{remark}
 
 \ngg{\subsubsection{Complexity of SCORP}\label{sec:complexity}
 In SCORP, in each exploitation period, the optimization problem (\ref{def:r-WC}) needs to be solved to set reserve prices. Here, we discuss several cases where this optimization problem is rather easy to solve. We start by presenting an example in which the ambiguity set  consists of uniform distributions. For this example, we provide a simple closed form solution for problem (\ref{def:r-WC}).  We then discuss the uniform distributions are not exceptions in the sense that for many classes of distributions, the optimization problem  of SCORP is indeed easy to solve.} %This is the case because 

\ngg{
 \begin{example}[Uniform Distributions] \label{example:uniform}
Assume that the ambiguity set $\mathcal F$ includes all the uniform distributions with support of form $[-a, a]$ where $a\in [\underline a, \bar a]$. The following theorem presents the optimal solution of problem (\ref{def:r-WC})  for the described $\mathcal F$. 
Before stating the theorem, let us stress again that in the setting of SCORP, the distribution $F\in \cF$ can change over time. So, for this example it means that at step $t$, the market noises are drawn from a uniform distribution $F$, with support $[-a_t,a_t]$, where $a_t\in  [\underline a, \bar a]$, is unknown and time varying. 

\begin{thm}\label{thm:uniform}
Suppose that $\mathcal F$ includes all the uniform distributions with the support of $[-a, a]$ where $a\in [\underline a, \bar a]$. Then, for any $w$, we have 
 \[\underset{y}{\arg\max}\,~\underset{F\in \cF}{\min}\, \left\{y\big(1-F(y-w)\big)\right\} =\left\{ \begin{array}{ll}
 \frac{w+\underline a}{2}&~~ \mbox{if $w\le  \underline a$};\\
        w & ~~\mbox{if $w \in (\underline a, \bar a)$};\\
         \frac{w+\bar  a}{2} & \mbox{if $w\ge  \bar a$}.\end{array} \right. \]
\end{thm}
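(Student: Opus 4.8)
The plan is to eliminate the inner minimization over $\cF$ in closed form, reducing the robust reserve problem to a univariate maximization solvable by elementary calculus. Write $w$ for the fixed mean valuation, and for $a\in[\underline a,\bar a]$ let $F_a$ be the uniform CDF on $[-a,a]$, so that $1-F_a(z)=1$ for $z\le -a$, $1-F_a(z)=(a-z)/(2a)$ for $z\in[-a,a]$, and $1-F_a(z)=0$ for $z\ge a$. The first step is to compute, for each fixed price $y$, the worst-case survival probability $\overline{P}(y-w):=\min_{a\in[\underline a,\bar a]}\{1-F_a(y-w)\}$. Setting $z=y-w$ and differentiating $(a-z)/(2a)=\tfrac12-z/(2a)$ in $a$ gives derivative $z/(2a^2)$, whose sign is that of $z$: for $z>0$ (pricing above the mean) the survival is increasing in $a$, so the adversary picks the narrowest feasible distribution $a=\underline a$ and drives the probability to $0$ once $z>\underline a$; for $z<0$ (pricing below the mean) it is decreasing in $a$, so the adversary picks the widest distribution $a=\bar a$ and the probability saturates at $1$ once $z<-\bar a$.

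This yields the explicit worst-case survival function
\[
\overline{P}(z)=\begin{cases}1 & z\le -\bar a,\\[2pt] \frac{\bar a-z}{2\bar a} & -\bar a\le z\le 0,\\[2pt] \frac{\underline a-z}{2\underline a} & 0\le z\le \underline a,\\[2pt] 0 & z\ge \underline a,\end{cases}
\]
which is continuous (both middle branches equal $\tfrac12$ at $z=0$) and nonincreasing. The robust reserve problem then becomes the scalar maximization $\max_{y\ge 0} y\,\overline{P}(y-w)$, and the key structural observation is that it splits into two branches meeting at $y=w$: on $y\le w$ the objective is $y(\bar a+w-y)/(2\bar a)$, a downward parabola with vertex $\tfrac{w+\bar a}{2}$; on $y\ge w$ it is $y(\underline a+w-y)/(2\underline a)$, a downward parabola with vertex $\tfrac{w+\underline a}{2}$. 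Each branch is exactly the revenue curve of a single uniform distribution, so maximizing it is routine, and by concavity the global maximizer is the highest of the two (constrained) vertices and the junction $y=w$.

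The three cases in the statement correspond to which vertex is interior. One checks that $\tfrac{w+\underline a}{2}$ lies in the upper interval $[w,w+\underline a]$ iff $w\le\underline a$, and that $\tfrac{w+\bar a}{2}$ lies in the lower interval $[w-\bar a,w]$ iff $\bar a\le w$. When $w\le\underline a$ the upper vertex is interior and, by concavity, dominates the common junction value at $y=w$, giving $r=\tfrac{w+\underline a}{2}$; when $\underline a<w<\bar a$ neither vertex is interior, each parabola attains its constrained maximum at the junction, and so $r=w$; when $w\ge\bar a$ the lower vertex is interior and dominates, giving $r=\tfrac{w+\bar a}{2}$. A final comparison of the two branch values confirms the claimed maximizer in each regime. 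The main obstacle is bookkeeping rather than depth: one must track the saturation regions of $\overline P$ (where the survival is pinned at $0$ or $1$) and verify that the unconstrained parabola vertices actually fall inside the sub-intervals on which the corresponding formula for $\overline P$ is valid, equivalently that the optimal price stays within the worst-case support and is not pushed onto a kink of $\overline P$ — which is precisely the feasibility check that must be carried out in the relevant range of $w$.
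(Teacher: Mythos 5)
Your approach is essentially the paper's: compute the inner minimum in closed form (the adversary picks the widest uniform $a=\bar a$ when pricing below the mean and the narrowest $a=\underline a$ when pricing above), then maximize the resulting piecewise objective; your worst-case survival function $\overline{P}$, including its saturation at $1$ and $0$, matches the paper's case analysis. However, there is a genuine gap, and it sits exactly at the step you yourself flag as ``the main obstacle.'' Your claim that $\tfrac{w+\bar a}{2}$ lies in the lower interval $[w-\bar a,w]$ iff $\bar a\le w$ is wrong: the left-endpoint constraint $\tfrac{w+\bar a}{2}\ge w-\bar a$ is equivalent to $w\le 3\bar a$, so interiority requires $\bar a\le w\le 3\bar a$. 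When $w>3\bar a$ the lower vertex falls in the saturation region where $\overline{P}\equiv 1$; there the objective is simply $y$, which increases up to the kink $y=w-\bar a$, while the lower parabola is decreasing on $[w-\bar a,w]$. Hence the lower branch attains its maximum $w-\bar a$ at $y=w-\bar a$, and since $w-\bar a>\max\{\tfrac{w+\bar a}{2},\tfrac{w}{2}\}$ whenever $w>3\bar a$, your case ``$w\ge\bar a$ gives $r=\tfrac{w+\bar a}{2}$'' fails in that regime --- indeed the stated maximizer is not optimal there, so no amount of bookkeeping can rescue the claim as written for all $w\ge\bar a$.

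For what it is worth, the paper's own proof shares this blind spot: its formula $\tfrac{\bar a-(r-w)}{2\bar a}$ for the inner minimum when $r<w$ is only valid for $r\ge w-\bar a$ (it exceeds $1$ otherwise), and Region 3 ($w>\bar a$) is dismissed as ``easy'' without addressing the saturation piece; the stated conclusion is correct only on $\bar a\le w\le 3\bar a$. So your proposal faithfully reproduces both the paper's method and its lacuna. A fully correct treatment must either split the case $w\ge\bar a$ into $\bar a\le w\le 3\bar a$ (vertex interior, maximizer $\tfrac{w+\bar a}{2}$) and $w>3\bar a$ (maximizer $y=w-\bar a$), or restrict the range of $w$ in the statement of the theorem.
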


The proof of the theorem is deferred to the appendix.  
\Halmos \end{example}
 
 In Example \ref{example:uniform}, we observe that under uniform distributions,  the robust optimization problem in (\ref{def:r-WC}) has a simple and easy-to-compute solution.  This stems from the fact that the uniform distributions enjoy a single-crossing property; see Figure \ref{fig:uniform_cdf}. To make it clear, let $F_a(\cdot)$ be the distribution of the uniform distribution in the range of $[-a, a]$ where $a>0$. Then, (i) for any $a$, $F_a(0) = \frac{1}{2}$, (ii) for any $y>0$ and $a> a'$, $F_{a}(y)< F_{a'}(y)$, and (iii) for any $y<0$ and $a> a'$, $F_{a}(y)> F_{a'}(y)$. That is, for any $a, a'$, $F_a(\cdot)$ and $F_{a'}(\cdot)$ cross each other once at $(0, 1/2)$. Having this single-crossing property, the inner optimization problem in  (\ref{def:r-WC}) is easy to solve and as a result, problem  (\ref{def:r-WC}) has a closed form solution. We highlight that uniform distributions are not the only distributions that enjoy such a property. Consider normal distributions with mean zero and variance $\sigma^2$, denoted by $F_{\sigma}(\cdot)$. Then, for any $\sigma, \sigma'$, $F_{\sigma}(\cdot)$ and $F_{\sigma'}(\cdot)$ cross each other once at $(0, 1/2)$. \CRR{In general, the location-scale families (with fixed location parameter) considered in Section \ref{sec:noise-parametric}, namely $\cF\equiv \{F_\sigma:\; \sigma\in[\underline{\sigma}, \bar{\sigma}], \;\; \underline{\sigma} > 0,\; F_\sigma = F(x/\sigma)\}$, satisfy the single-crossing property if $F$ is strictly increasing.}
  
 \begin{figure}[]
{{\centering
\includegraphics[scale = 0.1]{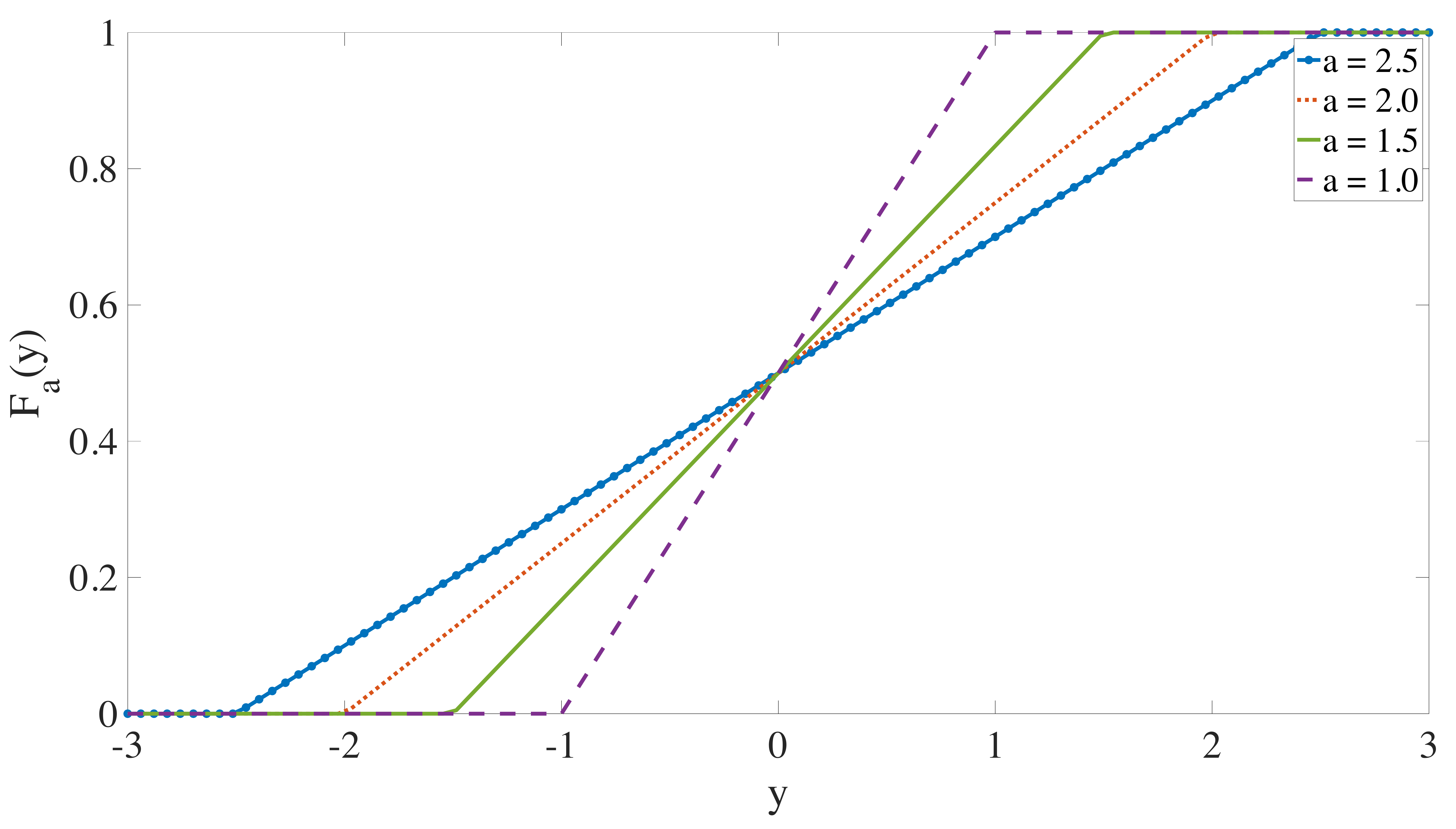}
 \caption{\ngg{Single-crossing property for the uniform distributions.} 
 }\label{fig:uniform_cdf}
} }
\end{figure}}
 %====================================================
 \section{Conclusion}\label{sec:conclude} 
{Motivated by online marketplaces with highly differentiated products, we formulated a dynamic pricing problem in the contextual setting. In this problem, a {firm} runs repeated second-price auctions with reserve and the item to be sold {in each period} is described by a context (feature) vector. 
In our model, contextual information of an item influences buyers' valuations of that item in a heterogeneous way, via \new{buyers'} preference vectors.
Due to the repeated interaction of buyers with the {firm}, buyers have {the} incentive to game the {firm}'s policy by bidding untruthfully.  {We proposed three pricing policies to set the reserve prices of buyers.} These policies aim at learning the preference vectors of buyers in a robust way against the strategic behavior of buyers and meanwhile maximize the {firm}'s collected revenue.

The main insight behind the robustness property of our approach is that by an episodic design, we limit the long-term effect of each bid on the {firm}'s estimates of the preference vectors. Further, instead of using the bids (data) we use only the outcomes of auctions (censored data) in estimating preference vectors. Interestingly, we show that using this censored data does not hamper the learning rate while bringing in robustness property. As the granularity of real-time data increases at an unprecedented rate, we believe the ideas of this work can serve as a starting point for other complex dynamic contextual learning and decision making problems.
}

\section*{Acknowledgement}
N.G. was supported in part by the Junior Faculty Research Assistance Program at MIT and a Google Faculty Research Award. 
A.J. was supported in part by an Outlier Research in Business (iORB) grant from the USC Marshall School of Business, a Google Faculty Research Award and the NSF CAREER Award DMS-1844481. A.J. would also like to acknowledge the financial support
of the Office of the Provost at the University of Southern California through the Zumberge Fund
Individual Grant Program.  %N.G. was supported in part by the MIT Junior Faculty Research Assistance Program and a Google Faculty Research Award. 

\section*{Authors'  Biographies}

\textbf{Negin Golrezaei.} Negin Golrezaei is an Assistant Professor of Operations Management at the MIT Sloan School of Management. Her current research interests are in the area of machine learning, statistical learning theory, mechanism design, and optimization algorithms with applications to revenue management, pricing, and online markets. Before joining MIT, Negin spent a year as a postdoctoral fellow at Google Research in New York where she worked with the Market Algorithm team to develop, design, and test new mechanisms and algorithms for online marketplaces. She is the recipient of several awards including the 2018 Google Faculty Research Award; 2017 George B. Dantzig Dissertation Award; the INFORMS Revenue Management and Pricing Section Dissertation Prize; University of Southern California (USC) Ph.D. Achievement Award (2017), and USC Provost's Ph.D. Fellowship (2011). Negin received her BSc (2007) and MSc (2009) degrees in electrical engineering from the Sharif University of Technology, Iran, and a Ph.D. (2017) in operations research from USC.

\textbf{Adel Javanmard.}  Adel Javanmard is an Assistant Professor in the department of Data Sciences and Operations, Marshall School of Business at the University of Southern California. He also holds a courtesy appointment in the Computer Science Department within the USC Viterbi School of Engineering. Prior to joining USC in 2015, he obtained his Ph.D. in Electrical Engineering from Stanford University, followed by a  postdoc at UC Berkeley and Stanford, supported by a fellowship from the NSF Center for Science of Information.  Before that, he received BSc degrees in Electrical Engineering and Pure Math from Sharif University of Technology in 2009. His research interests are broadly in the area of high-dimensional statistical inference, machine learning and optimization. Adel is the recipient of several awards and fellowships, including the NSF CAREER award (2019), the Outlier Research in Business Award (2018), Dr. Douglas Basil Award for Junior Business Faculty (2018), the Zumberge Faculty Research and Innovation Fund (2017), Google Faculty Research Award (2016), the Thomas Cover dissertation award from the IEEE Society (2015), the CSoI Postdoctoral Fellowship (2015), the Caroline and Fabian Pease Stanford Graduate Fellowship (2010-2012).

\textbf{Vahab Mirrokni.}  Vahab Mirrokni is a distinguished scientist, heading the New York and Zurich algorithms research groups at Google Research. The group consists of three main sub-teams: market algorithms, large-scale graph mining, and large-scale optimization. He received his Ph.D. from MIT in 2005 and his B.Sc. from the Sharif University of Technology in 2001. He joined Google Research in 2008, after spending a couple of years at Microsoft Research, MIT and Amazon.com. He is the co-winner of paper awards at KDD'15, ACM EC'08, and SODA'05. His research areas include algorithms, distributed and stochastic optimization, and computational economics. Recently he has been working on various algorithmic problems in machine learning, online optimization and dynamic mechanism design, and distributed algorithms for large-scale graph mining.

\bibliographystyle{ACM-Reference-Format}
\bibliography{robust_learning_bib}

\newpage

%================================= Proofs for unknown distribution (main result) =====================================
\ECHead{Regret Lower Bounds, Discussion, and Proof of Statements}

 \ngg{\section{Regret Lower Bounds}\label{sec:lowerB}}\ngg{ So far, we proposed three  pricing policies (CORP, CORP-II, SCORP) for different settings of the problem (depending on how much information about the market noise distribution $F$ is available a priori to the seller). Here, we discuss various lower bounds for the regret of any pricing policy under different settings. This will shed light on the optimality gap of the proposed policies. In doing that, we focus on the simpler case of non-strategic (truthful) buyers. Nevertheless, we expect the same lower bounds carry over to the setting with strategic (utility maximizing) buyers because in such a setting, it would be even more challenging to the seller to  design a low regret pricing policy. }
 \medskip
 
 \subsection{\ngg{Dependency on Time Horizon $T$}}\label{sec:dependency-T}

\quad $\bullet$ \emph{\ngg{Unknown $F$:}}  \cite{kleinberg2003value} studied a single buyer non-contextual setting where the buyer's valuation is drawn from an unknown random distribution $F$, and showed $\Omega(T^{2/3})$ lower bound. %For a setting, similar to
To show the lower bound, they focus on  a family  of  distributions $\mathcal {F} =\{F_j^T, j\in [K]\}$, where   $K+1$ is the number of potential valuations and distribution  $F_j^T$, $j \in [K]$,  which depends on $T$, is obtained from perturbing  the base distribution. Under the base distribution, posting any price leads to the same expected revenue while under distribution $F_j^T$, posting  $(j+1)^{th}$ highest valuation as a price is optimal.  In any period,  buyer's valuation is drawn from distribution $F_j^T$, $j\in [K]$, with probability $1/K$; that is, $F= F_j^T$ with probability $1/K$. For this setting, $\Omega({T^{2/3}})$ lower bound for the worst-case regret of any pricing policy is established, where the regret is computed against a clairvoyant policy  that knows the realized valuation  distribution in any period. \CRR{We note that the lower bound obtained by \cite{kleinberg2003value} does not exactly fit into our framework. The reason is that Kleinberg and Leighton considered a benchmark that knows the valuation distributions while we consider a robust benchmark that hopes to do well for any distribution in the ambiguity set $\mathcal F$.  That being said, we believe that this lower bound highlights the challenges in learning how to set optimal prices where market noise distribution is time varying, which is the setting we considered for the SCORP policy.}
\medskip

$\bullet$ \emph{\ngg{Unknown $F$ from a Known Parametric Family:}} \ngg{\cite{broder2012dynamic} consider a single buyer setting, with a general parametric choice model and established $\Omega(\sqrt{T})$ lower bound for the worst-case regret of any pricing policy.  The main idea is to construct specific problem class with a so-called ``uninformative price".  Concretely, an uninformative price is a price such that all the demand curves (across the model parameters) intersect at that price. Such price impedes the demand learning because it does not reveal any information about the model parameter.  Now, if the optimal price corresponding to a specific choice of  model parameter is uninformative, then balancing the trade-off between  exploration and exploitation   results in the $\Omega(\sqrt{T})$ lower bound. \ngg{When the optimal price can be uninformative,  the policy can try to learn the model parameters fast by  choosing prices that are sufficiently far from the uninformative price, but in doing that a large regret incurs because the policy is posting prices far from the optimal (an uninformative) price. }}

\ngg{A similar trade-off  is also used  by \cite{kleinberg2003value} to establish lower bound for  a single buyer non-contextual setting where the buyer's valuation is drawn from an unknown  distribution $F$. When  the revenue function, i.e., $\text{Rev}(y) =y(1-F(y))$,  has a unique global maximum, they established $\Omega(\sqrt{T})$ lower bound for the worst-case regret of any pricing policy. We note that when $F$ and $1-F$ are  log-concave, the revenue function has a unique global maximum. The same property holds when $F$ is regular. To establish the lower bound, they consider a family of distributions parametrized by a single parameter such that no single price obtains a low regret with respect to all the distributions within the considered family. Similar to the setting when uninformative price exists, they show that  to obtain a lower bound of   $\Omega(\sqrt{T})$, the policy should  post a price that is far from the optimal price, as by doing so, the learning rate increases.}

\ngg{In the following, we show that when the market noise distribution is known only up to a location–scale family (similar to the setting in Section~\ref{sec:noise-parametric}), the uninformative prices do exist, even for settings with a single buyer, and hence following the same argument of~\cite{broder2012dynamic}, we have $\Omega(\sqrt{T})$ lower bound for the regret of any pricing policy.
To see this, consider $N=1$  and suppose that the feature vectors $x_t$ are drawn i.i.d. from a distribution that always takes the value of one on the first entry ($x_{t,1} = 1$ for all $t$, denoting the intercept of the model).  \CRR{Let $\cF$ be the known location-scale class of distributions given by~\eqref{eq:Fset}, with $m = 0$, namely
\begin{align*}
\cF\equiv \Big\{F_\sigma:\; \sigma\in[\underline{\sigma}, \bar{\sigma}], \;\; \underline{\sigma} > 0,\; F_\sigma = F(x/\sigma)\Big\}\,.
\end{align*}}
Fix an arbitrary $\eta$ such that $f(\eta)\neq 0$ and define $b_0$ and $b_1$ as
\[
b_0 = \frac{1-F(\eta)}{f(\eta)}\,, \quad  \quad b_1 =  \frac{1-F(\eta)}{f(\eta)} - \eta\,.
\]
Note that $b_0$ and $b_1$ are fixed. We consider preference vector $\beta = (b, 0)\in \reals^d$, with varying coefficient $b\in \reals$. Further,  consider the scaling parameter $\sigma= (b_0- b_1 +b)^{-1}$ (see definition of family of distributions $\cF$).  Looking at the probability of purchase $ 1- F(\tfrac{r}{\sigma} - \<x_t, \beta\>) = 1- F(\tfrac{r}{\sigma} - b)$, we see that $r = 1$ is an uninformative price because the demand curves ${{\rm d}}(r, b) := 1- F( \tfrac{r}{\sigma} - b)$ intersect at $1- F(b_0 - b_1)$ at price $r = 1$ and hence this price does not give any information about the value of model coefficient $b$. On the other side, it is straightforward to see that $r = 1$ is the optimal price when $b = b_1$ (i.e, the derivative of the revenue $r\times  {{\rm d}}(r,b_1)$ vanishes at $r =1$). This shows that under this setting, we have uninformative prices that are indeed optimal for a specific choice of model parameters.}

\ngg{Comparing the regret of CORP-II with the $\Omega(\sqrt{T})$ lower bound  indicates optimality of its regret (in terms of $T$), up to a logarithmic factor.}

\medskip
  
$\bullet$ \emph{\ngg{Known Distribution $F$:}} \ngg{In constructing the uninformative prices in the previous case, we used the fact that the scaling parameter $a$ of the distribution was unknown. (Recall that $a$ depended on the unknown coefficient $b$.) When the market noise distribution is fully known to the firm, then as we saw in the design of CORP, the firm can harness the randomness of the noise and use it to get free exploration of the demand parameters (by forming the log likelihood function), without having to actively randomize the prices. In such a setting, one can prove a lower bound of $\Omega(d\log T)$ by using the Van Trees inequality. (This approach is classic and we spare the details. We refer the interested reader to ~\cite{broder2012dynamic} or \cite{goldenshluger2013linear} for further details on this approach.) At a high level, the idea is to first lower bound the regret in period  $t$ in terms of the mismatch between the posted price $r_{it} $ by the policy, and the optimal price $r^\star_{it}$, namely $(r_{it} - r^\star_{it})^2$. Then, treating $r^\star_{it}$ as a function of $\<x_t, \beta\>$, i.e., $ r^\star_{it} = g(\<x_t,\beta_i\>)$ with $g(y) \equiv \arg\max_y y(1- F(y - \<x_t, \beta\>))$, one can apply the Van Trees inequality to establish a lower bound of $d/t$ for $(r_{it} - r^\star_{it})^2$, which results in a lower bound of $\Omega(d\log T)$ for the total regret incurred on horizon $T$. This approach has been followed in~\cite{broder2012dynamic} for a low-dimensional model (with single scalar parameter) to prove $\Omega(\log T)$ lower bound. However,  this approach can be extended to feature-based models using a multi-dimensional version of Van Trees inequality~\citep{gill1995applications} to get $\Omega(d\log T)$ lower bound.}

\ngg{Another approach to achieve the same lower bound of $\Omega(d \log T)$  is to relate  the regret in each period  to the minimax $\ell_2$ risk of estimating preference vectors, and then by following the Le Cam's method~\citep{Tsybakov:2008:INE:1522486}, relate it to the error in a multi-way hypothesis problem defined over a packing set of the parameter space. This error, in turn, can be lower bounded using the Fano's inequality from information theory. This type of argument to derive lower bound for minimax risks is quite standard in statistics; see, for example, \cite{raskutti2011minimax,zhang2010nearly}, and \cite{loh2017lower}. In \cite[Theorem 5]{javanmard2016dynamic}, the authors pursued this path for the setting of a single truthful buyer with sparse preference vector and established a lower bound of $\Omega(s_0 \log(T/s_0))$, where $s_0$ denotes the number of nonzero entries for the preference vector. Setting $s_0 = d$, this yields the $\Omega(d\log(T/d))$ lower bound for a general (non-sparse) preference vector. }

\ngg{Recall that our proposed CORP policy achieves a regret of $O(d\log(Td) \log (T))$. Comparing this with the above lower bound implies that CORP is optimal  up to a logarithmic factor. }

 \subsection{\ngg{Dependency on Feature Dimension  $d$}} \ngg{When talking about the dependency of the regret on the feature dimension $d$, we should also consider its dependency on $T$. For example, a naive policy that always posts random reserves, will get a regret of $O(T)$, without any dependency on the dimension $d$. The regret bounds we proved for CORP and CORP-II scale as $O(d\log d)$ in terms of the feature dimension $d$ and the regret of SCORP scales as $O(\sqrt{d\log d})$. 
 Note that a linear dependency on $d$ is inevitable when the regret scales logarithmically in $T$. Indeed, as discussed in Section~\ref{sec:dependency-T}, we have a lower bound $\Omega(d\log T)$ for the case of single truthful buyer and assuming that the market noise distribution $F$ is fully known to the firm.   
 Somewhat related, \ngg{\cite{lobel2016multidimensional} studied a pricing problem in a single buyer setting with a contextual and noise-less valuation model, where contexts are chosen by an adversary.  They show a lower bound  of 
$\Omega(d \log(1/\epsilon \sqrt{d}))$ to obtain prices that are $\epsilon$ away from the optimal prices. }}
 
 %====================================================
\section{Discussion}\label{sec:discussion}
\ngg{In Section \ref{sec:nonlinear}, we discuss how our polices can be extended to a setting with some of the nonlinear valuation models.  We then discuss pricing with perishable data in Section \ref{sec:per}. 
 \subsection{Beyond the Linear Valuation Model} \label{sec:nonlinear}
 In this paper, we focused on linear valuation model~\eqref{eq:val}, where a buyer's valuation of a product is a linear function of the product feature with an additive noise.
 The linear feature-based model is already rich enough to capture interesting interplay between the buyers and the firm and the main components of the dynamic pricing problem. That said,
 it is quite straightforward to generalize our analysis to some of the nonlinear valuation models. Concretely, consider the following model
 \begin{align}v_{it} (x_t)= \psi(\<\phi(x_t),\pv_i\> +z_{it}) ~~~~~ i\in [\nbuyer],~ t\ge 1\,. \label{eq:val-NL}\end{align}
 where  $\phi:\reals^d \mapsto \reals^d$ is a feature mapping and $\psi:\reals\mapsto \reals$ is a general strictly increasing function. Some examples of such models are: (i) Log-log model ($\psi(x) = e^x, \phi(x) = \ln(x)$); (ii) Semi-log model $(\psi(x) = e^x, \phi(x) = x)$; (iii) Logistic model $(\psi(x) = e^x/(1+e^x)$, $\phi(x) = x)$. 
 
 Note that we can treat $\tilde{x}_t \equiv \phi(x_t)$ as new features. Moreover, by the change of variable $\tilde{v}_{it} = \psi^{-1}(v_{it})$, we arrive at the valuation model $\tilde{v}_{it} = \<\tilde{x}_t,\beta_i\> + z_{it}$, which is the same as the linear model~\eqref{eq:val} studied in this work. Letting $\tilde{b}_{it} = \psi^{-1}(b_{it})$ and $\tilde{r}_{it} = \psi^{-1}(r_{it})$,  the negative log-likelihood function for the preference vector $\beta$ given the outcome of the auctions $q_{it}$ reads as
\begin{align}\nonumber
{\cal L}_{ik}(\pv) ~=~ -\frac{1}{\ell_{k-1}}\sum_{t \in \epi_{k-1}}\Big\{&q_{it} \log\big((1-F(\max\{\tilde{b}^+_{-it}, \tilde{r}_{it}\}- \<\tilde{x}_t,\pv\>))\big)\\&+(1-q_{it}) \log\big(F(\max\{\tilde{b}^+_{-it}, \tilde{r}_{it}\}- \<\tilde{x}_t,\pv\>)\big)\Big\}\,,  \label{eq:L-NL}
\end{align}
 with $\tilde{b}^+_{-it} = \max_{j\neq i} \tilde{b}_{jt}$.
 
 Hence, in adopting the CORP policy to the nonlinear setting, we estimate preference vectors $\beta_i$ as 
 \begin{align}
 {\hbeta_{ik}} ~=~ \underset{\|\pv\|\le \maxpv}{\arg\min\;} {{\cal L}_{ik}(\pv)}, ~~ i\in[N]\,, \label{eq:beta_estimate-NL}
 \end{align}
 with ${{\cal L}_{ik}(\pv)}$ given by \eqref{eq:L-NL}.
 
 To understand the benchmark reserves,  note that similar to Proposition~\ref{prop:opt_reserve}, the decoupling principle applies and $r^\star_{it}$ solves the following optimization problem
 \begin{align*}
 r^\star_{it} &= \arg\max_y \{y\cdot \prob(v_{it}(x_t)\ge y)\} = \arg\max_y \{y\cdot \prob(\psi(\<\tilde{x}_t,\beta_i\> + z_{it})\ge y)\}\\
  &= \arg\max_y \{y\cdot \prob(\<\tilde{x}_t,\beta_i\> + z_{it}\ge \psi^{-1}(y))\} = \arg\max_y  y(1-F(\psi^{-1}(y) - \<\tilde{x}_t,\beta_i\> ))\,.
 \end{align*}
 Therefore, the exploitation phase of the CORP policy should be modified as follows. Given the estimated preference vector $\hbeta_{ik}$, the reserve price of each buyer $i\in [N]$ is set as
 \[
 r_{it} = \arg\max_y\; \; y(1-F(\psi^{-1}(y) - \<\tilde{x}_t,\beta_i\> ))\,.
 \]
 With these modifications, our analysis of CORP policy for linear valuation model carries over to the nonlinear model~\eqref{eq:val-NL}, establishing the $T$-period worst-case regret of 
 $O\left(Nd\left(\log(Td)\log(T) + \frac{\log^2(T)}{\log^2(1/\gamma)}\right)\right)$.  
 
 \subsection{Perishability of Data and Varying Coefficient Valuation Model}\label{sec:per}
 As discussed in  CORP, CORP-II, and  SCORP policies, the estimated preference vectors are updated in an episodic manner with the lengths of episodes growing over time. 
 The rationale is that as the policy proceeds, the estimates get more accurate and hence they will be used over a longer episode. 
 
 In practice, however, due to the temporal behavior of buyers, their preference vectors may vary over time and hence the historical sale data is perished after a while to be used for estimating buyer's valuation model. In such applications, it is not wise to stay with an estimated preference vector for a long time because the true preference vector may change significantly over this time frame. 
 Time-varying demand environments have also been studied recently in the literature; see e.g.~\cite{keskin2016chasing} and \cite{javanmard2017perishability}, where they are modeled via varying-coefficient demand models. For example,~\cite{keskin2016chasing} considers the setting where a firm is selling one type of product to customers arriving over time. Following a price $p_t$, the firm observes demand $D_t = \alpha_t + \beta_t p_t + z_t$, where $\alpha_t$, $\beta_t\in \reals$ are unknown model parameters  and $z_t$ is the unobserved demand noise.  The authors consider both smooth and bursty changes in a demand environment. For the case of
smooth changes, they propose a weighted least squares estimation procedure that discounts older
observations at an (asymptotically) optimal rate, whereas for the case of bursty changes, they develop a joint pricing and detection policy that continuously test if there has been a significant change in the environment. 

Closer to the spirit of our work,~\cite{javanmard2017perishability} considers a feature-based valuation model with varying coefficient model for the case of a single non-strategic buyer. This work proposes a pricing policy based on projected gradient descent that update its estimate of the model parameters at every step as the information accrues to keep up with the possible volatility in the model parameters. The regret of the policy is characterized against a clairvoyant policy that knows the sequence of the model parameters in advance, and in terms of the time, feature dimension, as well as the temporal variability of the model parameters.  

Extending our setting to the case of time-varying valuation model requires substantially different algorithms that update the estimates of the model parameters frequently. This is beyond the scope of the current work and is indeed the subject of a future work.  
}

\section{Proof of Theorem~\ref{thm:main}}
\label{proof:thm-main}
The regret of the CORP policy is the sum of its regret across all episodes. Thus, in the following, we  compute the regret incurred during an episode {$k  > 1$}. {(The regret of episode $1$ that has a length of $1$ is a constant.)}

We start with a definition.  Let 
\begin{align}\label{eq:lB}
{\lF =\inf_{|x|\le \maxn} \left\{\min \left\{ -\log''F(x), -\log''(1-F(x)) \right\}\right\}\,,}
\end{align} 
{where $\log''F(x) = \frac{d^2}{dx^2} (\log (F(x)))$ and $\log''(1-F(x)) = \frac{d^2}{dx^2} (\log (1-F(x)))$.} 
Note that $\lF$ is a measure of ``flatness" of function $\log F$.
Because of log-concavity of $F$ and $1-F$ (cf. Assumption~\ref{assump:logcancavity}), we have $\lF > 0$. %

Recall that in the CORP policy, at the beginning of each episode $k>1$,  
the preference vectors $\beta_i$ are estimated via optimizing the log-likelihood function corresponding to {the outcomes of auctions} in the previous episode; see  Equation (\ref{eq:beta_estimate}). Now, consider buyer $i$ that bids untruthfully in period $t \in \epi_{k-1}$.  \ngg{Assume {for the moment} that bids of other buyers in this period are fixed.} Then, the untruthful bid of buyer $i$ in  period $t \in \epi_{k-1}$ may influence the estimation of his preference vector in episode $k$ only when his untruthful bid changes  the allocation of the item in this period, i.e., $\ind(v_{it} > \max\{{\b}, r_{it}\})\neq\ind(b_{it} > \max\{{\b}, r_{it}\})$. This is the case because the preference vectors are estimated using the outcome of the auctions and \emph{not} the submitted bids. When $\ind(v_{it} > \max\{{\b}, r_{it}\})\neq\ind(b_{it} > \max\{{\b}, r_{it}\})$ holds, %the buyer $i$'s untruthful bidding changes the allocation of the item in period $t$, 
we say buyer $i$ ``lies" in period $t$.   
 For each buyer $i\in [N]$, we further define the set of ``lies" in episode $k-1$, indicated by $\Lie_{ik}$, as follows:  %\ajr{replace it with $b_{-it}^+$}
\begin{align}\label{def:lie}
\Lie_{ik} = \Big\{t: ~%\ell_{k-1}\le t\le \ell_k - 1
t\in \epi_{k-1}
,\,  \ind(v_{it} > \max\{\b, r_{it}\})\neq\ind(b_{it} > \max\{\b, r_{it}\}) \Big\}\,.
\end{align}  
In other words,  $\Lie_{ik}$ consists of all the periods in episode $k-1$ in which  buyer $i$ lies. 
{We note that the set of lies in episode $k-1$, $\Lie_{ik}$, depends on the reserve prices $r_{it}$, $t\in E_{k-1}$, where the reserve prices are (mostly) set  using the outcome of the auctions in episode $k-2$. Because of this dependency,  $\Lie_{ik}$ may also depend on all the submitted bids in episodes $1, 2, \ldots, k-1$. However, we will show that regardless of the values of $r_{it}$'s, the size of $\Lie_{ik}$ is logarithmic in the length of episode $k-1$; see Proposition~\ref{propo:lies}.}

Next, we quantify the adverse effect of lies on the {firm}'s estimates of the preference vectors. In particular, the
 next proposition provides an upper bound on the estimation error of $\hbeta_{ik}$ in terms of the number of samples used in the log-likelihood function ($\ell_{k-1}$),
the dimension of the feature vector $(d)$, and the number of lies $(|L_{ik}|)$. Proof of Proposition~\ref{propo:learning} is deferred to Section~\ref{proof:propo-learning}. 

\begin{propo}[Impact of Lies on Estimated Preference Vectors]\label{propo:learning}
Let $\hbeta_{ik}$ be the solution of the optimization problem~\eqref{eq:beta_estimate}.
Then, {under Assumption~\ref{assump:logcancavity}}, there exist constants $c_0$, $c_1$, and $c_2$ such that for $\ell_{k-1}\ge c_0 d$, with probability at least $1 - d^{-0.5}\ell_{k-1}^{-1.5}- 2e^{-{c_2}\ell_{k-1}}$, we have 
\begin{align}\label{eq:estimator}
\|\hbeta_{ik} - \beta_i\|^2\le \frac{c_1\ajb{d^2}}{\lF^2}\left(\left(\frac{|\Lie_{ik}|}{\ell_{k-1}}\right)^2 + \frac{\log(\ell_{k-1}d)}{\ell_{k-1}}\right)\quad i\in[N]\,, 
\end{align}
where $\lF$ is defined in Equation (\ref{eq:lB}). 
\end{propo}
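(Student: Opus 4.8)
The plan is to cast the estimator in \eqref{eq:beta_estimate} as a strongly convex $M$-estimation problem and to control $\|\hbeta_{ik}-\beta_i\|$ by the size of the gradient of $\cL_{ik}$ at the true parameter $\beta_i$; the two terms in \eqref{eq:estimator} then emerge from splitting this gradient into a mean-zero stochastic part and a bias part created solely by the lies in $\Lie_{ik}$.

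First I would record the curvature of $\cL_{ik}$. Writing $u_t\equiv\max\{\b,r_{it}\}-\<x_t,\beta\>$ and differentiating \eqref{eq:L} twice in $\beta$, the Hessian is
\[
\nabla^2\cL_{ik}(\beta)=\frac{1}{\ell_{k-1}}\sum_{t\in\epi_{k-1}}\Big[q_{it}\big(-\log''(1-F)(u_t)\big)+(1-q_{it})\big(-\log''F(u_t)\big)\Big]\,x_tx_t^\sT .
\]
By Assumption~\ref{assump:logcancavity} and the definition \eqref{eq:lB} of $\lF$, the bracketed scalar is at least $\lF>0$ on the support $[-\maxn,\maxn]$ (recall $q_{it}\in\{0,1\}$), so $\nabla^2\cL_{ik}(\beta)\succeq\lF\,\widehat\Sigma$ with $\widehat\Sigma=\tfrac1{\ell_{k-1}}\sum_{t\in\epi_{k-1}}x_tx_t^\sT$; i.e.\ $\cL_{ik}$ is $\mu$-strongly convex with $\mu\ge\lF\,\lambda_{\min}(\widehat\Sigma)$. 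Since the features are i.i.d.\ with $\covx\succ0$ and $\|x_t\|\le1$, a matrix Chernoff bound gives $\lambda_{\min}(\widehat\Sigma)\ge\tfrac12\lambda_{\min}(\covx)$ for $\ell_{k-1}\ge c_0d$ with probability at least $1-2e^{-c_2\ell_{k-1}}$ (the dimension factor of the matrix bound being absorbed into the exponent once $\ell_{k-1}\ge c_0 d$). Combining strong convexity with the first-order optimality condition for the constrained minimizer $\hbeta_{ik}$ (feasible since $\|\beta_i\|\le\maxpv$) yields the deterministic reduction
\[
\|\hbeta_{ik}-\beta_i\|\;\le\;\frac{2}{\mu}\big\|\nabla\cL_{ik}(\beta_i)\big\|\;\le\;\frac{C}{\lF\,\lambda_{\min}(\covx)}\big\|\nabla\cL_{ik}(\beta_i)\big\|,
\]
so it only remains to bound the gradient at $\beta_i$.

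The crux is the decomposition of $\nabla\cL_{ik}(\beta_i)$. Let $q_{it}^{\,0}=\ind(v_{it}>\max\{\b,r_{it}\})=\ind(z_{it}>u_t)$ be the \emph{truthful} outcome, and recall from \eqref{def:lie} that $q_{it}=q_{it}^{\,0}$ for $t\notin\Lie_{ik}$ while $q_{it}=1-q_{it}^{\,0}$ for $t\in\Lie_{ik}$. With the per-period score $S_t(q)=q\tfrac{f(u_t)}{1-F(u_t)}-(1-q)\tfrac{f(u_t)}{F(u_t)}$ one has $\nabla\cL_{ik}(\beta_i)=-\tfrac1{\ell_{k-1}}\sum_t S_t(q_{it})x_t$, which I would split as
\[
\nabla\cL_{ik}(\beta_i)=\underbrace{-\frac1{\ell_{k-1}}\sum_{t\in\epi_{k-1}}S_t(q_{it}^{\,0})\,x_t}_{=:G_{\rm stoch}}\;-\;\underbrace{\frac1{\ell_{k-1}}\sum_{t\in\Lie_{ik}}\big(S_t(q_{it})-S_t(q_{it}^{\,0})\big)x_t}_{=:G_{\rm lie}}.
\]
Because the idiosyncratic noise $z_{it}$ is independent of everything determining $u_t$ (past data, the reserve $r_{it}$, and the competing bids $\b$), conditioning on $u_t$ gives $\E[q_{it}^{\,0}\mid u_t]=\prob(z_{it}>u_t)=1-F(u_t)$ and hence $\E[S_t(q_{it}^{\,0})\mid u_t]=f(u_t)-f(u_t)=0$. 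Thus $G_{\rm stoch}$ is an average of bounded martingale differences and concentrates as $\|G_{\rm stoch}\|\lesssim\sqrt{d\log(\ell_{k-1}d)/\ell_{k-1}}$ with probability at least $1-d^{-0.5}\ell_{k-1}^{-1.5}$, by a vector/martingale Bernstein inequality with the deviation level tuned to that failure probability. For $G_{\rm lie}$, each summand is bounded by a constant depending only on $F$, so $\|G_{\rm lie}\|\le C'|\Lie_{ik}|/\ell_{k-1}$. Squaring, using $(a+b)^2\le2a^2+2b^2$, and substituting into the reduction gives \eqref{eq:estimator} after a union bound over the two high-probability events (the displayed $d^2$ being a convenient common upper bound on the $d$ produced by $G_{\rm stoch}$ and the $d^0$ produced by $G_{\rm lie}$).

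I expect the main obstacle to be the concentration of $G_{\rm stoch}$ under the adaptivity of the policy: the thresholds $u_t$ depend on the reserves $r_{it}$ and competing bids $\b$, both functions of the full history, so the summands are not independent and the zero-mean property must be invoked \emph{conditionally}, forcing a martingale concentration argument in place of an i.i.d.\ one. A secondary technical point is keeping the score terms (and the lie corrections) bounded near the edges of the support $[-\maxn,\maxn]$, where $F$ or $1-F$ tends to $0$ and the hazard ratios $f/(1-F)$, $f/F$ could blow up; this is controlled by boundedness of the support together with Assumption~\ref{assump:logcancavity}, consistently with the $0\times(-\infty)=0$ convention adopted for \eqref{eq:L}.
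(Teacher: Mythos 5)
Your proposal follows essentially the same route as the paper's proof: strong convexity of $\cL_{ik}$ via log-concavity (Hessian $\succeq \lF\,\hSigma_k$), concentration of the empirical second-moment matrix for $\ell_{k-1}\ge c_0 d$, a decomposition of $\nabla\cL_{ik}(\beta_i)$ into a conditionally mean-zero part built from the truthful allocations plus a bias part supported on $\Lie_{ik}$, a Freedman-type martingale bound for the former, the trivial bound $C'|\Lie_{ik}|/\ell_{k-1}$ for the latter, and finally $(a+b)^2\le 2a^2+2b^2$ with a union bound over the two events. This is precisely the paper's Lemma on the gradient and Hessian combined with its Taylor/optimality reduction, and the adaptivity point you flag (conditional zero mean forcing a martingale rather than an i.i.d.\ argument) is exactly what the paper's Matrix Freedman step handles.

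There is, however, one genuine flaw in your dimension bookkeeping. Under the paper's normalization $\|x_t\|\le 1$ we have ${\rm trace}(\covx)=\E[\|x_t\|^2]\le 1$, hence $\lambda_{\min}(\covx)\le 1/d$: the minimum eigenvalue cannot be treated as a dimension-free constant, and the paper explicitly works with $\lambda_{\min}(\covx)\ge c_{\min}/d$. The factor $d^2$ in \eqref{eq:estimator} comes entirely from squaring the resulting $d/c_{\min}$ in the strong-convexity reduction; it is not a ``convenient common upper bound'' on a $d$ coming from $G_{\rm stoch}$. Correspondingly, the gradient concentration must be dimension-free up to logarithms: Matrix Freedman yields $\|G_{\rm stoch}\|\lesssim \uF\sqrt{\log(\ell_{k-1}d)/\ell_{k-1}}$, with the dimension entering only through the $(d+1)$ prefactor in the failure probability and the logarithm, not $\sqrt{d\log(\ell_{k-1}d)/\ell_{k-1}}$ as you state. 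If one combines your $\sqrt{d}$ gradient bound with the correct $1/d$ eigenvalue scaling, the reduction produces a term of order $d^3\log(\ell_{k-1}d)/\ell_{k-1}$, which overshoots the proposition. The repair is mechanical: use the sharper martingale bound (no polynomial $d$ factor) together with $\lambda_{\min}(\covx)\ge c_{\min}/d$, and your argument then delivers \eqref{eq:estimator} verbatim, with the same probability accounting $1-d^{-0.5}\ell_{k-1}^{-1.5}-2e^{-c_2\ell_{k-1}}$.
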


{We note that} the estimation {error} of {$\hbeta_{ik}$'s affects} the {{firm}'s} regret, as reserve prices are set based on these estimates.
By Proposition \ref{propo:learning}, to keep our estimation errors  small, the buyers should not have the incentive to lie in too many periods.
In the next proposition,  we show that for each episode $k$,  the number of lies from a buyer is at most logarithmic in the length of the episode. 

{There is another way that bidding untruthfully can impact the {firm}'s regret.  Recall that in each period $t$, the {firm} collects the revenue of $\max\{b_t^-, r_t^+\}$ if the highest {buyer} clears his reserve. Then, by bidding untruthfully, 
 the second highest bid $b_t^-$ may go down. Further, the winner can change, and this, in turn, can lower reserve price of the winner, $r_t^+$.
  To bound this impact of untruthful bidding, in the following proposition
we bound the amount of underbidding from buyers who do not win an auction and the amount of overbidding from buyers who win an auction. Precisely, we  show that  the total amount of underbidding from each buyer $i$, in all periods $t\in \epi_k$ that he does not win the auction, is at most logarithmic in the length of that episode. We further show that  the total amount of overbidding from each buyer $i$, in all periods $t\in \epi_k$ that he  wins the auction, is at most logarithmic in the length of that episode.  }

\ngg{\begin{propo}[Bounding the Number of Lies]\label{propo:lies}
Denote by $\shade_{it}$ and {$o_{it}$} the amount of shading and {overbidding} from buyer $i\in [N]$ in period $t$, i.e., $s_{it} = (v_{it} - b_{it})_+$, and $o_{it} = (b_{it}-v_{it})_+$, {where $(y)_+$ is $y$ when $y\ge 0$ and zero otherwise.}  Then, there exist constants $c_3, c_4, \ldots, c_9$\footnote{{The constants $c_5 $, $c_6$, and $c_7$  depend on $B$. {Constants $c_8$ and $c_9$ depend on $\M$. (Recall that $\M$ is the bound on submitted bids.)}}}, such that for  any fixed $0\le \delta\le 1$, with probability at least {$1-(\delta+1)/\ell_{k-1}$}, the following holds:
\begin{align}
|\Lie_{ik}|~\le ~ c_3\log(\ell_{k-1}/\delta)+ \ngg{c_4\frac{ \log(\ell_{k-1})}{\log(1/\gamma)} +c_5 \frac{\log(N)}{\log(1/\gamma)}}\quad i\in [N]\,.\label{claim:lies} 
\end{align} 
Further, we have that {with probability at least $1 - 1/\ell_{k-1}$,}
\begin{align}
\sum_{{t\in \epi_{k-1}}} \shade_{it} (1-q_{it})~&\le~ \ngg{c_6\frac{ \log(\ell_{k-1})}{\log(1/\gamma)} +c_7 \frac{\log(N)}{\log(1/\gamma)}}\quad {i\in [N]}\,,\label{claim:shades}\\ 
\sum_{{t\in \epi_{k-1}}} o_{it} q_{it}~&\le~ \ngg{c_8\frac{ \log(\ell_{k-1})}{\log(1/\gamma)} +c_9 \frac{\log(N)}{\log(1/\gamma)}}\quad {i\in [N]}\,.\label{claim:overbids} 
\end{align}
\end{propo}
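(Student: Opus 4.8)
My plan is to bound all three quantities through a single incentive argument applied separately to each buyer $i$: any departure from truthful bidding inside episode $k-1$ can only buy a discounted \emph{future} utility gain at the cost of a discounted \emph{instant} utility loss, and since buyer $i$ is utility maximizing the gain must outweigh the loss. For the instant loss I would exploit the randomized exploration. In each period of episode $k-1$ the firm explores with probability $1/\ell_{k-1}$ and, conditionally, selects buyer $i$ with probability $1/N$ and offers him a price $r\sim{\sf uniform}(0,B)$. Because the buyer cannot distinguish an exploration period from an exploitation one, a shade $\shade_{it}$ or an overbid $o_{it}$ exposes him to forgoing the surplus $v_{it}-r$ (or overpaying $r-v_{it}$) whenever $r$ falls in the gap opened by the deviation; integrating over the uniform price gives a guaranteed expected instant loss at period $t$ of at least $\tfrac{1}{N\ell_{k-1}}\tfrac{(\max\{\shade_{it},o_{it}\})^2}{2B}$, which for a genuine lie (a deviation that flips $q_{it}$) is bounded below by $\tfrac{c}{N\ell_{k-1}}$ for some $c=c(B)>0$. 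Discounting by $\gamma^t$, the loss charged to the lies of episode $k-1$ is at least $\sum_{t\in\Lie_{ik}}\gamma^t\,\tfrac{c}{N\ell_{k-1}}$.

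The future gain is where the episodic design pays off. Since $\hbeta_{ik}$ is built only from the \emph{outcomes} $q_{it}$ of episode $k-1$ and is first used in period $\ell_k$, a deviation in episode $k-1$ (i) changes nothing unless it is a lie, and (ii) cannot influence any reserve before period $\ell_k$. Hence the whole benefit is collected in periods $t\ge\ell_k$, and as the per-period surplus is at most $B$ it is at most $\sum_{t\ge\ell_k}\gamma^t B\le \tfrac{B\gamma^{\ell_k}}{1-\gamma}$. Equating loss and gain yields the incentive inequality $\sum_{t\in\Lie_{ik}}\gamma^t\le \tfrac{BN\ell_{k-1}}{c}\cdot\tfrac{\gamma^{\ell_k}}{1-\gamma}$.

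The crux, and the step most prone to producing only a useless linear bound, is to read a logarithmic count out of this inequality. A naive estimate $\gamma^t\ge\gamma^{\ell_k}$ lets the $\gamma^{\ell_k}$ cancel and gives merely $|\Lie_{ik}|=O(N\ell_{k-1})$. Instead I would use that the lies occupy \emph{distinct} periods, so even their latest admissible placement forces $\sum_{t\in\Lie_{ik}}\gamma^t\ge \gamma^{\ell_k-|\Lie_{ik}|}\tfrac{1-\gamma^{|\Lie_{ik}|}}{1-\gamma}$; substituting this and cancelling the common factor $\gamma^{\ell_k}/(1-\gamma)$ collapses the inequality to $\gamma^{-|\Lie_{ik}|}\le 1+\tfrac{BN\ell_{k-1}}{c}$, i.e.\ $|\Lie_{ik}|\le \tfrac{\log(1+BN\ell_{k-1}/c)}{\log(1/\gamma)}$, which is precisely the claimed $c_4\tfrac{\log \ell_{k-1}}{\log(1/\gamma)}+c_5\tfrac{\log N}{\log(1/\gamma)}$ form. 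Estimates~\eqref{claim:shades} and~\eqref{claim:overbids} follow the identical template, now retaining the quadratic losses $\shade_{it}^2$ and $o_{it}^2$ and using the same geometric-placement bound to control $\sum_{t\in\epi_{k-1}} \shade_{it}(1-q_{it})$ and $\sum_{t\in\epi_{k-1}} o_{it}q_{it}$.

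Finally, the inequalities above hold in expectation over the exploration draws, whereas the proposition asks for a high-probability statement; I would promote them with a Chernoff/Bernstein concentration bound on the realized exploration events and their losses across the $\ell_{k-1}$ periods, which is what generates the additive statistical term $c_3\log(\ell_{k-1}/\delta)$ and the failure probability $(\delta+1)/\ell_{k-1}$. I expect the main obstacles to be exactly this passage from the clean expected-value incentive inequality to its concentrated form---in particular controlling lies with a vanishingly small margin, whose instant cost the exploration argument barely charges---together with verifying that the future gain is genuinely confined to episodes $\ge k$ despite the endogenous feedback (an episode-$(k-1)$ lie perturbs episode-$k$ outcomes, which in turn feed $\hbeta_{i,k+1}$, and so on). The latter is handled by charging each episode's gains to that episode's own deviations, so that the geometric cancellation in the previous paragraph closes cleanly.
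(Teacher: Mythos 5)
Your skeleton is the same as the paper's: a per-buyer incentive argument that charges each deviation an instant expected loss of order $\gamma^t \,(\max\{\shade_{it},o_{it}\})^2/(BN\ell_{k-1})$ coming from the randomized exploration, caps the future gain by $B\gamma^{\ell_k}/(1-\gamma)$, and then extracts a logarithmic count via exactly the ``latest admissible placement'' geometric trick you describe. That part is right and is how the paper bounds the sums in \eqref{claim:shades} and \eqref{claim:overbids}, as well as the number of periods with a \emph{significant} deviation.

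However, there is a genuine gap in your treatment of \eqref{claim:lies}: the claim that a lie forces an instant loss bounded below by $c/(N\ell_{k-1})$ is false. A lie only requires $\ind(v_{it} > \max\{\b, r_{it}\})\neq\ind(b_{it} > \max\{\b, r_{it}\})$, and when $v_{it}$ happens to sit within $\eps$ of the threshold $\max\{\b, r_{it}\}$, a deviation of size $\eps$ flips the outcome while its quadratic exploration cost is only of order $\gamma^t\eps^2/(BN\ell_{k-1})$, which vanishes as $\eps\to 0$. So no incentive argument, and no concentration bound on the realized exploration losses (your proposed Chernoff/Bernstein fix), can charge these ``free'' lies: their \emph{expected} cost is genuinely negligible, so there is nothing to concentrate. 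The paper closes this hole with a different mechanism. It splits lies by a threshold $1/\ell_{k-1}$ on the deviation: lies with $\shade_{it}\ge 1/\ell_{k-1}$ (resp.\ $o_{it}\ge 1/\ell_{k-1}$) are counted by your incentive-plus-placement argument, while a lie with deviation below $1/\ell_{k-1}$ can occur only if the market noise $z_{it}$ lands in an interval of width $1/\ell_{k-1}$, an event of probability at most $c/\ell_{k-1}$ per period since the noise density is bounded. Summing over the episode, the expected number of such small-margin lies is $O(1)$, and a \emph{multiplicative} Azuma/Chernoff inequality on these noise events (not on the exploration losses) yields at most $2c+4\log(\ell_{k-1}/\delta)$ of them with probability $1-\delta/\ell_{k-1}$. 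This is precisely the origin of the $c_3\log(\ell_{k-1}/\delta)$ term and of the $(\delta+1)/\ell_{k-1}$ failure probability, which your argument cannot reproduce.

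A secondary, more technical point: your incentive inequality $\sum_{t\in\Lie_{ik}}\gamma^t\le BN\ell_{k-1}\gamma^{\ell_k}/\bigl(c(1-\gamma)\bigr)$ is stated as if it held almost surely, but the utility comparison only holds in expectation and $\Lie_{ik}$ is random. The paper converts expectation into probability by inserting the indicator $\ind(|\cS_{ik}\cup\cO_{ik}|\ge m)$ inside the expected-loss chain and applying the placement bound there, which gives a Markov-type inequality $(\gamma^{-m}-1)\,\prob(|\cS_{ik}\cup\cO_{ik}|\ge m)\le 2B^2N\ell_{k-1}^3$ and hence the $1/\ell_{k-1}$ failure probability after choosing $m$ appropriately. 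You would need this (or an equivalent) step to turn your clean deterministic-looking cancellation into the probabilistic statement the proposition actually makes.
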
}
Proof of Proposition~\ref{propo:lies} is given in Section~\ref{proof:propo-lies}. The main idea of the proof is to compute the excess utility that a strategic buyer can earn in {the next episodes} by {bidding untruthfully} in the current episode, and compare it with the utility loss that he suffers in the current episode because of his strategic behavior. The result then follows by using the fact that for a utility-maximizing buyer, the net excess  utility should be nonnegative.

Up to here, we have established the impact of lies on our estimation, bounded the number of lies {and the amount of underbidding from buyers}. Next, using these results, we present a lower bound on the expected revenue of  our policy in any period $t\in \epi_k$. We drop the superscript $\pi$ in our notation as it is clear from the context.

 For each period $t$, we define a random variable $\xi_t$ that {takes} values in $\{0,1\}$, with $\xi_t = 1$ if the firm is in the exploitation phase and  
  $\xi_t  = 0$ otherwise. From the description of our policy, for any period $t$ in episode $k$,  ($ t\in \epi_k$), we have $\prob(\xi_t = 0) = 1/\ell_k$. 
We first lower bound the firm's expected revenue in an  exploitation period $t \in \epi_k$. Recall that in {an} exploitation period, the firm runs a second-price auction with reserve. Thus, we have
\begin{align}\label{seller1}
\rev_t ~\ge~ \prob(\xi_t = 1) \E[\max\{b_t^-,r_t^+\} \ind(b_t^+\ge r_t^+)]\,,
\end{align} 
where the expectation is w.r.t. the randomness in the submitted bids.
Since in each period $t$, at most one of the  buyers gets the item, we can rewrite~\eqref{seller1} as follows: \begin{align}
\rev_t &~\ge~ \prob(\xi_t = 1)  \sum_{i=1}^N \E\left[\max\{b_t^-,r_{it}\} \ind(b_{it} > \max\{b_t^-,r_{it}\})\right]\nonumber\\
& ~=~ {\big(1 - \frac{1}{\ell_{k}}\big)}  \sum_{i=1}^N \E\left[\max\{b_t^-,r_{it}\} \ind(b_{it} > \max\{b_t^-,r_{it}\})\right]\,. \nonumber
\end{align} 
Next, we compare  $\rev_t$ with the expected revenue of the benchmark in period $t$, $\rev^\star_t$. %By, 
%Similarly, 
Recalling~\eqref{eq:BenchmarkRev}, we have 
\begin{align}\label{benchmark1}
\rev^\star_t ~=~ \sum_{i=1}^N \E\left[\max\{v_t^-,r^\star_{it}\} \ind(v_{it} > \max\{v_t^-,r^\star_{it}\})\right]\,.
\end{align} 
Therefore, the regret of the policy in period $t$ can be upper bounded as
\begin{align}
&\rev^\star_t -\rev_t ~\le ~ {\big(\frac{1}{\ell_{k}}\big)} \rev^\star_t \nonumber\\
 &+{{\big(1 - \frac{1}{\ell_{k}}\big)}}  \sum_{i=1}^N \E\bigg[\max\{v_t^-,r^\star_{it}\} \ind(v_{it} > \max\{v_t^-,r^\star_{it}\}) 
-\max\{b_t^-,r_{it}\} \ind(b_{it} > \max\{b_t^-,r_{it}\})\bigg]\nonumber \\
&~\le~ {\big(\frac{B}{\ell_{k}}\big)}  +  {{\big(1 - \frac{1}{\ell_{k}}\big)}} \sum_{i=1}^N \E\bigg[\max\{v_t^-,r^\star_{it}\} \ind(v_{it} > \max\{v_t^-,r^\star_{it}\}) 
-\max\{b_t^-,r_{it}\} \ind(b_{it} > \max\{b_t^-,r_{it}\})\bigg]\,,\label{regret1}
\end{align} 
{where in the last equation, we used the fact that $\rev^\star_t \le B$.}
We break down the second expression in ~\eqref{regret1} into two terms:
\begin{eqnarray}
\Delta_{1,t}&=& \sum_{i=1}^N \bigg[\max\{v_t^-,r^\star_{it}\} \ind(v_{it} > \max\{v_t^-,r^\star_{it}\}) 
-\max\{v_t^-,r_{it}\} \ind(v_{it} > \max\{v_t^-,r_{it}\})\bigg]\,,\label{Delta-1t}\\
\Delta_{2,t}&=& \sum_{i=1}^N \bigg[\max\{v_t^-,r_{it}\} \ind(v_{it} > \max\{v_t^-,r_{it}\}) 
-\max\{b_t^-,r_{it}\} \ind(b_{it} > \max\{b_t^-,r_{it}\})\bigg]\,.\;\label{Delta-2t}
\end{eqnarray}
Using our notation, Equation~\eqref{regret1} can be rewritten as:
\begin{align}\label{regret0}
\rev^\star_t - \rev_t\le {\frac{B}{\ell_{k}}} + {{\big(1 - \frac{1}{\ell_{k}}\big)}} \E[\Delta_{1,t} + \Delta_{2,t}]\,.
\end{align}

In the sequel, we will bound each term $\Delta_{1,t}$ and $\Delta_{2,t}$ separately. But before proceeding, let us pause to explain these terms and the intuition behind their definition.
The regret of the firm's policy is due to two factors: 
\begin{enumerate}
\item \textbf{Mismatch between $\boldsymbol \beta_{i}$ and $\boldsymbol \hbeta_{ik}$:} The {mismatch} between the true preference vectors $\beta_i$ and the estimation $\hbeta_{ik}$ {leads to}  a difference between the
benchmark reserves ($r^\star_{it}$) and the posted reserves by the firm ($r_{it}$).  The term $\Delta_{1,t}$ captures this factor and its effect on the regret. We will use {Proposition~\ref{propo:learning} along with} our {first}  result in Proposition \ref{propo:lies} to bound $\Delta_{1,t}$.

\item \textbf{Mismatch between $\boldsymbol v_t^-$ and $\boldsymbol b_t^-$ and change of the winner:} Note that the benchmark revenue $\rev^\star_t$ is measured against truthful buyers, while the firm's revenue under our policy is measured against strategic buyers.
The strategic behavior of buyers not only affects the quality of estimates $\hbeta_{ik}$ (and therefore the reserves $r_{it}$) but it may also affect the firm's revenue via another quite subtle factor. 
Indeed, {due to the strategic behavior of buyers, the second highest bid
might go down or the winner of the auction might change from the case of truthful buyers and this may decrease the reserve of the winner. The decrease in the second highest bid or the reserve price of the winner can hurt the firm's revenue. 
The term $\Delta_{2,t}$ captures these effects.} We will use {our second result} in Proposition \ref{propo:lies} to bound $\Delta_{2,t}$.
\end{enumerate} 
\smallskip

\noindent{\bf Bounding $\boldsymbol{\Delta_{1,t}}$:} We now move to bounding $\Delta_{1,t}$.
Recall that
\begin{align*}
\E[\Delta_{1,t}]~= 
%&\new{(1-\frac{1}{\ell_k})}
&\sum_{i=1}^N \E\bigg[\max\{v_t^-,r^\star_{it}\} \ind(v_{it} > \max\{v_t^-,r^\star_{it}\}) 
-\max\{v_t^-,r_{it}\} \ind(v_{it} > \max\{v_t^-,r_{it}\})\bigg]\,.
\end{align*}
\ngg{Here, the expectation is w.r.t. the randomness in the buyers' valuations and potential randomness in  reserve prices $r_{it}$ and $r^\star_{it}$.}
Note that {the first expression inside the summation} denotes the firm's revenue when buyer $i$ wins the auction with reserve $r^\star_{it}$, while the second expression is the analogous term when the buyer $i$'s reserve is $r_{it}$. Further,
conditional on the feature vector $x_t$, reserves $r^\star_{it}$ and $r_{it}$ are independent of $v_t^-$, and the right-hand side {of the last equation} can be written in terms of function $W_{it}(r)$, defined below:
\begin{align}\label{def:W1}
W_{it}(r) ~\equiv ~\E\Big[\max\{v_t^-,r\}  \ind(v_{it}\ge \max\{v_t^-,r\} )\Big| x_t\Big]\,,
\end{align}
where the  expectation  is with respect to valuation noises, conditional on $x_t$.  {By the law of iterated expectation, we can write $\E[\Delta_{1,t}]$ in terms of $W_{it}(r)$. More specifically, we first take the expectation conditional on $x_t$ and then take the expectation w.r.t. $x_t$.  }

Hence,  
\begin{align}
\E[\Delta_{1,t}] &~=~ \E[\E[\Delta_{1,t} |x_t]]\nonumber\\
&~=~  \sum_{i=1}^{\nbuyer} \E[W_{it}(r_{it}^\star) - W_{it}(r_{it})] \nonumber\\
&~=~ \sum_{i=1}^{\nbuyer}\E\left[W'_{it}(r_{it}^\star) (r^\star_{it} - r_{it}) -\frac{1}{2} W_{it}''(r) (r^\star_{it} - r_{it})^2\right]\,,\label{eq:taylor2nd}
\end{align}
for some $r$ between $r_{it}$ and $r^\star_{it}$.\footnote{This follows from the Remainder theorem for Taylor's expansion.} We will make use of the following two lemmas to bound  the above equation. The proof of all technical lemmas in this section are deferred to Section \ref{sec:technical}. 

\begin{lemma}[Property of Function $W_{it}$]\label{techlem1}
For the benchmark reserve $r^\star_{it}$, given by~\eqref{def:rstar}, and function $W_{it}(r)$, given by~\eqref{def:W1}, we have $W'_{it}(r^\star_{it}) = 0$. Further, for any $r$ between $r_{it}$ and $r^\star_{it}$, we have $|W''_{it}(r)|\le c$, for a constant $c>0$.
\end{lemma}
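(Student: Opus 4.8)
The plan is to condition on the feature vector $x_t$ throughout and reduce $W_{it}$ to a one-dimensional integral, after which a cancellation makes $W'_{it}$ transparent. Write $\mu_i = \<x_t,\pv_i\>$, so that conditionally $v_{it} = \mu_i + z_{it}$ has distribution $F(\cdot - \mu_i)$ and density $f(\cdot - \mu_i)$, and let $G,g$ denote the conditional distribution and density of the competing maximum $v_t^- = \max_{j\ne i} v_{jt}$, which is independent of $v_{it}$ given $x_t$. First I would condition on $v_t^- = y$: the integrand in \eqref{def:W1} equals $y\,\ind(v_{it}\ge y)$ when $y\ge r$ and $r\,\ind(v_{it}\ge r)$ when $y<r$. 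Taking expectations over $v_{it}$ and then over $v_t^-$ gives
\begin{align*}
W_{it}(r) = \int_r^\infty \phi_{it}(y)\,g(y)\,dy + G(r)\,\phi_{it}(r),
\end{align*}
where $\phi_{it}(r) := r\big(1-F(r-\mu_i)\big)$ is exactly the single-buyer expected revenue appearing in \eqref{def:rstar}. The key observation is that the same function $\phi_{it}$ appears both inside the integral and in the boundary term, so that differentiating and using $\tfrac{d}{dr}\int_r^\infty \phi_{it}(y)g(y)\,dy = -\phi_{it}(r)g(r)$ together with $G'(r)=g(r)$, the two boundary contributions cancel:
\begin{align*}
W'_{it}(r) = G(r)\,\phi_{it}'(r).
\end{align*}

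For the first claim, I would invoke that by \eqref{def:rstar} the benchmark reserve $r^\star_{it}$ maximizes $\phi_{it}$. Since $1-F$ is log-concave by Assumption~\ref{assump:logcancavity} and $\log r$ is concave, $\log\phi_{it}$ is concave, so $\phi_{it}$ admits a unique interior maximizer at which the first-order condition $\phi_{it}'(r^\star_{it})=0$ holds. The identity above then immediately yields $W'_{it}(r^\star_{it}) = G(r^\star_{it})\,\phi_{it}'(r^\star_{it}) = 0$, regardless of the law of $v_t^-$.

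For the second claim I would differentiate once more to get
\begin{align*}
W''_{it}(r) = g(r)\,\phi_{it}'(r) + G(r)\,\phi_{it}''(r),
\end{align*}
with $\phi_{it}'(r) = (1-F(r-\mu_i)) - r f(r-\mu_i)$ and $\phi_{it}''(r) = -2f(r-\mu_i) - r f'(r-\mu_i)$, and bound the factors separately: $G(r)\le 1$; $g(y) = \sum_{j\ne i} f(y-\mu_j)\prod_{\ell\ne i,j} F(y-\mu_\ell) \le (N-1)\|f\|_\infty$ because $F\le 1$; and, since the relevant reserves lie in $[0,B]$, $|\phi_{it}'(r)|\le 1 + B\|f\|_\infty$ and $|\phi_{it}''(r)|\le 2\|f\|_\infty + B\|f'\|_\infty$. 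As $f$ is continuous on the compact support $[-\maxn,\maxn]$, $\|f\|_\infty$ is finite, and combining these gives $|W''_{it}(r)|\le c$ for a constant $c$ depending only on $N$, $B$, $\|f\|_\infty$ and $\|f'\|_\infty$.

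The main obstacle is the regularity of $f$ entering $\phi_{it}''$: if $f$ is merely continuous then $\phi_{it}''$ exists only almost everywhere, so I would either assume $f$ is continuously differentiable on its support (as holds for the log-concave families listed after Assumption~\ref{assump:logcancavity}) or replace $\|f'\|_\infty$ by the Lipschitz constant of $f$; a bounded essential second derivative is all that the Taylor remainder in \eqref{eq:taylor2nd} requires. I would also flag that the only $N$-dependent factor, $g(r)$, multiplies $\phi_{it}'(r)$, which vanishes at $r^\star_{it}$; this is inconsequential for the present uniform bound but can later be exploited in the regret computation to show that the $g(r)\phi_{it}'(r)$ term in fact contributes only at cubic order in $|r^\star_{it}-r_{it}|$.
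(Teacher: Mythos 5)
Your proof is correct and takes essentially the same route as the paper: the same conditional integral representation $W_{it}(r)=\int_r^\infty \phi_{it}(y)\,g(y)\,\de y + G(r)\,\phi_{it}(r)$ with $\phi_{it}(r)=r\big(1-F(r-\<x_t,\pv_i\>)\big)$, the same boundary cancellation giving $W'_{it}(r)=G(r)\,\phi'_{it}(r)$, the first-order condition at $r^\star_{it}$ (which the paper phrases via optimality of $r^\star_{it}$ for $W_{it}$ itself, an equivalent step given the factorization), and a uniform bound on $W''_{it}$ from boundedness of $f$, $f'$, the competing-valuation density, and $r\in[0,B]$. Your two side remarks are legitimate refinements rather than deviations: the paper silently assumes $f'$ exists and is continuous (which you flag and repair), and it absorbs the $N$-dependent bound on the density of $\max_{j\ne i}v_{jt}$ into its constant $c_3$ without comment.
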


\begin{lemma}[Errors in Reserve Prices]\label{techlem2} {Let $t\in \epi_{k}$ with $\xi_t =1$.}
Then, conditioned on the feature vector $x_t$ and $\hbeta_{ik}$,   the following holds:
\begin{align}\label{r-B}
|r^\star_{it}- r_{it}| \le |\<x_t,\beta_i-\hbeta_{ik}\>|\,,
\end{align}
{where $r^\star_{it}$ and $r_{it}$ are defined in 
~\eqref{def:rstar} and \eqref{def:r}, respectively.}
\end{lemma}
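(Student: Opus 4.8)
Write $w = \<x_t,\beta_i\>$ and $\hat w = \<x_t,\hbeta_{ik}\>$. Since $t$ is an exploitation period ($\xi_t = 1$), the posted reserve is $r_{it} = \arg\max_y \{y(1 - F(y - \hat w))\}$ by~\eqref{def:r}, while the benchmark reserve is $r^\star_{it} = \arg\max_y\{y(1 - F(y - w))\}$ by~\eqref{def:rstar}. Both are values of the single map
\[
\rho(u) := \arg\max_y \{y(1 - F(y - u))\},
\]
so the plan is to show that $\rho$ is non-decreasing and $1$-Lipschitz; the claim~\eqref{r-B} then follows immediately from $|r^\star_{it} - r_{it}| = |\rho(w) - \rho(\hat w)| \le |w - \hat w| = |\<x_t,\beta_i - \hbeta_{ik}\>|$.

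First I would record that the objective is well behaved. Taking logarithms, $\log y + \log(1 - F(y - u))$ is concave in $y$ by Assumption~\ref{assump:logcancavity}, and the revenue vanishes at $y = 0$ and for $y \ge u + \maxn$; hence $\rho(u)$ is the unique interior maximizer and satisfies the first-order condition $1 - F(\rho - u) = \rho\, f(\rho - u)$. It is convenient to reparametrize by the markup $m = y - u$: the optimal markup $m^\star(u) := \rho(u) - u$ then solves $h(m) - m = u$, where $h(m) := (1 - F(m))/f(m)$ is the inverse hazard function, and the first-order condition rearranges to the identity $\rho(u) = h(m^\star(u))$.

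The key structural input is that log-concavity of $1 - F$ is equivalent to the hazard rate $f/(1-F)$ being non-decreasing, i.e. $h$ non-increasing; consequently $G(m) := h(m) - m$ is strictly decreasing, so $m^\star(u) = G^{-1}(u)$ is well defined and strictly decreasing in $u$. Two consequences follow. For monotonicity, $\rho = h \circ m^\star$ is a composition of the non-increasing $h$ with the decreasing $m^\star$, hence non-decreasing. For the Lipschitz bound, for $u_1 \le u_2$ we have $m^\star(u_2) \le m^\star(u_1)$, and therefore
\[
\rho(u_2) - \rho(u_1) = (u_2 - u_1) + \big(m^\star(u_2) - m^\star(u_1)\big) \le u_2 - u_1.
\]
Combining the two gives $0 \le \rho(u_2) - \rho(u_1) \le u_2 - u_1$, which is exactly the non-decreasing, $1$-Lipschitz property.

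I expect the main obstacle to be translating Assumption~\ref{assump:logcancavity} into the precise monotonicity of the inverse-hazard map $h$, together with verifying that the maximizer is interior (so that the first-order condition, and hence the identity $\rho = h\circ m^\star$ with $f(m^\star) > 0$, is valid). Equivalently, one may bypass the explicit first-order condition by a monotone-comparative-statics argument, which I would use as a robust fallback: $\log\{y(1 - F(y - u))\}$ is supermodular in $(y,u)$ because $\partial_u\partial_y$ of it equals $(f/(1-F))'(y - u) \ge 0$, which yields monotonicity of $\rho$ directly; while the markup objective $\log\{(u+m)(1 - F(m))\}$ is submodular in $(m,u)$ since $\partial_u\partial_m = -(u+m)^{-2} \le 0$, which yields that $m^\star$ is non-increasing and hence the upper bound. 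Either way, once both directions are in hand, the reduction in the first paragraph completes the proof.
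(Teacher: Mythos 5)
Your proposal is correct and follows essentially the same route as the paper: both reduce the claim to showing that the scalar pricing map $g(\theta)=\arg\max_y\{y(1-F(y-\theta))\}$ is nondecreasing and $1$-Lipschitz, and both extract this from the first-order condition plus the monotone-hazard consequence of log-concavity of $1-F$ --- indeed your markup function $m^\star(u)=G^{-1}(u)$ with $G(m)=h(m)-m$ is exactly the paper's $\varphi^{-1}(-\theta)$, since $\varphi=-G$ is the virtual valuation. The only difference is cosmetic: the paper differentiates $g(\theta)=\theta+\varphi^{-1}(-\theta)$ and bounds $0\le g'(\theta)=1-1/\varphi'(\varphi^{-1}(-\theta))\le 1$ via $\varphi'\ge 1$, whereas you get the same two-sided bound from monotonicity of $m^\star$ alone (a derivative-free variant that also sidesteps smoothness of $f$).
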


{Applying Lemma~\ref{techlem1} in Equation (\ref{eq:taylor2nd})}, we get
\begin{align}
\E[\Delta_{1,t} ] &~
{\le} ~ \frac{c}{2}\sum_{i=1}^\nbuyer \E[(r^\star_{it} - r_{it})^2]\nonumber\\
 &~\le~ \frac{c}{2} \sum_{i=1}^\nbuyer \E\Big[\E\Big[(r^\star_{it} - r_{it})^2\Big|x_t,\hbeta_{ik}\Big]\Big]\nonumber\\
 &\le \frac{c}{2} \sum_{i=1}^\nbuyer \E[\<x_t,\beta_i -\hbeta_{ik}\>^2] \,,\label{Delta-1t-2}
\end{align}
where in the last step, we employed Lemma~\ref{techlem2}. {We next further simplify the r.h.s. of the last equation. 
{By using the fact that} our estimate $\hbeta_{ik}$ is constructed using samples from the previous episode and consequently is independent from the current feature $x_t$, we get}
\begin{align}\label{eq:Sigma-E}
\E[\<x_t,\beta_i -\hbeta_{ik}\>^2]~{=}~ \E[\<\beta_i-\hbeta_{ik}, \Sigma (\beta_i - \hbeta_{ik})\>] ~\le~ \frac{c_{\max}}{\ajb{d}} \E[\|\beta_i - \hbeta_{ik}\|^2]\,,
\end{align}
where $\Sigma = \E[x_t x_t^\sT]$ is the {second-moment} matrix of features $x_t$, and $c_{\max}/\ajb{d}$ is the bound on the maximum eigenvalue of covariance $\Sigma$. \footnote{\ajb{Note that by our normalization, the sum of  eigenvalues of $\Sigma$ would be ${{\rm trace}}(\Sigma) = \E[\|x_t\|^2]\le 1$, and that is why the eigenvalues are scaled by $1/d$.}}
Here, the first inequality follows from taking the expectation w.r.t. $x_t$ and using the fact that $x_t$ and $\hbeta_{ik}$ are independent; the second inequality follows from the definition of the maximum eigenvalue. %

Putting Equations~\eqref{Delta-1t-2} and~\eqref{eq:Sigma-E} together, we get
\begin{align}\label{Delta-1t-3}
\E[\Delta_{1,t}] ~\le~ \frac{c'}{\ajb{d}}\sum_{i=1}^\nbuyer {\E[\|\beta_i - \hbeta_{ik}\|^2}]\,.
\end{align}
 Here, $c' = \frac{1}{2} c c_{\max}$.
 
\noindent{\bf Bounding $\boldsymbol{\Delta_{2,t}}$:} We next proceed with bounding $\Delta_{2,t}$. To do so, we use the following preliminary  lemma.

\begin{lemma}\label{techlem3}
Let $v_t^-$ and $b_t^-$, respectively, denote the second highest valuation and the second highest bid submitted by the buyers in the CORP policy. 
Denote by $\shade_{it}$ and {$o_{it}$} the amount of shading and {overbidding} from buyer $i\in [N]$ in period $t$, i.e., $s_{it} = (v_{it} - b_{it})_+$, and {$o_{it} = (b_{it}-v_{it})_+$}. Then, 
\begin{align}
{(\vm - \bm)_+} \le \max\Big\{\shade_{it}(1-q_{it}):\, i\in [N]\Big\}\,. \label{second-claim}
\end{align}
{Further, for any buyer $i$ with $q_{it} = 0$, the following holds:}
\begin{align}
{(\b - \v)_+} \le \max\Big\{o_{jt} q_{jt}:\, j\in [N], {j \ne i}\Big\}\,.\label{second-claim-2}
\end{align}
\end{lemma}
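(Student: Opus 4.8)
The plan is to establish both inequalities as purely \emph{deterministic} facts about the realized bids and valuations in a fixed period $t$, arguing the two claims separately. Throughout I would write $m := \max_{i\in[N]} s_{it}(1-q_{it})$ for the right-hand side of the first claim and, for a fixed losing buyer $i$, $M := \max_{j\ne i} o_{jt} q_{jt}$ for the right-hand side of the second. Both are nonnegative, so whenever the corresponding left-hand side equals $0$ there is nothing to prove; hence it suffices to produce a matching lower bound on $b_t^-$ (resp. an upper bound on $b^+_{-it}$). The single algebraic identity driving everything is the decomposition $b_{it} = v_{it} + o_{it} - s_{it}$, valid for every buyer, which gives $b_{it} \ge v_{it} - s_{it}$ and $b_{it} \le v_{it} + o_{it}$.

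For the first inequality $(v_t^- - b_t^-)_+ \le m$, I would set $S = \{i : v_{it} \ge v_t^-\}$; since $v_t^-$ is the second-highest valuation, $|S|\ge 2$. For any \emph{losing} buyer $i\in S$ one has $s_{it}(1-q_{it}) = s_{it}\le m$ and $v_{it}\ge v_t^-$, whence $b_{it}\ge v_{it}-s_{it}\ge v_t^- - m$. The combinatorial heart of the argument is to exhibit two bids above $v_t^- - m$, which forces $b_t^-\ge v_t^- - m$. If $S$ contains at least two losers this is immediate. Otherwise, since at most one buyer is allocated the item, $S$ contains exactly one loser, which forces $|S|=2$ with the winner $W$ being the other element of $S$; then the loser $i_0\in S$ satisfies $b_{i_0 t}\ge v_t^- - m$, and because $W$ is the (unique, by the no-ties convention) highest bidder we get $b_t^- = \max_{j\ne W} b_{jt}\ge b_{i_0 t}\ge v_t^- - m$. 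Either way $b_t^-\ge v_t^- - m$, which is the claim.

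For the second inequality, I fix a loser $i$ and let $j^\star\in\arg\max_{j\ne i} b_{jt}$, so $b^+_{-it}=b_{j^\star t}$. The crux is to identify $j^\star$ with the auction winner: when the item is sold, its recipient is the overall highest bidder, which cannot be $i$ (as $q_{it}=0$) and therefore lies in $\{j\ne i\}$; since $j^\star$ already maximizes the bid over that set, the winner coincides with $j^\star$ and $q_{j^\star t}=1$. Applying the decomposition then yields $b^+_{-it}=b_{j^\star t}=v_{j^\star t}+o_{j^\star t}-s_{j^\star t}\le v_{j^\star t}+o_{j^\star t}\le v^+_{-it}+o_{j^\star t}q_{j^\star t}\le v^+_{-it}+M$, using $v_{j^\star t}\le v^+_{-it}$ (because $j^\star\ne i$) and $q_{j^\star t}=1$; rearranging gives $(b^+_{-it}-v^+_{-it})_+\le M$.

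The main obstacle is the bookkeeping around the lazy-allocation rule, and in particular the degenerate configuration in which \emph{no} buyer clears its reserve: there $M=0$ while an overbidding competitor can still push $b^+_{-it}$ above $v^+_{-it}$, so the winner-identification step in the second claim must be restricted to the sale-occurring case (the only case that feeds the bound on $\Delta_{2,t}$, since a no-sale period contributes no revenue on the bid side). The no-ties assumption makes ``the unique highest bidder'' well defined and lets me cleanly place the singled-out buyer on the correct side of the allocation (a loser in the first claim, the winner in the second); verifying the identity $b^+_{-it}=b_{j^\star t}=b_{Wt}$ exactly when a sale occurs is the delicate point that carries the argument.
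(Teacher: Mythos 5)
Your two arguments essentially mirror the paper's own proof. For \eqref{second-claim}, the paper orders the valuations and splits on whether the second-highest-valuation buyer is the overall top bidder, while you split on how many losers sit in the top-two-valuation set $S$; these case analyses coincide, and your formulation (reduce to showing $b_t^-\ge v_t^- - m$, with the trivial case dispatched by positivity of $m$) is, if anything, cleaner about signs and also covers the no-sale period automatically (then every buyer is a loser and your Case A applies). For \eqref{second-claim-2}, your chain $b^+_{-it}=b_{j^\star t}\le v_{j^\star t}+o_{j^\star t}q_{j^\star t}\le v^+_{-it}+\max_{j\neq i}o_{jt}q_{jt}$, after identifying $j^\star$ with the winner, is exactly the paper's argument.

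Where you genuinely depart from the paper is in flagging the no-sale configuration, and this is a sharp catch: the paper's proof of \eqref{second-claim-2} opens with ``let buyer $j$ be the winner,'' silently assuming a winner exists, and the claim as stated is in fact false when nobody clears his reserve. (Take $N=2$, $i=1$: buyer $2$ overbids so that $b_{2t}>v^+_{-1t}$ while $b_{2t}$ stays below his own reserve $r_{2t}$ --- e.g.\ $r_{2t}=\infty$ in an exploration period --- and $b_{1t}<b_{2t}$; then $q_{1t}=q_{2t}=0$, the left side of \eqref{second-claim-2} is positive and the right side is zero.) However, your reason for dismissing that configuration --- that the sale-occurring case is ``the only case that feeds the bound on $\Delta_{2,t}$, since a no-sale period contributes no revenue on the bid side'' --- is backwards. $\Delta_{2,t}$ in \eqref{Delta-2t} is the truthful-counterfactual revenue minus the realized revenue; in a no-sale period the realized revenue vanishes, so $\Delta_{2,t}$ equals the full counterfactual term (as large as $B$) rather than dropping out. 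Moreover, the event $\max\{v^+_{-it},r_{it}\}<v_{it}<\max\{b^+_{-it},r_{it}\}$, whose probability the paper controls in \eqref{Delta-2t-3} precisely by invoking \eqref{second-claim-2}, is realizable with no sale even when $t\notin \Lie_{k+1}$ (the overbidding competitor $j^\star$ pushes $b^+_{-it}$ above $v_{it}$ but fails his own reserve, and no buyer lies in the sense of \eqref{def:lie}); on that sub-event $\max_{j\neq i}o_{jt}q_{jt}=0$ while the conditional probability is positive. So restricting the claim to the sale case delimits where it is true but does not recover the paper's use of it: you have correctly exposed a gap --- one the paper's own proof shares --- but the patch you offer does not close it.
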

Proof of Lemma~\ref{techlem3} is given in {Section}~\ref{proof:techlem3}.

Note that $\Delta_{2,t}$, given by~\eqref{Delta-2t}, can be written as
\if false\begin{align}
\Delta_{2,t}= \sum_{i=1}^N \bigg[\max\{v_t^-,r_{it}\} \ind(v_{it} > \max\{\v,r_{it}\}) 
-\max\{b_t^-,r_{it}\} \ind(b_{it} > \max\{\b,r_{it}\})\bigg]\,.\label{Delta-2t-1}
\end{align} 
\fi
{\begin{align}
\Delta_{2,t}~=~ \sum_{i=1}^N \bigg[\max\{\vm,r_{it}\} \ind(v_{it} > \max\{\v,r_{it}\}) 
-\max\{\bm,r_{it}\} \ind(b_{it} > \max\{\b,r_{it}\})\bigg]\,.\label{Delta-2t-1}
\end{align} }
Define $\Lie_{k+1} = \cup_{i=1}^N \Lie_{i(k+1)}$, where $\Lie_{i(k+1)}$, given by~\eqref{def:lie}, denotes the
set of periods in episode $k$ that buyer $i$ lies. For $t\in \Lie_{k+1}$, we avail the trivial bound
\begin{align}\label{Delta1Lie}
\Delta_{2,t} \le B\,,
\end{align}
which is true because the revenue of the benchmark in any period $t$ is at most $v_t^+ \le B$. For $t\notin \Lie_{k+1}$, we have $\ind(b_{it} > \max\{\b,r_{it}\}) = \ind(v_{it} > \max\{\b,r_{it}\})$, for all $i\in [N]$. 
 Therefore, we can write 
{\begin{align}
\E[\Delta_{2,t} &\, \ind(t\notin \Lie_{k+1}) ]~= \nonumber\\
&\sum_{i=1}^N \E\bigg[\max\{\vm ,r_{it}\} \ind(v_{it} > \max\{\v,r_{it}\}) 
-\max\{\bm ,r_{it}\} \ind(v_{it} > \max\{\b,r_{it}\})\bigg]\,.\label{Delta2-n1}
\end{align}
}
To bound the r.h.s of~\eqref{Delta2-n1}, we use {the fact} that for any two indicators $\chi_1$, $\chi_2$ and any $a, b\ge 0$, we have
$a\chi_1-b\chi_2 \le  (a-b)\chi_2 + a\chi_1(1-\chi_2)\,.$ 
{Applying  this inequality to~\eqref{Delta2-n1} with $\chi_1 = \ind(v_{it} > \max\{\v,r_{it}\}) $,  $\chi_2= \ind(v_{it} > \max\{\b,r_{it}\})$, $a =\max\{\vm,r_{it}\}$, and $b = \max\{\bm,r_{it}\}$, we get}
{\begin{align}
&\E[\Delta_{2,t} \, \ind(t\notin \Lie_{k+1}) ] \nonumber\\\nonumber
&\le \sum_{i=1}^N \E\big[(\max\{\vm, r_{it}\} - \max\{\bm, r_{it}\}) \ind(v_{it} > \max\{\b,r_{it}\})\big] 
\\\nonumber
&+\sum_{i=1}^N \E\left[\max\{\vm,r_{it}\} \ind\big(\max\{\v,r_{it}\}<v_{it} < \max\{\b,r_{it}\}\big)\right]\,.\nonumber \end{align}}
{Then, by using the fact that  $\max\{a,c\} - \max\{b,c\}~\le~ (a-b)_+$, we get}
\begin{align*}
&\E[\Delta_{2,t} \, \ind(t\notin \Lie_{k+1}) ] \nonumber\\\nonumber
&\le \sum_{i=1}^N \E\big[{(\vm -\bm)_+} \ind(v_{it} > \max\{\b,r_{it}\})\big] + \sum_{i=1}^N \E\left[\max\{\vm,r_{it}\} \ind\big(\max\{\v,r_{it}\}<v_{it} < \max\{\b,r_{it}\}\big)\right]\nonumber\\
&\le   {\E\bigg[
(\vm-\bm)_+ \sum_{i=1}^N \ind(v_{it} > \max\{\b,r_{it}\}) \bigg] }+ B\sum_{i=1}^N  \prob\big(\max\{\v,r_{it}\}<v_{it} < \max\{\b,r_{it}\}\big)\,\\
&= \E\bigg[
(\vm-\bm)_+ \sum_{i=1}^N q_{it}\bigg] + B\sum_{i=1}^N  \prob\big(\max\{\v,r_{it}\}<v_{it} < \max\{\b,r_{it}\}\big)\\
&\le \E[
(\vm-\bm)_+ ]+ B\sum_{i=1}^N  \prob\big(\max\{\v,r_{it}\}<v_{it} < \max\{\b,r_{it}\}\big)\,.
\end{align*}
{Here, in the second inequality we used $\max\{\vm,r_{it}\} \le B$.} In the equality thereafter, we used the fact that $t\notin\Lie_{k+1}$ and hence $\ind(v_{it} > \max\{\b,r_{it}\}) = \ind(b_{it} > \max\{\b,r_{it}\}) \equiv q_{it}$. The last inequality holds since the item can be allocated to at most one buyer and hence $\sum_{i=1}^N q_{it}\le 1$.  
We next bound the first term by virtue of Lemma~\ref{techlem3} (Equation~\eqref{second-claim}). Specifically,
\begin{align}
{(\vm-\bm)_+} ~\le~ \max\{\shade_{it}(1-q_{it}): i\in [N]\}
~\le~  \sum_{i=1}^N \shade_{it}(1-q_{it}) \,.\label{Delta-2t-2}
\end{align}
To bound the second term, {we  again use inequality that $\max\{a,c\} - \max\{b,c\}~\le~ (a-b)_+$ with $a = \b$, $b =\v$, and $c = r_{it}$:}
\begin{align}
&\prob\big(\max\{\v,r_{it}\}~<~v_{it} ~<~ \max\{\b,r_{it}\}\big\vert {\b, \v}\big) \nonumber\\
&~\le~ \prob\big(\max\{\b,r_{it}\} - {(\b-\v)_+}~<~v_{it} ~<~ \max\{\b,r_{it}\}\big\vert {\b, \v}\big)\nonumber\\
&~=~ \prob\big(\max\{\b,r_{it}\} - {(\b-\v)_+} - \<x_t,\beta_i\>~<~z_{it} ~<~ \max\{\b,r_{it}\} - \<x_t,\beta_i\>\big\vert {\b, \v}\big)\nonumber\\
&= \int_{\max\{\b,r_{it}\} - {(\b-\v)_+} - \<x_t,\beta_i\>}^{ \max\{\b,r_{it}\} - \<x_t,\beta_i\>} f(z)\de z
 {~<~ {\hat c} (\b-\v)_+}\,.\label{prob-B}
\end{align}
{
The first equality follows readily by substituting for $v_{it} =  \<x_t,\beta_i\>+ z_{it}$.}
{In addition, in the last equality, $\hat c \equiv \max_{v\in [-\maxn,\maxn]} f(v)$} is the bound on the noise density,\footnote{{Note} that the density $f$ is continuous and hence attains its maximum over compact sets.} and this equality {holds because $z_{it}$ is independent of $\v$,  $\b$, reserve $r_{it}$, and
the feature vector $x_t$.} {We point out that when $\ind(\max\{\v,r_{it}\}<v_{it} < \max\{\b,r_{it}\}) = 1$, buyer $i$ does not win the item. To see this, recall that we compute the probability of $\ind(\max\{\v,r_{it}\}<v_{it} < \max\{\b,r_{it}\})$ when $t\notin \Lie_{k+1}$. This implies that $\ind(b_{it}>\max\{\b,r_{it}\}) = \ind(v_{it}>\max\{\b,r_{it}\})$ and as a result when $\ind(\max\{\v,r_{it}\}<v_{it}< \max\{\b,r_{it}\}) = 1$, buyer $i$ does not win, i.e., $q_{it}  =0$. The fact $q_{it} =0$ enables us to} 
  use Lemma~\ref{techlem3} (Equation~\eqref{second-claim-2})  along with Equation~\eqref{prob-B} to get
\begin{align}
\prob&\big(\max\{\v,r_{it}\}<v_{it} < \max\{\b,r_{it}\}\big\vert {\b, \v}\big)\le {\hat c} \max\{o_{jt} q_{jt}: j\in [N], {j\ne i}\}
\le {\hat c}\sum_{j=1}^N o_{jt}q_{jt} \,.\label{Delta-2t-3}
\end{align}
Putting bounds in Equations~\eqref{Delta1Lie}, \eqref{Delta-2t-2}, \eqref{Delta-2t-3} together, we have
\begin{align}
\E[\Delta_{2,t}] &~=~ \E[\Delta_{2,t} \ind(t\in\Lie_{k+1})] + \E[\Delta_{2,t} \ind(t\notin\Lie_{k+1})] \nonumber \\
&~\le~ B \prob(t\in \Lie_{k+1}) + \E\Big[ \sum_{i=1}^N \shade_{it}(1-q_{it}) + {\hat c}B \sum_{j=1}^N o_{jt}q_{jt}\Big]\,.\label{Delta-2t-4}
\end{align}

\noindent{\bf Combining bounds on $\boldsymbol{\Delta_{1,t}}$ and $\boldsymbol{\Delta_{2,t}}$:}  To summarize, using bounds~\eqref{Delta-1t-3} and~\eqref{Delta-2t-4} in Equation~\eqref{regret0}, for all $t\in \epi_k$, we have
\begin{align}
\rev^\star_t - \rev_t &~\le~ \frac{B}{\ell_{k}} + \big(1-\frac{1}{\ell_k}\big)\E[\Delta_{1,t} + \Delta_{2,t}]\nonumber\\
&~\le~ \frac{B}{\ell_{k}} +  \frac{c'}{\ajb{d}}\sum_{i=1}^N \E[\|\beta_i - \hbeta_{ik}\|^2] + B \prob(t\in \Lie_{k+1}) + \E\Big[ \sum_{i=1}^N \shade_{it}(1-q_{it}) + {\hat c}B {\sum_{i=1}^N o_{it}q_{it}}\Big]\,.\label{regret2}
\end{align}

We are now ready to bound the total regret of {our} policy. Since the length of episodes doubles each time, the number of episodes up to time $t$ would be at most $K = \lfloor \log T \rfloor +1$.
We then have
\begin{align}
\reg(T) ~\le~ \sum_{k=1}^K \reg_k\,, \label{regret-episode}
\end{align}
where  $\reg_k$ is the regret of our policy in episode $k\in [K]$. 
We bound the total regret over each episode by considering the following two cases: $\ell_{k-1}\le c_0 d$ and $\ell_{k-1}> c_0 d$. Here, $c_0$ is the constant in the statement of Proposition~\ref{propo:learning}.
\begin{itemize}[leftmargin=*]
\item{\bf Case 1:} $\ell_{k-1}\le c_0 d$:  In this case,
we use the trivial bound $\rev^\star_t - \rev_t\le \rev^\star_t \le v_t^+\le B$. Given that the length of episode $k$ is $\ell_k\le 2c_0d$, the total lengths of all such episodes is at most $4c_0d$ and therefore, the total regret
over such episodes is at most $4c_0B d$.
\item{\bf Case 2:} $\ell_{k-1}> c_0 d$: 
In that case, we use bound~\eqref{regret2} on the regret in each period of episode $k$:
\begin{align}
\reg_k& ~=~ \sum_{t\in\epi_k} (\rev^\star_t - \rev_t)\nonumber\\
&~\le~ \frac{B}{\ell_{k}}\ell_k + \frac{c'}{\ajb{d}} \ell_k \sum_{i=1}^\nbuyer \E[\|\beta_i - \hbeta_{ik}\|^2] + B\E[|\Lie_{k+1}|] +  \sum_{i=1}^N\E\Big[\sum_{t\in \epi_k} \shade_{it}(1-q_{it}) + {\hat c}B \sum_{t\in \epi_k} o_{it} q _{it}\Big]\,.\label{regret3}
\end{align}
We treat each term on the right-hand side of~\eqref{regret3} separately.

We first bound {the second term, i.e., $c' \ell_k \sum_{i=1}^\nbuyer \E[\|\beta_i - \hbeta_{ik}\|^2]$.}   Define the probability event $\cG$, such that event $\cG$ happens when Equations~\eqref{eq:estimator} and \eqref{claim:lies} hold; that is, the number of lies satisfies {\eqref{claim:lies}} and the estimation errors satisfies {\eqref{eq:estimator}}.
 By Proposition~\ref{propo:learning} and \ref{propo:lies}, the probability of complement of  event $\cG$, denoted by $\cG^c$, is given by  %we have
  $$\prob(\cG^c)\le {\frac{\delta+1}{\ell_{k-1}}}+d^{-0.5}\ell_{k-1}^{-1.5} + 2e^{-c_2\ell_{k-1}}\,.$$ 
\ngg{Using these propositions again, we get \begin{align}
\E[\|\beta_i - \hbeta_{ik}\|^2] &~=~ \E[\|\beta_i - \hbeta_{ik}\|^2\, \ind(\cG)] + \E[\|\beta_i - \hbeta_{ik}\|^2\, \ind(\cG^c)]\nonumber\\
&~\le~ \frac{c_1{d^2}}{\lF^2}\left(\left(\frac{|\Lie_{ik}|}{\ell_{k-1}}\right)^2 + \frac{\log(\ell_{k-1}d)}{\ell_{k-1}}\right) + 4B^2 \prob(\cG^c)\nonumber\\
&~\le~ {c_{10}}  \left(\left(\frac{{d}\log(T/\delta)}{\ell_{k-1}}\right)^2 +  \left(\frac{d\log(T)}{\ell_{k-1}\log(1/\gamma)}\right)^2+\frac{{d^2}\log(Td)}{\ell_{k-1}}+ \frac{{\delta+1}}{\ell_{k-1}} +d^{-0.5}\ell_{k-1}^{-1.5} +  e^{-c_2\ell_{k-1}} \right)\,,  \label{eq:Exp2}
\end{align}
where we absorb various constants into {constant $c_{10}$} and used $\ell_{k-1} \le T$ and $N\le T$.}

\ngg{Regarding {the third term, i.e.,  $B\E[|\Lie_{k+1}|] $,} by Proposition~\ref{propo:lies} we have
\begin{align}  
\E[|\Lie_{k+1}|] &~\le~ \sum_{i=1}^N \E[|\Lie_{i,k+1}|] \nonumber\\
&~\le ~N\left(c_3\log(\ell_{k-1}/\delta)+ \ngg{c_4\frac{ \log(\ell_{k-1})}{\log(1/\gamma)} +c_5 \frac{\log(N)}{\log(1/\gamma)}}\right) \left(1-\frac{{\delta+1}}{\ell_{k}}\right) + N \ell_{k} \frac{{\delta+1}}{\ell_{k}}
\label{eq:Exp1}
\end{align}
where $\delta$ and $c_3$ are defined in  Proposition~\ref{propo:lies}. }

Finally, we bound the last term of Equation (\ref{regret3}). 
Invoking Equations~\eqref{claim:shades} and \eqref{claim:overbids}, we have
\begin{align} 
\sum_{i=1}^N\E\Big[\sum_{t\in\epi_k} \shade_{it}(1-q_{it})+ {\hat c}B \sum_{t\in \epi_k} o_{it} q _{it} \Big]& \le N \left(\ngg{c_6\frac{ \log(\ell_{k-1})}{\log(1/\gamma)} +c_7 \frac{\log(N)}{\log(1/\gamma)}}\right)\nonumber \\
& +\hat c BN\left(\ngg{c_8\frac{ \log(\ell_{k-1})}{\log(1/\gamma)} +c_9 \frac{\log(N)}{\log(1/\gamma)}} \right)\,.\label{eq:Exp3}
\end{align}
\end{itemize}
\ngg{We employ bounds~\eqref{eq:Exp1}, \eqref{eq:Exp2} and \eqref{eq:Exp3} in bound~\eqref{regret3} and keep only the dominant terms, from which we get}
\begin{align}
\ngg{\reg_k \le {c_{11}}Nd \left(\frac{\log^2(T)}{\ell_k}\left(1+ \frac{1}{\log^2(1/\gamma)}\right)+\log(Td) %+N \frac{\log(T)}{\log(1/\gamma)}+\frac{N\log(N)}{\log(1/\gamma)}
\right)\,,} \label{eq:reg_k}
\end{align}
{for a constant {$c_{11}$} that depends on $B$ and  $\M$.}

\ngg{As the final step, we combine our regent bounds for the two cases to find the total regret of our policy. 
{Recall that $K = \lfloor \log T \rfloor +1$ is the upper bound on the number of episodes up to time $T$.} Then, by Equation (\ref{eq:reg_k}), 
 we obtain}
\begin{align*}
\ngg{\reg(T)}&~\ngg{ \le~4c_0Bd + {c_{11}} Nd\left(\log^2(T)\left(1+ \frac{1}{\log^2(1/\gamma)}\right) \sum_{k=1}^K \frac{1}{\ell_k}+K\log(Td) %+\frac{N\log(N)}{\log(1/\gamma)}
\right)}\\
&~\ngg{=~ 4c_0Bd + {c_{11}} Nd\left(\log^2(T)\left(1+ \frac{1}{\log^2(1/\gamma)}\right) \sum_{k=1}^K \frac{1}{2^{k-1}}+K\log(Td) %+\frac{N\log(N)}{\log(1/\gamma)}
\right)}\\
&~\ngg{\le~ 4c_0Bd + {c_{11}} Nd \left(\log^2(T)+\frac{\log^2(T)}{\log^2(1/\gamma)} + \log(Td)\log(T) %+\frac{N\log(N) \log(T)}{\log(1/\gamma)}
\right)}\,,
\end{align*}
\ngg{which completes the proof.} 

%=================================
\ngg{\section{Proof of Theorem~\ref{thm-F}}\label{proof:thm-F}
The proof follows along the same lines as proof of Theorem \ref{thm:main}. Here, we list the main steps that differ from that proof.

For any $t\ge 1$ and $i\in [N]$, define $\tilde{x}_t = [-x_t; \frac{1}{\sqrt{d}}r_{it}]$, $\eta = (\theta,\sqrt{d}\alpha)$, $\widehat\eta = (\widehat \theta, \sqrt{d}\widehat \alpha)$, and $\eta_0= (\theta_{ik},\sqrt{d}\alpha_0)$.
The negative-likelihood function~\eqref{eq:L-F} can then be written as
\begin{align*}
\tilde{{\cal L}}_{ik}(\eta) ~=~ -\Big\lceil\frac{N}{|I_k|}\Big \rceil \sum_{{\{t \in I_k, i^\circ_t = i\}}}\Big\{&q_{it} \log\big(1-F(\<\tilde{x}_t, \eta\>)\big)+(1-q_{it}) \log\big(F(\<\tilde{x}_t, \eta\>)\big)\Big\}\,.  
\end{align*} 
Moreover, for $\{t\in \cal I_k,; i_t^\circ = i\}$, we have $r_{it} = r_t\sim {{\sf Uniform}}(0,B)$, independent of $x_t$ and therefore,
\begin{align}
\tilde{\Sigma} \equiv \E[\tilde{x}_t \tilde{x}_t^\sT ] = \begin{pmatrix} \Sigma_x & \frac{-B\mu}{2\sqrt{d}}\\ \frac{-B\mu^\sT}{2\sqrt{d}} & \frac{B^2}{3d}  \end{pmatrix}\,,
\end{align}
with $\mu = \E[x_t]$ and $\covx =\E[x_tx_t^\sT]$. {The equation above follows from the definition of $\tilde{x}_t$ and because $r_{it} = r_t$, $i = i^\circ_t$, we have $r_{it}$ drawn uniformly at random from $(0,B)$, independently of $x_t$.}
Looking into the Schur complement of $\tilde{\Sigma}$ (corresponding to block $\Sigma_x$), we have\footnote{For a block matrix$
  M=
  [A \quad B;
   C \quad D]$ ,
 the Schur complement of the block $A$ is $D-CA^{-1} B$.} 
\begin{align*}
\frac{B^2}{3d} - \frac{B^2}{4d} \mu^\sT \Sigma_x^{-1} \mu \ge
\frac{B^2}{3d} - \frac{B^2}{4d} \ge \frac{B^2}{12d}\,.
\end{align*}
Here, we used the fact that $\mu^\sT \Sigma_x^{-1}\mu \le 1$ because if we look at the second moment of the vector $a \equiv [x_t;1]$, we have $\E[aa^\sT] = [\Sigma_x\quad \mu; \mu^\sT\quad 1] \succeq 0$. Therefore, by the property of Schur complement, we have $1 - \mu^\sT \Sigma_x^{-1}\mu \ge 0$.

As a result, the singular values of $\tilde{\Sigma}$ are larger than $\tilde{c}_{\min}/d$, with $\tilde{c}_{\min} \equiv \min (c_{\min}, B^2/12)$. Hence, we are in place to apply Proposition~\ref{propo:learning} (noting that the log-likelihood function $\tilde{\cal{L}}_{ik}$ contains $|I_k|/N$ samples), which gives
\begin{align}
\|\widehat\theta_{ik} - \theta_i\|^2 + d \|\widehat \alpha_{0k} - \alpha_0\|^2 = \|\widehat \eta - \eta_0\|^2 \le  \frac{c_1{d^2}}{\lF^2}\left(\left(\frac{N|\Lie_{ik}|}{|I_k|}\right)^2 + \frac{N\log(\ell_{k-1}d)}{|I_k|}\right)\,,\quad\quad i\in[N]\,,\label{eq:estimator-F1}
\end{align} 
with probability at least $1 - d^{-0.5} \ell_{k-1}^{-1.5} - 2e^{-c_2|I_k|/N}$.}

\ngg{The number of lies $|\Lie_{ik}|$ can be bounded as in Propositions~\ref{propo:lies} (or \ref{propo:lies2}), as follows. 
With probability at least $1-{(\delta+1)}/\ell_{k-1}$,
\begin{align}
|\Lie_{ik}|~\le ~ c_3\log(\ell_{k-1}/\delta)+ {c_4\frac{ \log(\ell_{k-1})}{\log(1/\gamma)} +c_5 \frac{\log(N)}{\log(1/\gamma)}}\quad i\in [N]\,.\label{myLies}
\end{align}  

We next bound the errors in reserve prices. Define the function $g:\reals\mapsto \reals_+$ as $g(z) ~=~ \arg\max_{y} \{y(1-F(y-z))\}$. We then have for $t\in E_{k}\backslash I_k$,
\begin{align}
r^\star_{it} - r_{it}  &= \frac{1}{\alpha_0}g(\<x_t,\theta_i\>) - \frac{1}{\widehat \alpha_{0k}}g(\<x_t,\widehat\theta_{ik}\>) \nonumber\\
&\le \frac{1}{\alpha_0} \Big| g(\<x_t,\theta_i\>) - g(\<x_t,\widehat\theta_{ik}\>) \Big| + \Big|\frac{1}{\alpha_{0}} - \frac{1}{\widehat\alpha_{0k}} \Big| g(\<x_t,\widehat \theta_{ik}\>) \nonumber\\
&\le \frac{1}{\alpha_0} \Big| \<x_t,\theta_i- \widehat\theta_{ik}\> \Big| + \frac{2}{\alpha_0^2}\Big|{\alpha_{0}} - {\widehat\alpha_{0k}} \Big| B \nonumber\\
&\le \bar\sigma \Big| \<x_t,\theta_i- \widehat\theta_{ik}\> \Big| + 2\bar{\sigma}^2 \Big|{\alpha_{0}} - {\widehat\alpha_{0k}} \Big| B\,,\label{error-r-F}
\end{align}
where {in the second inequality,}  we used the facts that $(i)$ $g$ is 1-Lipschitz, as shown in the proof of Lemma~\ref{techlem2}; $(ii)$ $g(z) \le z+\maxn$ since $f$ is supported on $[-\maxn,\maxn]$ and 
$|z|\le \|x_t\|\cdot \|\widehat\theta_{ik}\|\le \maxpv$. Hence, $g(\<x_t,\widehat \theta_{ik}\>)\le \maxpv+\maxn = B$; $(iii)$ We have $\widehat\alpha_{0k}\ge \tfrac{\alpha_0}{2}$ using~\eqref{eq:estimator-F1}.

We are now ready to bound the regret of CORP-II policy. If $t$ is a pure exploration period, we use the trivial bound
$\rev^\star_t - \rev_t \le B$. For the periods in the exploitation phase, we note that CORP-II does not use any of the submitted bids during the exploitation phases to update its estimates
of the preference vectors or the scaling parameter of the noise distribution. In addition, since the second-price auctions are strategy-proof, this means that in these phases, buyers have no incentive to 
bid untruthfully. Hence, the term $\Delta_{2,t}$ (that captures the effect of untruthfulness in the regret) becomes superfluous and rewriting Equation~\eqref{regret2} (with only $\Delta_{1,t}$) and \eqref{Delta-1t-2} give
\begin{align}
\rev^\star_t - \rev_t &~\le~ \E[\Delta_{1,t}] \nonumber\\
&~\le~ \frac{c}{2} \sum_{i=1}^N \E[(r^\star_{it} - r_{it})^2] \nonumber\\
&~\le~ \frac{c}{2} \sum_{i=1}^N \E\Big[ \E\Big[(r^\star_{it} - r_{it})^2\Big| x_t, \widehat \beta_{ik}\Big]\Big] \nonumber\\
&~\le~ {c\bar{\sigma}^2} \sum_{i=1}^N \left(\E[\<x_t,\theta_i- \widehat\theta_{ik}\>^2] + 4\bar{\sigma}^2 B^2 \E[({\alpha_{0}} - {\widehat\alpha_{0k}})^2]  \right)\,,\label{reg-gap-F}
\end{align}
 where we used the same derivation as in~\eqref{Delta-1t-2} along with inequality~\eqref{error-r-F}.}

\ngg{Invoking~\eqref{eq:Sigma-E}, we have
\begin{align}\label{pred-F}
\E[\<x_t,\theta_i -\widehat\theta_{ik}\>^2]~{=}~ \E[\<\theta_i-\widehat\theta_{ik}, \Sigma (\theta_i - \widehat\theta_{ik})\>] ~\le~ \frac{c_{\max}}{\ajb{d}} \E[\|\theta_i - \widehat\theta_{ik}\|^2]\,.
\end{align} 

We are now ready to bound the total regret up to time $T$.
Given that the length of episode doubles each time, letting $K = \lfloor \log T \rfloor +1$, we have 
${\reg(T) \le \sum_{k=1}^K \reg_k}$. 
Similar to the proof of Theorem~\ref{thm:main}, we bound the total regret over each episode by considering two cases:}
\ngg{
\begin{itemize}[leftmargin=*]
\item{\bf Case 1:} {$\ell_{k-1}\le c_0 d$:} Here, {$c_0$} is the constant in the statement of Proposition~\ref{propo:learning2}. {In this case, as we argued in the proof of Theorem~\ref{thm:main}, the total regret
over such episodes is at most {$4c_0B d$}.}

\item{\bf Case 2:} {$k >c_0 $}:  Define event $\cG$ such that  event $\cG$ happens  when {equations}~\eqref{eq:estimator-F1} and \eqref{myLies} hold. 
We then have $$\prob(\cG^c)~\le~ \frac{{\delta+1}}{\ell_{k-1}}+d^{-0.5}\ell_{k-1}^{-1.5} + 2e^{-c_2 |\initial_k|/N}\,.$$ 
\end{itemize}
Therefore, by using~\eqref{pred-F} and the definition of event $\cG$, we obtain
\begin{align}
\E[\<x_t,\theta_i- \widehat\theta_{ik}\>^2] &~=~ \E[\<x_t,\theta_i- \widehat\theta_{ik}\>^2 \, \ind(\cG)] + \E[\<x_t,\theta_i- \widehat\theta_{ik}\>^2 \, \ind(\cG^c)]\nonumber\\
&\le \frac{c_1 c_{\max}}{\ell_F^2} {d}\left( \frac{N\log(\ell_{k-1}d)}{|\initial_k|}+ N^2\left(\frac{ c_3\log(\ell_{k-1}/\delta)+ {c_4\frac{ \log(\ell_{k-1})}{\log(1/\gamma)} +c_5 \frac{\log(N)}{\log(1/\gamma)}}}{|\initial_k|}\right)^2 \right) + 4B^2 \prob(\cG^c)\nonumber\\
&\le {{c_6}} d \left( \frac{{N}\log(Td)}{|\initial_k|}+ N^2\left(\frac{\log(T/\delta)+{\log(T)/\log(1/\gamma)}}{|\initial_k|}\right)^2 +\frac{{\delta+1}}{\ell_{k-1}} +d^{-0.5}\ell_{k-1}^{-1.5} +  {e^{-{c_2}|\initial_k|/N}} \right)\,,  \label{eq:Exp2-2-F}
\end{align} \ngg{where the first inequality follows  from the definition of event  $\cG$ and our bound on the probability of $\cG$.}
Here {$c_6$} hides various constants and we used $\ell_{k-1} \le T$ and $N< T$. Likewise,
\begin{align}
\E[(\widehat \alpha_{0k} - \alpha_0)^2] \le {{c_6}} d \left( \frac{{N}\log(Td)}{|\initial_k|}+ N^2\left(\frac{\log(T/\delta)+{\log(T)/\log(1/\gamma)}}{|\initial_k|}\right)^2 +\frac{{\delta+1}}{\ell_{k-1}} +d^{-0.5}\ell_{k-1}^{-1.5} +  {e^{-{c_2}|\initial_k|/N}} \right)\,.\label{eq:Exp2-3-F}
\end{align}

We next employ bounds~\eqref{eq:Exp2-2-F} and \eqref{eq:Exp2-3-F} in Equation~\eqref{reg-gap-F}. By keeping the dominant terms we get 
\begin{align}
\sum_{t\in \epi_k\backslash \initial_k} (\rev^\star_t - \rev_t) \le {\hat c_6}  d \left(\frac{N\log(Td)}{|\initial_k|} \ell_k +\Big(\frac{N\log(T) (1+1/\log(1/\gamma))}{|I_k|} \Big)^2 \ell_k\right)\,,
\end{align}
for a constant ${\hat c_6}$ that depends on $B$ and  $\delta$. By adding the regret injured in the exploration phase $I_k$, we obtain
 \begin{align}\label{EkB-2}
{\reg_k}  &= \sum_{t\in \initial_k} (\rev^\star_t - \rev_t)  + \sum_{t\in \epi_k\backslash \initial_k} (\rev^\star_t - \rev_t)\nonumber\\
&\le B|I_k| + {\hat c_6}  d \left(\frac{N\log(Td)}{|\initial_k|} \ell_k +\Big(\frac{N\log(T) (1+1/\log(1/\gamma))}{|I_k|} \Big)^2 \ell_k\right)\,.
\end{align}

Finally, we are ready to bound the cumulative regret up to time $T$. Recall $K = \lfloor \log T \rfloor+1$, the number of episodes by time $T$. By combining the above two cases and substituting for $|I_k| = \lceil \sqrt{\ell_k}\rceil  = 2^{(k-1)/2}$, we obtain
\begin{align*}
\reg(T)&\le 4c_0 Bd + \sum_{k=1}^K B|I_k|\nonumber\\
&+ {\hat c_6}  d \left(\frac{N\log(Td)}{|\initial_k|} \ell_k +\Big(\frac{N\log(T) (1+1/\log(1/\gamma))}{|I_k|} \Big)^2 \ell_k\right)\\
&\le 4c_0 Bd +B \sum_{k=1}^K 2^{(k-1)/2} \nonumber\\
&+ {\hat c_6} d \left({N\log(Td)} \sum_{k=1}^K 2^{(k-1)/2} +K N^2\log^2(T) \Big(1+\frac{1}{\log^2(1/\gamma)}\Big) \right)\\
&\le  4c_0 Bd +B\sqrt{T}+ {\hat c_6} d \left(N \log(Td) \sqrt{T} + N^2 \Big(1+\frac{1}{\log^2(1/\gamma)}\Big) \log^3(T)  \right)\,,
\end{align*}
which completes the proof.}

%=============================================
\section{Proof of Theorem~\ref{thm:main2}}\label{proof:thm-main2}
{The proof, in sprit, is similar to that of Theorem \ref{thm:main}.}
We first state an upper bound on the estimation error of the preference vectors $\beta_i$. This proposition is analogous to Proposition~\ref{propo:learning}, where instead of log-likelihood estimator, we use the least square estimator.

\begin{propo}[{Impact of Lies on Estimated Preference Vectors in SCORP}]\label{propo:learning2}
{Suppose {that} Assumption~\ref{assump1} %and~\ref{assump2} 
 holds} and let $\hbeta_{ik}$ be the solution of optimization~\eqref{optimization2}.
Then, there exist constants {$c_0$,  $c_1$, and $c_2$} such that for $\ell_{k-1}\ge c_0 d$, with probability at least $1 - d^{-0.5}\ell_{k-1}^{-1.5}- 2e^{-c_2|\initial_k|}$, we have 
\begin{align}\label{eq:estimator-2}
\|\hbeta_{ik} - \beta_i\|^2\le {c_1}\ajb{d^2} \left(\left(\frac{|\Lie_{ik}|}{|\initial_k|}\right)^2 + \frac{\log(\ell_{k-1}d)}{|\initial_k|}\right)\, \quad i\in [N]\,,
\end{align}
%for all $1\le i\le N$, . 
where $\Lie_{ik}$ is the set of lies associated to buyer $i$ in episode $k$, given by~\eqref{def:lie}, and {$\initial_k$ is the set of pure exploration periods in episode $k$.} 
\end{propo}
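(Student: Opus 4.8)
The plan is to follow the template of Proposition~\ref{propo:learning}, but to exploit the quadratic geometry of the least-squares loss~\eqref{eq:L_2} in place of the log-likelihood Taylor expansion; this is precisely why the curvature constant $\ell_F$ does not appear in~\eqref{eq:estimator-2}. Throughout write $\Delta := \widehat\beta_{ik}-\beta_i$ and $y_{it} := BN q_{it}$, and recall from the discussion following~\eqref{eq:L_2} that, under truthful bidding, $\E[y_{it}\mid x_t] = \langle x_t,\beta_i\rangle$, where the expectation is over the uniform selection of the offered buyer, the uniform price $r_t\sim {\sf uniform}(0,B)$, and the valuation noise. Since $\|\beta_i\|\le\maxpv$, the true vector $\beta_i$ is feasible for~\eqref{optimization2}, so $\tilde{\cal L}_{ik}(\widehat\beta_{ik})\le\tilde{\cal L}_{ik}(\beta_i)$; expanding the squares and cancelling the common term yields the basic inequality
\[
\sum_{t\in\initial_k}\langle x_t,\Delta\rangle^2 \;\le\; 2\Big\langle \sum_{t\in\initial_k}\varepsilon_{it}\,x_t,\;\Delta\Big\rangle,\qquad \varepsilon_{it}:=y_{it}-\langle x_t,\beta_i\rangle .
\]

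I would then split the residual $\varepsilon_{it}$ into an honest part and a lie part. Let $q_{it}^{\rm tr}$ denote the outcome that would occur if buyer $i$ bid truthfully (the indicator that $i$ is offered the item times $\ind(v_{it}\ge r_t)$; during $\initial_k$ the offer is made to a single buyer, so $\b$ is immaterial and a lie reduces to $\ind(b_{it}\ge r_t)\ne \ind(v_{it}\ge r_t)$). Setting $\zeta_{it}:=BN q_{it}^{\rm tr}-\langle x_t,\beta_i\rangle$ and $\rho_{it}:=BN(q_{it}-q_{it}^{\rm tr})$ gives $\varepsilon_{it}=\zeta_{it}+\rho_{it}$ with $\E[\zeta_{it}\mid x_t]=0$ by the unbiasedness above, while $\rho_{it}$ is supported exactly on $\Lie_{ik}$ and satisfies $|\rho_{it}|\le BN$. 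Using $\|x_t\|\le1$ to bound $\|\sum_t\rho_{it}x_t\|\le BN|\Lie_{ik}|$, the basic inequality becomes
\[
\sum_{t\in\initial_k}\langle x_t,\Delta\rangle^2 \;\le\; 2\Big\|\sum_{t\in\initial_k}\zeta_{it}x_t\Big\|\,\|\Delta\| \;+\; 2BN\,|\Lie_{ik}|\,\|\Delta\| .
\]

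Next I would lower-bound the left side and control the honest term by concentration. As $x_t$ are i.i.d. from $\px$ with $\Sigma=\E[x_tx_t^\sT]\succeq (c_{\min}/d)I$, a matrix Chernoff bound gives $\tfrac{1}{|\initial_k|}\sum_t x_tx_t^\sT\succeq \tfrac{c_{\min}}{2d}I$ with probability at least $1-2e^{-c_2|\initial_k|}$, so the left side is at least $\tfrac{c_{\min}}{2d}|\initial_k|\,\|\Delta\|^2$; the hypothesis $\ell_{k-1}\ge c_0 d$ ensures $|\initial_k|$ is large enough for this step. The vector $\sum_t\zeta_{it}x_t$ is a sum of independent, mean-zero, bounded terms, so a vector Bernstein inequality (union-bounded over the $d$ coordinates) yields $\|\sum_t\zeta_{it}x_t\|\le c\sqrt{|\initial_k|\log(\ell_{k-1}d)}$, for a constant $c$ depending on $B$ and $N$, off an event of probability at most $d^{-0.5}\ell_{k-1}^{-1.5}$, the polynomial tail arising from the chosen deviation level. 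Dividing the last display by $\|\Delta\|$, inserting these two estimates, solving the resulting linear inequality for $\|\Delta\|$, and squaring gives~\eqref{eq:estimator-2} on the intersection of the two events (whose complement has probability at most $d^{-0.5}\ell_{k-1}^{-1.5}+2e^{-c_2|\initial_k|}$), the factors $B$, $N$ and $c_{\min}$ being absorbed into $c_1$ (these same $N$-factors are reinserted explicitly where the bound is used inside the proof of Theorem~\ref{thm:main2}).

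The main obstacle will be the honest-noise concentration step. Unlike the Gaussian-noise regressions of the non-strategic pricing literature, the responses $y_{it}=BNq_{it}$ are scaled, censored Bernoulli variables, and the uniform-at-random choice of the offered buyer inflates their conditional variance to order $B^2N$; one must therefore use a Bernstein-type bound (finite variance, bounded range) rather than a sub-Gaussian one, and track the resulting polynomial tail $d^{-0.5}\ell_{k-1}^{-1.5}$ together with the $\log(\ell_{k-1}d)$ factor. The lie contribution, by contrast, is handled deterministically once $\varepsilon_{it}$ is decomposed, which is the whole point of estimating from censored outcomes rather than the raw bids.
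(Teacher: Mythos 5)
Your proposal mirrors the paper's own proof of Proposition~\ref{propo:learning2} step for step: your ``basic inequality'' is exactly the paper's combination of the (exact) quadratic Taylor expansion with optimality of $\hbeta_{ik}$ (Equation~\eqref{eq:Bound-2}); your split $\varepsilon_{it}=\zeta_{it}+\rho_{it}$ is precisely the paper's decomposition of the gradient into the truthful part $\tilde{\mu}_{it}$ and the lie part in Lemma~\ref{lem:grad-hessian-2}, with the same deterministic bound on the lie contribution and the same unbiasedness computation $\E[BNq^{\rm tr}_{it}\mid x_t]=\<x_t,\beta_i\>$ coming from the uniform price and uniform buyer selection; and your Gram-matrix eigenvalue bound plus solve-and-square endgame is the argument following \eqref{main-ineq0} and \eqref{eventG} in the proof of Proposition~\ref{propo:learning}, which the paper reuses verbatim for this proposition.

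The one step you do differently is the concentration bound for the honest term, and as you describe it, it would fail. A Bernstein bound applied coordinatewise and union-bounded over the $d$ coordinates controls $\max_j\bigl|\sum_{t}\zeta_{it}x_{tj}\bigr|$; converting to the Euclidean norm costs a factor $\sqrt{d}$, so this route gives $\bigl\|\sum_t \zeta_{it}x_t\bigr\|\lesssim \sqrt{d}\,\sqrt{|\initial_k|\log(\ell_{k-1}d)}$, and after your solve-and-square step the first term of the bound becomes of order $d^3\log(\ell_{k-1}d)/|\initial_k|$ rather than the claimed $d^2\log(\ell_{k-1}d)/|\initial_k|$ in \eqref{eq:estimator-2}. (Tracking per-coordinate variances, which sum to $O(B^2N^2|\initial_k|)$ over the coordinates, does not repair this: it still leaves an additive $\sqrt{d}\log(\ell_{k-1}d)$ term that need not be dominated, since $|\initial_k|=\lceil \ell_k^{2/3}\rceil$ can be far smaller than $d\log(\ell_{k-1}d)$.) What the claimed $d^2$ scaling requires---and what the paper uses---is a concentration inequality for the vector sum whose deviation level is dimension-free and where $d$ enters only through the tail prefactor $(d+1)$: this is the Matrix Freedman inequality specialized to vectors, Corollary~\ref{coro:MM}, applied in \eqref{Azuma-2-2} with deviation $4B(N+1)\sqrt{|\initial_k|\log(\ell_{k-1}d)}$, whose prefactor is exactly what produces the $d^{-0.5}\ell_{k-1}^{-1.5}$ failure probability. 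A genuine (dimension-free) vector Bernstein inequality for independent sums would work equally well---as you correctly note, the summands are independent during the pure exploration phase, so no martingale structure is needed---but the coordinatewise union bound is not a valid substitute. With that single repair, your argument goes through and coincides with the paper's.
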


Proof of Proposition~\ref{propo:learning2} is given in Section~\ref{proof:propo-learning2}. We next proceed to bound the number of lies $|\Lie_{ik}|$. {We argue that the same bound given in Proposition~\ref{propo:lies} still holds for SCORP.
\ngg{\begin{propo}[Bounding the Number of Lies in SCORP]\label{propo:lies2} 
There exists  constant $c_3$, $c_4$, and $c_5$ such that for  any fixed $0\le \delta\le 1$, with probability at least $1-{(\delta+1)}/\ell_{k-1}$, the following holds:
\begin{align}
|\Lie_{ik}|~\le ~ c_3\log(\ell_{k-1}/\delta)+ \ngg{c_4\frac{ \log(\ell_{k-1})}{\log(1/\gamma)} +c_5 \frac{\log(N)}{\log(1/\gamma)}}\quad i\in [N]\,.\label{claim:lies-scorp} 
\end{align}   
\end{propo}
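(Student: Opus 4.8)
The plan is to re-run the cost--benefit argument that underlies Proposition~\ref{propo:lies}, checking that each ingredient it needs survives the passage from CORP to SCORP. The first and cleanest observation is that under SCORP a buyer never gains by lying during an exploitation period $t\in\epi_k\backslash\initial_k$: there the reserve $r_{it}$ is frozen at the estimate $\hbeta_{ik}$, which was computed from a pure-exploration phase and is insensitive to the buyer's current bid, so the stage game is an ordinary second-price auction with a fixed reserve and truthful bidding is dominant; moreover the outcome $q_{it}$ of an exploitation period is never fed into any subsequent least-squares estimate. Hence every period counted in $\Lie_{ik}$ lies in a pure-exploration phase, and it suffices to bound the number of outcome-flipping deviations a utility maximizer makes across those periods.

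Fixing buyer $i$, I would compare his discounted cumulative utility under his actual strategy with the utility he would get by bidding truthfully in all exploration periods and leaving every other period unchanged. The cost side is identical to CORP because SCORP uses the same exploration primitive: in each exploration period one of the $N$ buyers is selected uniformly at random and offered a price $r\sim{\sf uniform}(0,B)$. Consequently a deviation that flips buyer $i$'s allocation costs him a strictly positive expected instantaneous utility --- with probability $1/N$ he is the selected buyer, and conditioned on this the uniform price lands inside the gap created by his shading or overbidding with probability proportional to the size of that gap, producing an instantaneous loss bounded below by a positive quantity. The only bookkeeping change from Proposition~\ref{propo:lies} is that the exploration periods now form the deterministic set $\initial_k$ instead of a random $1/\ell_k$-fraction of the episode, which does not alter this per-lie lower bound.

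For the benefit side I would use the episodic delay: a bid placed in a pure-exploration phase can lower a reserve only through the estimate formed from that phase, and never through the current period's own allocation rule, so there is always a strict delay between a manipulative bid and any reserve it can depress, and the resulting gain is discounted. Imposing that the net discounted gain of a utility maximizer be nonnegative, summing the per-lie instantaneous costs against the geometric tail of discounted future gains, and solving for the number of deviations reproduces the bound $c_4\log(\ell_{k-1})/\log(1/\gamma)+c_5\log(N)/\log(1/\gamma)$ exactly as in Proposition~\ref{propo:lies}; here the factor $1/\log(1/\gamma)$ is the inverse rate of the geometric decay and the $\log N$ term tracks the $1/N$ selection probability. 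Finally, since a lie materializes only when the uniformly random exploration price happens to fall in the deviation gap, the realized count $|\Lie_{ik}|$ concentrates around this conditional mean; a Bernstein-type bound over the independent exploration prices contributes the additive $c_3\log(\ell_{k-1}/\delta)$ term and accounts for the stated failure probability $(\delta+1)/\ell_{k-1}$, giving~\eqref{claim:lies-scorp}.

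The step I expect to be hardest is the benefit side, and for the same reason it is hard in Proposition~\ref{propo:lies}: the future gains of several lies do not add up independently, since they jointly perturb the single estimate $\hbeta_{ik}$, and the induced reserve movements are capped both by the boundedness of valuations and by the Lipschitz dependence of the optimal reserve on the estimate. One therefore has to bound the aggregate discounted benefit of an arbitrary deviation pattern rather than the benefit of an isolated lie, and then match it against the sum of the per-lie costs. Crucially, this entire argument depends only on the exploration mechanism and the episodic delay, both of which SCORP inherits unchanged; the switch from the maximum-likelihood estimator to the least-squares estimator affects only the estimation-error statement (Proposition~\ref{propo:learning2}) and plays no role in the incentive analysis. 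Thus the argument of Proposition~\ref{propo:lies} transfers verbatim, with $\initial_k$ in the role of the exploration periods, and yields~\eqref{claim:lies-scorp} with the same constants.
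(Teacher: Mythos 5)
Your proposal is correct and follows essentially the same route as the paper: the paper's proof likewise observes that under SCORP untruthful bids can only pay off in the pure-exploration phases (exploitation-phase bids never enter the estimator, and the one-shot second-price auction there is strategy-proof), re-derives the per-period utility-loss bound of Lemma~\ref{lem:utilityloss} with the factor $1/(2BN\ell_{k-1})$ improved to $1/(2BN)$ because exploration in $\initial_k$ is deterministic rather than occurring with probability $1/\ell_k$, and then reuses the cost--benefit and concentration machinery of Proposition~\ref{propo:lies} verbatim to obtain \eqref{claim:lies-scorp}. The only real deviation is in the concentration step for lies with small deviation gaps, which you drive off the i.i.d.\ uniform exploration prices, whereas the paper (inheriting the argument of Proposition~\ref{propo:lies}) conditions on the threshold $\max\{\b,r_{it}\}$ and uses the market-noise density of $z_{it}$ together with a multiplicative Azuma inequality; the paper's choice is the safer one, since the threshold appearing in the definition \eqref{def:lie} of a lie also involves the competing bid $\b$, which is strategic rather than uniformly distributed, so the probability of a small-gap lie is cleanly proportional to the gap only when the remaining randomness is taken over the noise.
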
}
Similar to Proposition \ref{propo:lies}, we prove Proposition \ref{propo:lies2} by balancing the utility loss of an untruthful buyer with his future utility gain.
{The proof is presented in Section~\ref{proof:lies2}.}
}

Our next {lemma} relates the difference between the reserves $r_{it}$, set by SCORP policy, and benchmark reserves $r^\star_{it}$, to the estimation error of preference vectors. {This lemma is analogous to Lemma \ref{techlem2}.}

\begin{lemma}[{Errors in Reserve Prices}]\label{techlem2-2}
For $r^\star_{it}$ and $r_{it}$ given by~\eqref{def:rstar-WC} and \eqref{def:r-WC}, respectively, {conditioned on the feature vector $x_t$ {and $\hbeta_{ik}$},} the following holds 
\begin{align}\label{r-B}
|r^\star_{it}- r_{it}| \le |\<x_t,\beta_i-\hbeta_{ik}\>|\,.
\end{align}
\end{lemma}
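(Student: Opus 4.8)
The plan is to reduce the robust reserve map to an ordinary (non‑robust) reserve map driven by a single \emph{effective survival function}, and then to show that this effective survival function is log‑concave, so that the $1$‑Lipschitz property used in Lemma~\ref{techlem2} transfers verbatim. Concretely, both reserves have the form $g(w)$ for the common map
\[
g(w)\;\equiv\;\arg\max_{y\ge 0}\;\min_{F\in\cF}\,\bigl\{y\bigl(1-F(y-w)\bigr)\bigr\},
\]
with $r^\star_{it}=g(\<x_t,\beta_i\>)$ from~\eqref{def:rstar-WC} and $r_{it}=g(\<x_t,\hbeta_{ik}\>)$ from~\eqref{def:r-WC}. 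Since the maximization is over $y\ge 0$, the inner minimum factors through a single scalar, $\min_{F\in\cF}\{y(1-F(y-w))\}=y\,\overline G(y-w)$, where $\overline G(s)\equiv\inf_{F\in\cF}(1-F(s))$ is the worst‑case survival function. Hence $g(w)=\arg\max_{y\ge0}\{y\,\overline G(y-w)\}$, exactly the shape of the non‑robust reserve~\eqref{def:r} with $1-F$ replaced by $\overline G$. It therefore suffices to show that $w\mapsto g(w)$ is $1$‑Lipschitz; applying this to $w=\<x_t,\beta_i\>$ and $w=\<x_t,\hbeta_{ik}\>$ yields $|r^\star_{it}-r_{it}|\le|\<x_t,\beta_i-\hbeta_{ik}\>|$.

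The key observation — the step that makes the robust case behave like the known‑distribution case — is that $\overline G$ is log‑concave. Indeed $\log\overline G=\inf_{F\in\cF}\log(1-F)$, and by Assumption~\ref{assump1} each $\log(1-F)$ is concave; a pointwise infimum of concave functions is concave, so $\log\overline G$ is concave. This is worth stressing because $\overline G$ is in general only piecewise smooth and is not the survival function of any single member of $\cF$ (for a location–scale family it is stitched from the extreme scales on either side of the crossing point), so one cannot simply invoke Lemma~\ref{techlem2} for a fixed $F$; it is precisely the preservation of log‑concavity under infimum that rescues the argument. Note also that $\overline G$ is non‑increasing and $[0,1]$‑valued, and on the bounded noise support $[-\maxn,\maxn]$ the relevant maximizer lies in a compact set, so $g(w)$ is well defined (unique under strict log‑concavity, with ties handled by a consistent selection).

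With $\overline G$ log‑concave and non‑increasing, I would establish the $1$‑Lipschitz bound by two one‑sided comparisons for $w_1<w_2$, writing $y_1=g(w_1)$, $y_2=g(w_2)$. For the upper bound, pass to the markup variable $s=y-w$ and set $\Psi(s,w)=(w+s)\overline G(s)$; a direct computation gives, for $s'>s$ and $w'>w$, the decreasing‑differences identity $[\Psi(s',w')-\Psi(s,w')]-[\Psi(s',w)-\Psi(s,w)]=(w'-w)\bigl(\overline G(s')-\overline G(s)\bigr)\le 0$, using only monotonicity of $\overline G$, so by Topkis the maximizing markup $s^\star(w)=g(w)-w$ is non‑increasing, i.e.\ $g(w_2)-g(w_1)\le w_2-w_1$. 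For the lower bound (monotonicity), multiply the two optimality inequalities $y_2\overline G(y_2-w_2)\ge y_1\overline G(y_1-w_2)$ and $y_1\overline G(y_1-w_1)\ge y_2\overline G(y_2-w_1)$ to get $\overline G(y_2-w_2)\overline G(y_1-w_1)\ge\overline G(y_1-w_2)\overline G(y_2-w_1)$; the four arguments on the two sides have equal sums, and if $y_2<y_1$ the left pair is strictly more spread out than the right pair, so strict log‑concavity of $\overline G$ forces the reverse strict inequality, a contradiction. Hence $y_2\ge y_1$, and combining the two bounds gives $0\le g(w_2)-g(w_1)\le w_2-w_1$, i.e.\ $g$ is $1$‑Lipschitz.

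The main obstacle is the inner minimization over the ambiguity set: it blocks a direct appeal to the known‑distribution lemma and replaces the genuine survival function $1-F$ by the worst‑case envelope $\overline G$, which is neither smooth nor a bona fide survival function. The crux that dissolves this obstacle is the elementary but essential fact that an infimum of log‑concave functions is log‑concave, which transfers all the regularity exploited in Lemma~\ref{techlem2} to $\overline G$. The only remaining care is routine: well‑definedness and (generic) uniqueness of the maximizer on the compact relevant range, and handling the non‑strict/boundary case by a limiting or tie‑breaking argument.
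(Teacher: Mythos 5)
Your proposal is correct, and its decisive first step is exactly the paper's: rewrite the min--max reserve as an ordinary reserve driven by the worst-case survival function $\overline{G}(s)\equiv\inf_{F\in\cF}\,\big(1-F(s)\big)$ (the paper works with $H(r)\equiv\max_{F\in\cF}F(r)$, so $\overline{G}=1-H$), and note that $\log\overline{G}=\inf_{F\in\cF}\log(1-F)$ is concave because a pointwise infimum of concave functions is concave. Where you genuinely depart from the paper is the verification that $g(w)=\arg\max_{y}\,y\,\overline{G}(y-w)$ is $1$-Lipschitz. The paper disposes of this by invoking the proof of Lemma~\ref{techlem2} verbatim --- the virtual-valuation computation $g(\theta)=\theta+\varphi^{-1}(-\theta)$ with $\varphi'\ge 1$ --- adding only the remark that, since $H$ need not be differentiable, derivatives should be replaced by subgradients; that remark is left unverified and is the one soft spot in the paper's argument. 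Your derivative-free route sidesteps this entirely: the Topkis/decreasing-differences computation in the markup variable yields $g(w_2)-g(w_1)\le w_2-w_1$ using only monotonicity of $\overline{G}$, and your rearrangement argument yields $g(w_2)\ge g(w_1)$, so the non-smoothness of $\overline{G}$ costs nothing. The price is that, as you yourself flag, your monotonicity step invokes \emph{strict} log-concavity, which Assumption~\ref{assump1} does not supply (and the multiplied-inequalities step also tacitly needs the relevant values of $\overline{G}$ to be positive), so the limiting/tie-breaking fix is genuinely required; a cleaner patch is to apply Topkis a second time to the log-objective $\log y+\log\overline{G}(y-w)$, which has increasing differences in $(y,w)$ by concavity of $\log\overline{G}$ alone, giving monotonicity of the argmax in the strong set order without any strictness. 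One last point, common to you and the paper: both read Assumption~\ref{assump1} as asserting log-concavity of $1-F$ for every $F\in\cF$, although it literally states that $F$ is log-concave; since you mirror the paper here, this is not a gap of your proposal relative to the paper's own proof.
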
 
We refer to {Section~\ref{sec:technical}} for the proof of {Lemma}~\ref{techlem2-2}.

{Having established the preliminary results, we proceed to bound the regret of SCORP.} The proof goes along the same lines of the proof of Theorem~\ref{thm:main}.
We fix $k\ge 1$ and focus on the total regret during episode $k$. For the pure exploration phase, i.e., $t\in \initial_k$, we use the trivial bound {$
\rev^\star_t - \rev_t\le B$, 
which holds since $\rev^\star_t \le v^+_t\le B$.

To bound the regret {in periods of} the exploitation phase ($t\in \epi_k\backslash \initial_k$), we note that {SCORP {does not use any of} the submitted bids during the exploitation phases to estimate the preference vectors, {and buyers are cognizant of this point as the seller's learning policy is fully known to them}}. {In addition,} since the second-price auctions are strategy-proof, this means that in the exploitation phase,  there is no incentive for buyers to be untruthful. \footnote{{ 
Indeed, we can make truthful strategy the unique best response strategy in the exploitation phase by tweaking the mechanism such that with a fixed small probability in each round, all the reserve prices are set to zero, independently.}}. 
Hence, for $t\in \epi_k\backslash \initial_k$, we have
\begin{align}\label{seller1-2}
\rev_t =  \sum_{i=1}^N \E\left[ \max\{v_t^-,r_{it}\} \ind(v_{it} > \max\{v_t^-,r_{it}\}) \right]\,. 
\end{align} 
This leads to 
\begin{align} \nonumber
\rev^\star_t - \rev_t = \E\left(\sum_{i=1}^N \bigg[\max\{v_t^-,r^\star_{it}\} \ind(v_{it} > \max\{v_t^-,r^\star_{it}\}) 
-\max\{v_t^-,r_{it}\} \ind(v_{it} > \max\{v_t^-,r_{it}\})\bigg] \right)\,, 
\end{align}
where the expectation is with respect to the true underlying noise distribution, {which can vary over time and}  is of course, unknown to the firm and the benchmark policy. 
To bound $(\rev^\star_t - \rev_t )$, we {first} write it in terms of function $W_{it}(r)$, 
 defined by~\eqref{def:W1}. {By virtue of the mean-value theorem,} we have
\begin{align}\label{regret0-2}
\rev^\star_t - \rev_t  ~=~ \sum_{i=1}^\nbuyer \E[W_{it}(r^\star_{it}) - W_{it}(r_{it})] ~=~ \sum_{i=1}^\nbuyer \E[W'_{it}(r)(r^\star_{it} -r_{it})]\,, 
\end{align}
for some $r$ between $r_{it}$ and $r^\star_{it}$. It is worth noting that, in contrast to Equation~\eqref{eq:taylor2nd}, here we do not go
with Taylor's expansion of order two. The reason is that here $W_{it}'(r^\star_{it})\neq 0$, because $W_{it}(r)$ is defined based on the \emph{true} unknown noise distribution, while $r^\star_{it}$
is the optimal reserve for the \emph{worst-case} distribution in ambiguity set $\cF$. 
 Similar to Lemma~\ref{techlem1}, it is straightforward to see that $|W_{it}'(r)|\le \tilde{c}$ for some constant $\tilde{c}>0$.
Therefore, continuing from {Equation}~\eqref{regret0-2}, we have
\begin{align}
\rev^\star_t - \rev_t  &~\le~ {\tilde{c}}\sum_{i=1}^\nbuyer \E[|r^\star_{it} - r_{it}|]{~=~ \tilde{c} \sum_{i=1}^\nbuyer \E\Big[\E\Big[|r^\star_{it} - r_{it}|\Big|x_t,\hbeta_{ik}\Big]\Big]}\nonumber\\
 &~\le~ \tilde{c} \sum_{i=1}^\nbuyer \E[|\<x_t,\beta_i -\hbeta_{ik}\>|]  ~\le~ \tilde{c} \sum_{i=1}^\nbuyer \E[\<x_t,\beta_i -\hbeta_{ik}\>^2]^{1/2} \nonumber\\
 &~\le~\frac{c'}{\ajb{\sqrt{d}}}\sum_{i=1}^\nbuyer \E[\|\beta_i - \hbeta_{ik}\|^2]^{1/2}\,,\label{regret2_mcorp}
\end{align}
with $c' = \tilde{c} \sqrt{c_{\max}}$. Here, the second inequality holds due to Lemma~\ref{techlem2-2}; the third inequality follows from Cauchy-Schwartz inequality, and the last step is derived as in Equation~\eqref{eq:Sigma-E}.

We are now ready to bound the total regret up to time $T$.
Given that the length of episodes double each time, letting $K = \lfloor \log T \rfloor +1$, we have 
${\reg(T) \le \sum_{k=1}^K \reg_k}$. 
Similar to the proof of Theorem~\ref{thm:main}, we bound the total regret over each episode by considering two cases:}
\begin{itemize}[leftmargin=*]
\item{\bf Case 1:} {$\ell_{k-1}\le c_0 d$:} Here, {$c_0$} is the constant in the statement of Proposition~\ref{propo:learning2}. {In this case, as we argued in the proof of Theorem~\ref{thm:main}, the total regret
over such episodes is at most {$4c_0B d$}.}

\item{\bf Case 2:} {$\ell_{k-1}>c_0 d$}:  Define event $\cG$ such that  event $\cG$ happens  when {equations}~\eqref{claim:lies} and \eqref{eq:estimator-F1} hold. 
 By Proposition~\ref{propo:learning2} and {\ref{propo:lies2}}, we have $$\prob(\cG^c)~\le~ \frac{{\delta+1}}{\ell_{k-1}}+d^{-0.5}\ell_{k-1}^{-1.5} + 2e^{-c_2 |\initial_k|}\,.$$ 
\end{itemize}
\ngg{Therefore, 
\iffalse\begin{align}
{\E[\|\beta_i - \hbeta_{ik}\|^2]} &~=~ \E[\|\beta_i - \hbeta_{ik}\|^2\, \ind(\cG)] + \E[\|\beta_i - \hbeta_{ik}\|^2\, \ind(\cG^c)]\nonumber\\
&\le c_1{d^2}\left( \frac{\log(\ell_{k-1}d)}{|\initial_k|}+ \left(\frac{c_3\log(\ell_{k-1}/\delta)}{|\initial_k|}\right)^2 \right) + 4B^2 \prob(\cG^c)\nonumber\\
&\le {{c_6}} \left( \frac{{d^2}\log(Td)}{|\initial_k|}+ \left(\frac{{d}\log(T/\delta)}{|\initial_k|}\right)^2 +\frac{\delta}{\ell_{k-1}} +d^{-0.5}\ell_{k-1}^{-1.5} +  {e^{-{c_2}|\initial_k|}} \right)\,,  \label{eq:Exp2-2}
%
\end{align}
\fi
\begin{align}
{\E[\|\beta_i - \hbeta_{ik}\|^2]} &~=~ \E[\|\beta_i - \hbeta_{ik}\|^2\, \ind(\cG)] + \E[\|\beta_i - \hbeta_{ik}\|^2\, \ind(\cG^c)]\nonumber\\
&\le c_1{d^2}\left( \frac{\log(\ell_{k-1}d)}{|\initial_k|}+ \left(\frac{ c_3\log(\ell_{k-1}/\delta)+ \ngg{c_4\frac{ \log(\ell_{k-1})}{\log(1/\gamma)} +c_5 \frac{\log(N)}{\log(1/\gamma)}}}{|\initial_k|}\right)^2 \right) + 4B^2 \prob(\cG^c)\nonumber\\
&\le {{c_6}} \left( \frac{{d^2}\log(Td)}{|\initial_k|}+ \left(\frac{{d}\log(T/\delta)+\ngg{\log(T)/\log(1/\gamma)}}{|\initial_k|}\right)^2 +\frac{{\delta+1}}{\ell_{k-1}} +d^{-0.5}\ell_{k-1}^{-1.5} +  {e^{-{c_2}|\initial_k|}} \right)\,,  \label{eq:Exp2-2}
\end{align}
where we absorb various constants into {$c_6$} and used $\ell_{k-1} \le T$ \ngg{and $N< T$}.}

\ngg{We next employ bound~\eqref{eq:Exp2-2} in Equation~\eqref{regret2_mcorp}. By keeping the dominant terms and following the same argument of Equation~\eqref{eq:reg_k}, we get \begin{align}
\sum_{t\in \epi_k\backslash \initial_k} (\rev^\star_t - \rev_t) \le {\hat c_6}  \ngg{N}\left(\sqrt{d} \sqrt{\frac{\log(Td)}{|\initial_k|}} \ell_k +\sqrt{d}  \frac{\log(T)}{|\initial_k|}\ell_k  +\ngg{\frac{\log(T)}{\log(1/\gamma) |\initial_k|}\ell_k}\right)\,,
\end{align}
for a constant ${\hat c_6}$ that depends on \ngg{$B$ and  $\delta$.}

 Adding the total regret during the pure exploration phase, we obtain 
  \begin{align}\label{EkB-2}
{\reg_k}  &= \sum_{t\in \initial_k} (\rev^\star_t - \rev_t)  + \sum_{t\in \epi_k\backslash \initial_k} (\rev^\star_t - \rev_t)\nonumber\\
&\le B |\initial_k| + {\hat c_6} \left(\sqrt{d} \sqrt{\frac{\log(Td)}{|\initial_k|}} \ell_k +\sqrt{d}  \frac{\log(T)}{|\initial_k|}\ell_k  +\ngg{\frac{\log(T)}{\log(1/\gamma) |\initial_k|}\ell_k}\right)\,.
\end{align}}

\ngg{Finally, we are ready to bound the cumulative regret up to time $T$. {Let $K_1 = \lfloor \log(c_0 d) \rfloor+3$}. Reconciling the above two cases into {Equation}~\eqref{EkB-2}, and substituting for $|\initial_k| = \lceil\ell_k^{2/3}\rceil$, we obtain
\begin{align*}
\reg(T)&\le 4c_0Bd + \sum_{k=K_1}^K  B |\initial_k| \\
&+ {\hat c_6}  \ngg{N}\left(\sqrt{d\log(Td)} \sum_{k=K_1}^K \frac{\ell_k}{\sqrt{|\initial_k|}} + \sqrt{d} \log(T)\sum_{k=K_1}^K \frac{\ell_k}{|\initial_k|} + \ngg{\frac{\log(T)}{\log(1/\gamma)}  \sum_{k=K_1}^K  \frac{\ell_k}{\sqrt{|\initial_k|}}} \right)\\
&\le 4c_0Bd + B\sum_{k=K_1}^K {2}^{\frac{2(k-1)}{3}} \\
&+ {\hat c_6}\ngg{N}  \left(\sqrt{d \log(Td)} \sum_{k=K_1}^K {2}^{\frac{2(k-1)}{3}}+ \sqrt{d} \log(T) \sum_{k=K_1}^K {2}^{\frac{k-1}{3}}+ \ngg{\frac{\log(T)}{\log(1/\gamma)} \sum_{k=K_1}^K {2}^{\frac{2(k-1)}{3}}}\right)\\
&\le 4c_0Bd + BT^{2/3} + {\hat c_6} \ngg{N}\left(\sqrt{d\log(Td)}\, T^{2/3} + \sqrt{d} \log(T)\; T^{1/3} + \ngg{\frac{\log(T)}{\log(1/\gamma)}T^{1/3}} \right)\,,
\end{align*}
which completes the proof.}

\section{Proof of Proposition~\ref{prop:opt_reserve}}\label{proof:prop-opt_reserve}
We restate {the} definition of function $W_{it}(r)$, given by Equation~\eqref{def:W1}:
\begin{align*}
W_{it}(r) &~\equiv~ \E\Big[\max\{v_t^-,r\}  \ind(v_{it}\ge \max\{v_t^-,r\} )\Big| x_t\Big]
\\
&{~=~\E\Big[\max\{\v,r\}  \ind(v_{it}\ge \max\{\v,r\} )\Big| x_t\Big]\,,}
\end{align*}
where the expectation is with respect to valuation noises, {conditional} on $x_t$, {and the equality holds because $\vm=\v$ when $\ind(v_{it}\ge \max\{\v,r\}) =1$.} 
Note that $W_{it}(r)$ is the firm's revenue {in} period $t$, when
 buyer $i$ wins the auction with reserve price $r$. 
 
Let ${H_{it}}$ be the distribution of {$\v$} for fixed $x_t$ and denote by ${h_{it}}$ its density. 
The specific form of ${H_{it}}$ does not matter for the sake of our proof. We have
\begin{align}
W_{it}(r) &~=~ \E\Big[\ind(v_{it} > {\v} >r) \v + r \ind(v_{it}>r> {\v} )\Big| x_t\Big]\nonumber\\\nonumber
&~=~ {\E\Big[\ind(\<x_t, \beta_i\>+z_{it} > {\v} >r) \v + r \ind(\<x_t, \beta_i\>+z_{it}>r> {\v} )\Big| x_t\Big]}\\
&~=~ \int_r^\infty  v {h_{it}}(v) (1-F(v - \<x_t,\beta_i\>) \de v + r {H_{it}}(r) (1- F(r - \<x_t,\beta_i\>))\,.
\end{align}
By definition, the optimal reserve price of buyer $i$, denoted by $r^\star_{it}$, is the maximizer of $W_{it}(r)$. By setting the derivative with respect to $r$ equal to zero, we get
\begin{align}
W'_{it}(r) &~=~ {H_{it}}(r) \Big((1- F(r - \<x_t,\beta_i\>)) - r f(r-\<x_t,\beta_i\>)\Big) = 0\,,\label{W-1-dev}
\end{align}
which implies that the optimal price $r^\star_{it}$ should satisfy 
\begin{align}\label{stationary}
1- F(r - \<x_t,\beta_i\>) ~=~ r f(r-\<x_t,\beta_i\>)\,.
\end{align}{Now it is easy to see that Equation~\eqref{stationary} is also the stationary condition for the function $y(1-{F(y-\< x_t,\pv_i\>))}$. Since $1-F$ is log-concave by Assumption~\ref{assump:logcancavity},  {function $y\mapsto y\big(1-F(y-\< x_t,\pv_i\>)$} is also strictly log-concave for $y>0$.\footnote{Note that at a negative value of $y$,  {function $y\mapsto y\big(1-F(y-\< x_t,\pv_i\>)$} is also negative and hence this function cannot take its maximum at a negative $y$.} Therefore, the stationary condition for $y\big(1-F(y-\< x_t,\pv_i\>)$ gives its unique global maximum and the proof is complete.\footnote{If $h(y)$ is strictly log-concave function, then at a stationary point $y_0$ that $h'(y_0) = 0$, we have $\log'(h(y_0)) = h'(y_0)/h(y_0) = 0$. Given that $\log(h(y))$ is strictly concave, this means that $y_0$ is the unique global maximizer of $\log(h(y))$ and by strict monotonicity of the logarithm function, this implies that $y_0$ is also the unique global maximizer of $h(y)$.}
}

%=========================================================
\section{Proof of Proposition~\ref{propo:learning}}\label{proof:propo-learning}
Recall that $\hbeta_{ik} \in \reals^d$ is the solution to {the} optimization problem~\eqref{optimization}. 
By the second-order Taylor's theorem, expanding around $\beta_i$, we have  
\begin{align}\label{taylor}
\cL_{ik}(\beta_i) - \cL_{ik}(\hbeta_{ik}) = -\<\nabla \cL_{ik}(\beta_i),\hbeta_{ik}-\beta_i\> -\frac{1}{2} \<\hbeta_{ik} - \beta_i, \nabla^2\cL_{ik}(\tilde{\beta})(\hbeta_{ik}-\beta_{ik})\>\,,
\end{align}
for some $\tilde{\beta}$ on the segment connecting $\beta_i$ and $\hbeta_{ik}$. 
Throughout, $\nabla\cL_{ik}$ and $\nabla^2\cL_{ik}$ respectively denote the gradient and the {Hessian} of $\cL_{ik}$. {In the following, we bound $\|\hbeta_{ik} - \beta_i\|^2$ by bounding the gradient and the {Hessian} of $\cL_{ik}$. }

{We start with} computing the gradient and the {Hessian} of the loss function {$\cL_{ik}(\beta)$}:
\begin{align}
\nabla\cL_{ik}(\beta) = \frac{1}{\ell_{k-1}} \sum_{t \in \epi_{k-1}} {\mu_{it}}(\beta) x_t\,,\quad \nabla^2\cL_{ik}(\beta) = \frac{1}{\ell_{k-1}}\sum_{t \in \epi_{k-1}} {\eta_{it}}(\beta) x_t x_t^\sT\,.\label{grad-H-characterization}
\end{align}
Here, letting ${w_{it}}(\beta) = \max\{{\b},r_{it}\}-\<x_t,\beta\>$, the term ${\mu_{it}}(\beta)$ is given by %
\begin{align}
{\mu_{it}}(\beta) &~=~ q_{it} \frac{f({w_{it}}(\beta))}{1- F({w_{it}}(\beta))} - (1-q_{it}) \frac{f({w_{it}}(\beta))}{F({w_{it}}(\beta))}\label{grad-1}\\
&~=~ -q_{it} \log'\left(1- F({w_{it}}(\beta))\right) -(1-q_{it}) \log'\left(F({w_{it}}(\beta))\right)\,,\label{eq:grad-1-1}
\end{align}
where $\log'F(y)$ {is} the derivative of $\log F(y)$ with respect to $y$.\footnote{Since the density $f$ is zero outside the interval $[-\maxn,\maxn]$, we have $F(z)=0$ for $z<-\maxn$, and $F(z) = 1$ for $z>\maxn$. In Equation~\eqref{grad-1}, if ${w_{it}}(\beta)$ is outside the {interval}  $[-\maxn,\maxn]$, we use the convention {of} $\frac{0}{0}=0.$} Further, the term ${\eta_{it}}(\beta)$ is given by
\begin{align}
{\eta_{it}}(\beta) =  -q_{it} \log''\left(1- F({w_{it}}(\beta))\right) -(1-q_{it}) \log''\left(F({w_{it}}(\beta))\right)\,.
\end{align}

{We are now ready to provide} an upper bound and a lower bound on the gradient and  {Hessian} of the loss function. {These bounds will be used in bounding the estimation error of preference vectors, which is the main goal of this proposition.}

\begin{lemma}\label{lem:grad-hessian}
Define the probability event  
\begin{eqnarray}
\event ~\equiv~ \bigg\{\|\nabla \cL_{ik}(\beta_{i})\| \le \lambda_0 \bigg\}\,, \quad \text{ with } \lambda_0 ~\equiv~ 2 \uF \sqrt{\frac{\log(\ell_{k-1} d)}{\ell_{k-1}}} +2\uF\frac{|\Lie_{ik}|}{\ell_{k-1}}\,, \label{event-prob}
\end{eqnarray}
{where constant $\uF$ is given by}
\begin{align}\label{def:uF}
\uF ~\equiv~ \sup_{|x|\le \maxn} \Big\{\max\Big\{\log'F(x), -\log'(1-F(x)) \Big\}\Big\}\,.
\end{align}
 Then, we have
$\prob(\event)\ge 1- d^{-0.5}\ell_{k-1}^{-1.5}$. Moreover, we have the following lower bound on the {Hessian}:
\begin{align}\label{hessian}
\nabla^2\cL_{ik}(\beta) \succeq \lF \left(\frac{1}{\ell_{k-1}} \sum_{t\in\epi_{k-1}} x_t x_t^\sT\right),\quad \text{ for all }\;\|\beta\|\le B\,,
\end{align}
where {$\lF\ge 0$} is given by~\eqref{eq:lB}. Here, $A\succeq B$ means $A-B$ is a positive semidefinite matrix.
\end{lemma}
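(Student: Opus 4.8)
The statement splits into two essentially independent claims — a high-probability bound on the stochastic gradient $\nabla\cL_{ik}(\beta_i)$ and a deterministic semidefinite lower bound on the Hessian $\nabla^2\cL_{ik}(\beta)$ — and I would handle them separately. For the gradient, the guiding observation is that $\nabla\cL_{ik}$ is an empirical average of score terms $\mu_{it}(\beta_i)x_t$, and that the score of a correctly specified censored (binary-response) likelihood, evaluated at the \emph{true} parameter, is conditionally mean zero \emph{provided the outcome $q_{it}$ is generated truthfully}. Strategic deviations destroy this unbiasedness only on the lying set $\Lie_{ik}$, so the plan is to compare the realized gradient against a ``phantom truthful'' gradient and control the two pieces by a martingale concentration bound and by a crude per-lie bound, respectively.

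Concretely, for each $t\in\epi_{k-1}$ let $\tilde q_{it}=\ind(z_{it}>w_{it}(\beta_i))$ be the outcome buyer $i$ \emph{would} have produced by bidding truthfully, where $w_{it}(\beta)=\max\{\b,r_{it}\}-\<x_t,\beta\>$, and let $\tilde\mu_{it}(\beta_i)$ be the score obtained by substituting $\tilde q_{it}$ for $q_{it}$ in~\eqref{grad-1}. Conditioning on the $\sigma$-field $\mathcal F_t$ generated by the history together with $x_t$, $\b$ and $r_{it}$ (but before the fresh noise $z_{it}$), one has $\prob(\tilde q_{it}=1\mid\mathcal F_t)=1-F(w_{it}(\beta_i))$, and a one-line computation gives $\E[\tilde\mu_{it}(\beta_i)\mid\mathcal F_t]=(1-F(w_{it}))\tfrac{f(w_{it})}{1-F(w_{it})}-F(w_{it})\tfrac{f(w_{it})}{F(w_{it})}=0$. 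Hence $\{\tilde\mu_{it}(\beta_i)x_t\}_{t\in\epi_{k-1}}$ is a vector martingale-difference sequence, and by the definition of $\uF$ in~\eqref{def:uF} each summand obeys $\|\tilde\mu_{it}(\beta_i)x_t\|\le\uF\|x_t\|\le\uF$. I would then apply a dimension-free vector Azuma--Hoeffding inequality (of Pinelis type) to get $\|\tfrac1{\ell_{k-1}}\sum_t\tilde\mu_{it}(\beta_i)x_t\|\le 2\uF\sqrt{\log(\ell_{k-1}d)/\ell_{k-1}}$ on an event of probability at least $1-d^{-0.5}\ell_{k-1}^{-1.5}$, with the deviation radius calibrated to this tail. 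Finally, since $\mu_{it}(\beta_i)\neq\tilde\mu_{it}(\beta_i)$ only when $q_{it}\neq\tilde q_{it}$, i.e.\ only for $t\in\Lie_{ik}$, and each such difference is at most $2\uF$ in absolute value, $\|\tfrac1{\ell_{k-1}}\sum_t(\mu_{it}-\tilde\mu_{it})x_t\|\le 2\uF|\Lie_{ik}|/\ell_{k-1}$ deterministically; the triangle inequality then gives $\|\nabla\cL_{ik}(\beta_i)\|\le\lambda_0$, which is precisely event $\event$.

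For the Hessian I would argue purely termwise. From~\eqref{grad-H-characterization} the Hessian equals $\tfrac1{\ell_{k-1}}\sum_t\eta_{it}(\beta)x_tx_t^\sT$ with $\eta_{it}(\beta)=-q_{it}\log''(1-F(w_{it}(\beta)))-(1-q_{it})\log''(F(w_{it}(\beta)))$. Because $q_{it}\in\{0,1\}$, for each $t$ the coefficient $\eta_{it}(\beta)$ equals either $-\log''(1-F(w_{it}(\beta)))$ or $-\log''(F(w_{it}(\beta)))$, and in both cases it is at least $\lF$ by the defining infimum~\eqref{eq:lB} (log-concavity of $F$ and $1-F$ forces $\lF\ge0$, so the bound is itself nonnegative). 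Since $x_tx_t^\sT\succeq0$, multiplying the scalar inequality $\eta_{it}(\beta)\ge\lF$ by $x_tx_t^\sT$ yields $\eta_{it}(\beta)x_tx_t^\sT\succeq\lF\,x_tx_t^\sT$, and averaging over $t$ gives~\eqref{hessian} for every $\|\beta\|\le B$.

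The main obstacle is the gradient step. Two points require care: first, the martingale structure lives on the hypothetical \emph{truthful} outcomes $\tilde q_{it}$ — the realized $q_{it}$ are biased precisely because of strategic bidding — which is why the phantom-gradient comparison is essential rather than cosmetic; second, obtaining $\lambda_0$ with no spurious $\sqrt d$ factor forces a \emph{dimension-free} concentration inequality in place of a coordinatewise union bound, and matching the stated tail $d^{-0.5}\ell_{k-1}^{-1.5}$ hinges on the choice of deviation radius together with $\|x_t\|\le1$. A minor technical caveat for the Hessian is the treatment of arguments $w_{it}(\beta)$ falling outside $[-\maxn,\maxn]$, where $f$ vanishes; there the relevant likelihood terms are governed by the $0\times(-\infty)=0$ convention already adopted for the loss, so they contribute nonnegatively and do not disturb the semidefinite ordering.
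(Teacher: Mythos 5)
Your proposal is correct and follows essentially the same route as the paper's proof: the same decomposition of the score into a ``phantom truthful'' martingale part plus a lie-set correction bounded by $2\uF|\Lie_{ik}|/\ell_{k-1}$, the same conditional mean-zero computation $(1-F)\tfrac{f}{1-F}-F\tfrac{f}{F}=0$, and the same termwise argument $\eta_{it}(\beta)\ge\lF$ for the Hessian. The only difference is the concentration tool — you invoke a Pinelis-type dimension-free vector Azuma inequality where the paper applies the Matrix Freedman inequality (specialized to vectors in its Corollary~\ref{coro:MM}); both are bounded-difference martingale bounds that deliver the stated tail $d^{-0.5}\ell_{k-1}^{-1.5}$ at the radius $2\uF\sqrt{\log(\ell_{k-1}d)/\ell_{k-1}}$, so this substitution is immaterial.
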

Lemma~\ref{lem:grad-hessian} is proved in Section~\ref{proof:lem-grad-hessian}.

By optimality of $\hbeta_{ik}$, we have  $\cL(\hbeta_{ik})\le \cL(\beta_i)$ and therefore by~\eqref{taylor}, we have
\begin{align} \label{eq:L_bound}
\frac{1}{2} \<\hbeta_{ik} - \beta_i, \nabla^2\cL_{ik}(\tilde{\beta})(\hbeta_{ik}-\beta_i)\> \le -\<\nabla \cL_{ik}(\beta_i),\hbeta_{ik}-\beta_i\>\,,
\end{align}
where the l.h.s. can be bounded as follows 
\begin{align}\frac{1}{2} \<\hbeta_{ik} - \beta_i, \nabla^2\cL_{ik}(\tilde{\beta})(\hbeta_{ik}-\beta_i)\> &~=~ \frac{1}{2} (\hbeta_{ik} - \beta_i)^{\sT}{\nabla^2\cL_{ik}}(\tilde{\beta})(\hbeta_{ik}-\beta_i) \nonumber\\
& ~\ge~\frac{1}{2} (\hbeta_{ik} - \beta_i)^{\sT}  \lF \Big(\frac{1}{\ell_{k-1}} \sum_{t\in\epi_{k-1}} x_t x_t^\sT \Big)(\hbeta_{ik}-\beta_i)\nonumber\\ &~=~ \frac{\lF}{2\ell_{k-1}}(\hbeta_{ik} - \beta_i)^{\sT}   \Big( X_k^{\sT} X_k \Big)(\hbeta_{ik}-\beta_i) \nonumber\\
&~=~ \frac{\lF}{2 {\ell_{k-1}}} \|X_k(\hbeta_{ik} - \beta_i)\|^2\,. \nonumber
    \end{align}
    {Here,} $X_k$ is the matrix of size $\ell_{k-1}$ by $d$ whose rows are the feature vectors $x_t$ arriving in episode $k-1$.
    {Moreover,} the inequality follows from Lemma~\ref{lem:grad-hessian}.  
  Applying the above bound  in Equation (\ref{eq:L_bound}) and considering the fact that the l.h.s. of this equation is less than or equal to $\|\nabla\cL_{ik}(\beta_i)\|\, \|\hbeta_{ik} - \beta_i\|$, we get 
\[
\frac{1}{2 {\ell_{k-1}}}\lF \|X_k(\hbeta_{ik} - \beta_i)\|^2~\le~  \|\nabla\cL_{ik}(\beta_i)\|\, \|\hbeta_{ik} - \beta_i\|\,.
\]
This implies that on event $\event$, {defined in}~\eqref{event-prob}, the following holds:
\begin{align}\label{main-ineq0}
\frac{1}{2\ell_{k-1}} \lF \|X_k(\hbeta_{ik} - \beta_i)\|^2 ~\le~ \lambda_0  \|\hbeta_{ik} - \beta_i\|\,.
\end{align}
{To present a lower bound on the l.h.s. of the above equation,} we next lower bound the minimum eigenvalue of $\hSigma_k \equiv (X_k^\sT X_k)/\ell_{k-1}$. Since rows of $X_k$ are bounded (recall that $\|x_t\|\le 1$ by our normalization), they are subgaussian. Using~\citep[Remark 5.40]{vershynin2010introduction}, there exist universal constants $c$ and $C$ such that for every {$m\ge0$}, the following holds with probability at least $1 - 2e^{-c{m}^2}$:
\begin{align}
\Big\|\hSigma_k - \Sigma \Big\|_{\rm op} ~\le~ \max(\delta,\delta^2) \quad \text{where} \quad \delta = C\sqrt{\frac{d}{\ell_{k-1}}} + \frac{{m}}{\sqrt{\ell_{k-1}}}\,, 
\end{align}
where {$\Sigma = \E[x_t x_t^\sT] \in \reals^{d\times d}$} is the covariance of the feature vectors. 
Further, $\|A\|_{\rm op}$ represents the operator norm of a matrix $A$ and {is given by $\|A\|_{\rm op}= \inf\{c\ge 0:  \|Av\|\le c \|v\|, \text{ for any vector $v$} \}$.}
By our assumption that $\Sigma$ is positive definite\footnote{{A symmetric matrix is said to be positive definite if all of its eigenvalues are strictly positive.} {In general, if the distribution of features
{$\px$}, is bounded below from zero on an open set around the origin, then its second-moment matrix is positive definite. This assumption holds for many common distributions such as normal and uniform distributions.}}, we can choose constant $0<c_{\min}<1$ such that $\lambda_{\min}(\Sigma) > c_{\min}/\ajb{d}$, with $\lambda_{\min}(A)$ denoting the minimum eigenvalue of a matrix $A$. \footnote{\ajb{Note that by our normalization, the sum of  eigenvalues of $\Sigma$ would be ${{\rm trace}}(\Sigma) = \E[\|x_t\|^2]\le 1$, and that is why the eigenvalues are scaled by $1/d$.}} Set ${m}= c_{\min}\sqrt{\ell_{k-1}}/(4\ajb{d})$, $c_0 = (4C\ajb{d}/{c_{\min}})^2$ and $c_2 = cc_{\min}^2/16\ajb{d}^2$. Then, for $\ell_{k-1}>c_0d$ with probability at least $1 - 2e^{-c_2\ell_{k-1}}$, the following is true: 
\begin{align}\label{eventG}
\Big\|\hSigma_k- \Sigma \Big\|_{\rm op} \le \frac{1}{2\ajb{d}} c_{\min}\,.
\end{align}
Denote by $\cG$ the probability event that~\eqref{eventG} holds. Then, on event $\cG\cap \event$, we have
\begin{align}\label{eq:prediction}
\frac{1}{4\ajb{d}} c_{\min}\lF \|\hbeta_{ik} - \beta_i\|^2 ~\le~ \frac{1}{2\ell_{k-1}} \lF \|X_k(\hbeta_{ik} - \beta_i)\|^2 ~\le~ \lambda_0 \|\hbeta_{ik} - \beta_i\|\,,
\end{align}
{where the first  inequality holds because of Equation (\ref{eventG}) and the definition of the operator norm.}
This results in
\begin{align}
\|\hbeta_{ik} - \beta_i\|^2 \le \frac{16\ajb{d^2}}{c_{\min}^2\lF^2} \lambda_0^2  &= \Big(\frac{8\ajb{d} \uF}{c_{\min}\lF}\Big)^2 \bigg(\sqrt{\frac{\log(\ell_{k-1} d)}{\ell_{k-1}}} +\frac{|\cL_{ik}|}{\ell_{k-1}} \bigg)^2\nonumber\\
&\le 2\Big(\frac{8\ajb{d}\uF}{c_{\min}\lF}\Big)^2  \bigg(\frac{\log(\ell_{k-1} d)}{\ell_{k-1}} + \left(\frac{|\Lie_{ik}|}{\ell_{k-1}}\right)^2  \bigg)\,,\label{main-ineq1}
\end{align}
where in the last line, we used inequality $(a+b)^2\le 2a^2+ 2b^2$.

Note that $$\prob((\event\cap\cG)^c)\le \prob(\event^c)+\prob(\cG^c)\le  d^{-0.5}\ell_{k-1}^{-1.5} +2e^{-c_2\ell_{k-1}}\,,$$
and hence the result follows readily from~\eqref{main-ineq1}, by defining $c_1~\equiv~ 128(\uF/c_{\min})^2$.

%=========================
\subsection{Proof of Lemma~\ref{lem:grad-hessian}}\label{proof:lem-grad-hessian}
{We first show the first result in the lemma. We start with few definitions. }
Let $\tilde{q}_{it} = \ind(v_{it}>\max\{{\b},r_{it}\})$ be the allocation variable as if buyer $i$ was truthful {and the highest competing bid was $\b$.} Then, by definition of set of lies $\Lie_{ik}$, as per \eqref{def:lie}, 
for $t\notin \Lie_{ik}$, we have $q_{it} = \tilde{q}_{it}$. {Let} 
$$\tilde{\mu}_{it}(\beta) = -\tilde{q}_{it}\log'(1-F({w_{it}}(\beta))) - (1-\tilde{q}_{it}) \log'(F({w_{it}}(\beta))$$
be the corresponding quantity to ${\mu_{it}}(\beta)$, where we replace $q_{it}$ by $\tilde{q}_{it}$. {Note  that $w_{it}(\beta) = \max\{{\b},r_{it}\}-\<x_t,\beta\>$ and ${\mu_{it}}(\beta)$ is defined in (\ref{eq:grad-1-1})}.
 {By definition,} ${\mu}_{it}(\beta) = \tilde{\mu}_{it}(\beta)$ for $t\notin \Lie_{ik}$, and so we can write
\begin{align*}
\nabla\cL_{ik}(\beta) &~=~- \ngg{\frac{1}{\ell_{k-1}}\sum_{t\in \epi_{k-1}} \Big(\tilde{\mu}_{it}(\beta) x_t - {\mu_{it}}(\beta) x_t \Big)+ \frac{1}{\ell_{k-1}} \sum_{t\in \epi_{k-1}} \tilde{\mu}_{it}(\beta) x_t}  \\
&~=~ - \frac{1}{\ell_{k-1}}\sum_{t\in \Lie_{ik}} \Big(\tilde{\mu}_{it}(\beta) x_t - {\mu_{it}}(\beta) x_t \Big)+\frac{1}{\ell_{k-1}} \sum_{t\in \epi_{k-1}} \tilde{\mu}_{it}(\beta) x_t\,.  
\end{align*}
 {To bound the first term on the right hand side of the last equation, we note that  
\begin{align}\label{mu-bound}
|{\mu_{it}}(\beta_i)| ~\le~ \underset{|y|\le \maxn}{\sup} \Big\{\max\Big\{\log'F(y), -\log'(1-F(y))\Big\}\Big\} ~=~ \uF\,,
\end{align}
with $u_F$ given by~\eqref{def:uF}. Here, the first inequality follows from definition of ${\mu_{it}}(\beta_i)$ as per~\eqref{eq:grad-1-1} and using the fact that functions $f$ and $F$ are zero outside the interval $[-\maxn,\maxn]$. Similarly, we have $|\tilde{\mu}_{it}(\beta_i)| \le \uF$. Then, considering the fact that 
$\|x_t\|\le1$, we get}
\begin{align}\label{grad-2}
\Big\|\nabla\cL_{ik}(\beta_i)\Big\| \le \frac{2\uF}{\ell_{k-1}} |\Lie_{ik}| + \frac{1}{\ell_{k-1}} \bigg\|\sum_{t\in \epi_{k-1}} \tilde{\mu}_{it}(\beta_i) x_t  \bigg\|  \,.
\end{align}
We next bound the {second} term on the right hand side of~\eqref{grad-2}. {Define $S_{j} = \sum_{t=\ell_{k-1}}^{j-1+\ell_{k-1}}\tilde{\mu}_{it}(\beta_i) x_{t}$, $j= 1, 2, \ldots, \ell_k-1$, and set $S_0=0$. Note that the second term in \eqref{grad-2} is equal to $S_{\ell_k-1}$. We upper bound $\frac{1}{\ell_{k-1}}\Big\|S_{\ell_k-1}\Big\|$ by showing $S_j$ is a vector martingale with bounded differences. }

{Observe that $\|S_{j} - S_{j-1}\|~\le~ \uF \|x_t\|~\le~ \uF \,$. 
 Further, $S_{j}- S_{j-1}  = \tilde{\mu}_{it}(\beta_i) x_{t}$ with $t = \ell_{k-1}+j-1$, and 
\begin{align*}
\E[\tilde{\mu}_{it}(\beta_i)|{w_{it}}(\beta_i)] &~=~ \prob(\tilde{q}_{it} = 1)  \frac{f({w_{it}}(\beta_i))}{1- F({w_{it}}(\beta_i))} - \prob(\tilde{q}_{it} = 0) \frac{f({w_{it}}(\beta_i))}{F({w_{it}}(\beta_i))}\\
&~=~(1- F({w_{it}}(\beta_i))\frac{f({w_{it}}(\beta_i))}{1- F({w_{it}}(\beta_i))}  - F({w_{it}}(\beta_i))\frac{f({w_{it}}(\beta_i))}{F({w_{it}}(\beta_i))} ~=~ 0\,,
\end{align*}
where the equation holds because $z_{it}$ is independent of ${w_{it}}(\beta_i)$. 
Then, considering the fact that $z_{it}$ is independent {from} the history set~\eqref{def:history}, we also have 
\[
\E[S_{j}- S_{j-1} | S_{1},\dots, {S_{j-1}}] = \E[\tilde{\mu}_{it}(\beta_i)|S_{1},\dots, {S_{j-1}}] = 0\,.
\] 
}
{So far, we have established that $S_j$ is a matrix martingale with bounded differences.}  Then, by Matrix Freedman inequality (See Appendix~\ref{sec:freedman}),
\begin{align}\label{Azuma}
 \prob\Big(\|S_{\ell_{k-1}}\|\ge 2 \uF\sqrt{\log(\ell_{k-1}d)\ell_{k-1}}\Big) ~\le~ (d+1) \exp^{-(12/8) \log(\ell_{k-1}d)} ~=~ \frac{1}{d^{0.5}\ell_{k-1}^{1.5}}\,.
\end{align}
{Then, by Equation (\ref{grad-2}) and  definition of $S_{\ell_{k-1}}$ and event $\event$, given in \eqref{event-prob}, we have $\prob(\event) \ge 1- d^{-0.5}\ell_{k-1}^{-1.5}$.  This completes the proof of the first part of the lemma.}

\if false By plugging in for $S_{\ell_{k-1}}$, we obtain \begin{align}\label{Azuma-2}
\prob\bigg(\frac{1}{\ell_{k-1}}\Big\|\sum_{t\in \epi_{k-1}}\tilde{\mu}_{it}(\beta_i) {x_{t}}\Big\| \ge 2\uF\sqrt{ \frac{\log(\ell_{k-1}d)}{\ell_{k-1}} } \bigg) = \prob(\event) \le \frac{1}{d^{0.5}\ell_{k-1}^{1.5}}\,.
\end{align}  
Combining Equations~\eqref{grad-2} and \eqref{Azuma-2} shows that $\prob(\event) \ge 1- d^{-0.5}\ell_{k-1}^{-1.5}$, for the probability event $\event$, defined by\eqref{event-prob}.
\fi

We next prove claim~\eqref{hessian} on the {Hessian} $\nabla^2 \cL({\beta})$. By {characterization}~\eqref{grad-H-characterization}, it suffices to show that ${\eta_{it}}(\beta)\ge \lF$. To see this, 
\begin{align*}
{\eta_{it}}(\beta) &~=~  -q_{it} \log''\left(1- F({w_{it}}(\beta))\right) -(1-q_{it}) \log''\left(F({w_{it}}(\beta))\right)\\
&~\ge~ \inf_{|y|\le \maxn} \left\{\min \left\{ -\log''F(y), -\log''(1-F(y)) \right\}\right\}~\equiv~ \lF\,,
\end{align*}
{where we used the fact that function $F$ is zero outside the interval $[-\maxn,\maxn]$.}
This completes the proof of {the lemma}.
%===================================================

\section{Proof of Proposition~\ref{propo:lies}}\label{proof:propo-lies}
{Here, we need to show claims \eqref{claim:lies}, \eqref{claim:shades}, {and \eqref{claim:overbids}.} {Let  $o_{it} = (b_{it}-v_{it})_+$ and $\shade_{it} = (v_{it}-b_{it})_+$ be the amount of overbidding and shading (underbidding) of buyer $i$ in period $t$, respectively.}
 As a common step to show these claims, we upper
bound the size of sets $\cS_{ik} \equiv\{t:\, t\in{\epi_{k-1}},\, q_{it} = 0,\, \shade_{it} \ge {1/\ell_{k-1}} \}$} {and $\cO_{ik} \equiv\{t:\, t\in{\epi_{k-1}},\, q_{it} = 1,\, o_{it} \ge {1/\ell_{k-1}} \}$.}
In words, a period $t$ belongs to $\cS_{ik}$, if buyer $i$ has shaded his bid significantly in this period, i.e., $s_{it}\ge {1/\ell_{k-1}}$, and he does not get the item in this period. {Similarly, a period $t$ belongs to $\cO_{ik}$, if in this period, buyer $i$ has over-bided  by at least  ${1/\ell_{k-1}}$ amount, and he gets the item in this period.}
 We next use the bounds that we establish on $|\cS_{ik}|$ {and $|\cO_{ik}|$} to prove the {three} aforementioned claims.

{{To}  bound the size of sets $\cS_{ik}$ and $\cO_{ik}$, we use the fact that buyers are utility-maximizer and as a result, they aim for balancing the utility loss due to {bidding untruthfully} with its potential gain.}  We define $u_{it}^-$ as the {utility} that buyer $i$ loses {in period $t\in \epi_{k-1}$} {due to bidding untruthfully}{, relative to the truthful bidding}. {Precisely, given reserve price $r_{it}$ and the highest competing bid {$\b$}, $u_{it}^-$ is defined as follows: 
{\begin{align}\nonumber u_{it}^- ~&=~ (v_{it}-\max\{\b, r_{it}\})\;  \ind(v_{it}> \max\{\b, r_{it}\}) \; \ind(b_{it}< \max\{\b, r_{it}\})\\
~&-~(v_{it}-\max\{\b, r_{it}\})\;  \ind(v_{it}< \max\{\b, r_{it}\}) \; \ind(b_{it}> \max\{\b, r_{it}\})\,. \nonumber 
\end{align}}
{Note that the first and second terms are the loss due to underbidding and overbidding, respectively. }
}
Our lemma below provides a lower bound on {the} expected value of $u_{it}^-$.
\begin{lemma}\label{lem:utilityloss}
For each buyer $i\in [N]$ and {$t\in [\ell_{k-1}, \ell_{k}-1]$,} we have
\begin{align}\label{eq:utilityloss}
\E[u_{it}^-|{\shade_{it}, o_{it}}, q_{it}] ~\ge~ 
{\frac{1}{{2}BN \ell_{k-1}}} \gamma^ t \shade_{it}^2 (1-q_{it})+{ \frac{1}{2BN{\ell_{k-1}}}\gamma^t o_{it}^2 q_{it}} \,,
\end{align}
{where the expectation is taken w.r.t. to the randomness in reserve prices. }
\end{lemma}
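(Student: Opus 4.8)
The plan is to lower-bound $\E[u_{it}^-]$ by keeping only the contribution of the single favorable sub-event in which period $t\in\epi_{k-1}$ is an exploration period (probability $1/\ell_{k-1}$) \emph{and} buyer $i$ is the buyer selected for the random offer (conditional probability $1/N$). This restriction is legitimate because $u_{it}^-\ge 0$ pointwise: in the first term the indicators force $v_{it}>\max\{\b,r_{it}\}$, while in the second term the leading minus sign multiplies the negative quantity $v_{it}-\max\{\b,r_{it}\}$, which is active only when $v_{it}<\max\{\b,r_{it}\}$. Hence discarding every other realization of the reserve randomness can only decrease the expectation, so the inequality is preserved.

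On this sub-event the remaining buyers are excluded (their reserves are set to $\infty$), so buyer $i$ effectively faces a take-it-or-leave-it offer at the random price $r\sim\text{\sf uniform}(0,B)$; the threshold $\max\{\b,r_{it}\}$ then collapses to $r$ and the computation becomes independent of $\b$. First I would handle a shading buyer, for whom $b_{it}=v_{it}-\shade_{it}<v_{it}$. Comparing to truthful bidding, the discounted loss equals $\gamma^t(v_{it}-r)$ precisely when $r\in(b_{it},v_{it})$ --- the truthful bid would win whereas the shaded bid loses, so the realized outcome is $q_{it}=0$ --- and is zero otherwise. Since $0\le b_{it}$ and $v_{it}\le B$, the interval $(b_{it},v_{it})$ lies inside $[0,B]$, and integrating against the uniform density $1/B$ gives $\tfrac{\gamma^t}{B}\int_{b_{it}}^{v_{it}}(v_{it}-r)\,\de r=\tfrac{\gamma^t\shade_{it}^2}{2B}$. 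By the symmetric argument, for an overbidding buyer with $b_{it}=v_{it}+o_{it}>v_{it}$ the discounted loss is $\gamma^t(r-v_{it})$ for $r\in(v_{it},b_{it})$ --- the inflated bid wins while the truthful bid loses, so $q_{it}=1$ --- which integrates to $\tfrac{\gamma^t o_{it}^2}{2B}$.

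Multiplying these conditional losses by the sub-event probability $\tfrac1{\ell_{k-1}}\cdot\tfrac1N$ and using that in any period a buyer either shades or overbids (so exactly one of $\shade_{it},o_{it}$ is nonzero, matched to the realized $q_{it}\in\{0,1\}$) reproduces the two terms $\tfrac{\gamma^t\shade_{it}^2(1-q_{it})}{2BN\ell_{k-1}}$ and $\tfrac{\gamma^t o_{it}^2 q_{it}}{2BN\ell_{k-1}}$, which is exactly the bound~\eqref{eq:utilityloss}. The hard part will be the careful bookkeeping of the mechanism during exploration: justifying that the exclusion of competitors reduces the effective threshold to $r$, so that the bound does not degrade when $\b$ is large; matching each quadratic loss integral to the correct realized allocation $q_{it}$; and confirming that the relevant price intervals remain within the support $[0,B]$ of the exploration distribution (which needs bids in $[0,B]$, or else truncating the overbidding integral at $B$).
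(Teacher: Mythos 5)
Your proposal is correct and follows essentially the same argument as the paper's proof: both lower-bound $\E[u_{it}^-]$ by restricting (via nonnegativity of $u_{it}^-$) to the exploration sub-event in which buyer $i$ is selected, which has probability $\frac{1}{N\ell_{k-1}}$, and both integrate the shading/overbidding loss over the uniform price $r\sim\text{\sf uniform}(0,B)$ to obtain the factors $\shade_{it}^2/(2B)$ and $o_{it}^2/(2B)$, multiplied by the sub-event probability and matched to the realized $q_{it}$. If anything, you are slightly more careful than the paper, which integrates over $(v_{it},v_{it}+o_{it})$ without noting that this interval must be truncated at $B$ when the overbid exceeds the support of the exploration price.
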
 
\begin{proof}{Proof of Lemma~\ref{lem:utilityloss}.}
Note that {in} each period, buyers may suffer a utility loss due to bidding untruthfully.  {We start with characterizing  the impact of underbidding. We then focus on overbidding.}

\textbf{Underbidding:} After observing the outcome of auction $t$, if buyer $i$ receives the item, then underbidding has no effect on the buyer's instant utility. But if the buyer does not receive the item, then there is a chance that is due to the underbidding. To lower bound $u_{it}^-$, note that {in} each period $t\in \epi_{k-1}$, with probability {$1/\ell_{k-1}$}, the firm does not run a second-price auction. Instead, she picks one of the {buyers} equally likely and for a reserve price, chosen uniformly at random from $[0,B]$, allocates the item to that buyer if his {bid} exceeds the corresponding reserve price.
Therefore, if a buyer $i$ shades his bid by {$\shade_{it}$}, i.e., {$\shade_{it} = (v_{it} - b_{it})_+$}, then the utility loss incurred relative to being truthful can be lower bounded as follows: \begin{align} \label{eq:shading}
\E[u_{it}^-|\shade_{it},  q_{it}, v_{it}] &~\ge~ \frac{\gamma^t (1-q_{it})}{BN{\ell_{k-1}}} \int_{v_{it}-\shade_{it}}^{v_{it}} (v_{it} - r)  \de r ~{=}~  \frac{1}{2BN{\ell_{k-1}}}\gamma^t \shade_{it}^2 (1-q_{it})\,.
\end{align}

{\textbf{Overbidding:} After observing the outcome of auction $t$, if buyer $i$ does not get the item, then overbidding has no effect on the buyer's instant utility. But if the buyer  receives the item, then there is a chance that is due to the overbidding. Then, one can follow a similar argument that we used for underbidding to show that
\begin{align}\label{eq:overbidding}
\E[u_{it}^-|o_{it}, q_{it}, v_{it}] &~\ge~ \frac{\gamma^t (q_{it})}{BN{\ell_{k-1}}} \int_{v_{it}}^{v_{it}+o_{it}} (-v_{it} + r)  \de r ~{=}~  \frac{1}{2BN{\ell_{k-1}}}\gamma^t o_{it}^2 q_{it}\,.
\end{align}
Then, the result follows from Equations (\ref{eq:shading}) and (\ref{eq:overbidding}), and by taking expectation  {w.r.t $v_{it}$}, from both sides of these equations, conditioning on $q_{it}$, $\shade_{it}$, and $o_{it}$.}

\qed \end{proof}
{With a slight abuse of notation, let $U^-_{i(k-1)}$ be the utility loss of buyer $i$ in episode $k-1$ due to untruthful bidding. That is, $U^{-}_{i(k-1)} = \sum_{t\in \epi_{k-1}} u^-_{it}$. }
 {In contrast to {the utility loss $U^-_{i(k-1)}$}, we define {$U^+_{ik}$}  as the total utility gain that buyer $i$ can achieve by being untruthful in episode $k-1$. More precisely, we fix all other buyer's bidding strategy and consider a reference strategy for buyer $i$. The reference strategy is the same as buyer $i$'s strategy up to episode $k-1$, and in episode $k-1$, the reference policy is just the truthful bidding strategy. We define {$U_{ik}^+$} as the total excess {utility} that buyer $i$ can earn, over the reference strategy.} 
{Considering the fact that 
the bidding strategy of buyer $i$ in episode $k-1$ can only benefit him in the next episodes $k, k+1, \dotsc$, we have}
\begin{align}\label{eq:utilitygain}
{U^+_{ik}} &~\le~ \sum_{t= \ell_{k}}^\infty \gamma^t {v_{it}} ~\le~ B \sum_{t= \ell_{k}}^{\infty} \gamma^t 
~=~ {B}  \frac{\gamma^{\ell_{k}}}{1-\gamma}\,,
\end{align}  
{where the second inequality holds because $v_{it}\le B$.} Indeed, upper bound~\eqref{eq:utilitygain} applies
to the total {utility} any buyer can hope to collect {over periods $t\ge \ell_{k}$.} 

Now, since we are assuming that the strategic buyers are maximizing their {cumulative utility}, it must be the case that 
{$$\E\Big[{{U^+_{ik}} - U^{-}_{i(k-1)}} \Big]\ge 0\,.$$}
Using Lemma~\ref{lem:utilityloss} along with upper bound~\eqref{eq:utilitygain}, we obtain
{
\begin{align}\label{main-ineq}
B  \frac{\gamma^{\ell_{k}}}{1-\gamma} &\ge E[U^+_{ik}] \ge E[U^-_{i(k-1)}] \nonumber\\
&=\sum_{t\in E_{k-1}} \E[u^-_{it}]= \sum_{t\in E_{k-1}} \E[\E[u^-_{it}|o_{it},q_{it},s_{it}]]\nonumber\\
& \stackrel{(a)}{\ge} \sum_{t\in \cS_{ik}\cup \cO_{ik}} \E\Big[\E[u^-_{it}|o_{it},q_{it},s_{it}] \ind(|\cS_{ik}\cup \cO_{ik}|\ge m)\Big]\nonumber\\
&\stackrel{(b)}{\ge}\sum_{t\in \cS_{ik}\cup \cO_{ik}} \E\Big[\frac{\gamma^t}{2BN\ell_{k-1}^3}  \ind(|\cS_{ik}\cup \cO_{ik}|\ge m)\Big]\nonumber\\
&\stackrel{(c)}{\ge}\sum_{t= \ell_k-|\cS_{ik}\cup \cO_{ik}|}^{\ell_k - 1} \E\Big[\frac{\gamma^t}{2BN\ell_{k-1}^3}  \ind(|\cS_{ik}\cup \cO_{ik}|\ge m)\Big]\nonumber\\
&=\sum_{t= \ell_k -m}^{\ell_k-1} \frac{\gamma^t}{2BN\ell_{k-1}^3} \prob(|\cS_{ik}\cup \cO_{ik}|\ge m)\nonumber\\
& = \frac{\gamma^{\ell_k}}{2BN\ell_{k-1}^3} \frac{\gamma^{-m}-1}{1-\gamma}   \prob(|\cS_{ik}\cup \cO_{ik}|\ge m)
\end{align}
where the parameter $m\ge 0$ in (a) is arbitrary but fixed; (b) follows from Lemma~\ref{lem:utilityloss} and definition of sets $\cS_{ik}$ and $\cO_{ik}$; $(c)$ holds since $\gamma\in (0,1)$.

By simplifying Equation~\eqref{main-ineq}, we have that for any $m\ge 0$,
\[
 (\gamma^{-m}-1) \prob(|\cS_{ik}\cup \cO_{ik}|\ge m) \le 2B^2N\ell_{k-1}^3\,.
\]
Taking $m = \log(2B^2N\ell_{k-1}^4+1)/\log(1/\gamma)$ and rearranging the terms, the above inequality yields
\[
\prob\Big(|\cS_{ik}\cup \cO_{ik}| \ge \frac{\log(2B^2N\ell_{k-1}^4+1)}{\log\Big(\frac{1}{\gamma}\Big)}\Big) \le \frac{1}{\ell_{k-1}}\,.
\]

}
{Finally using that $\cS_{ik}$ and $\cO_{ik}$ are disjoint, we have that with probability at least $1-1/\ell_{k-1}$,
\begin{align}\label{eq:S}
{|\cS_{ik}| + |\cO_{ik}| ~\le~ \frac{\log\left({2}B^2N {\ell_{k-1}^3} + 1\right)}{\log(1/\gamma)} ~\le~ \ngg{C_1\frac{ \log(\ell_{k-1})}{\log(1/\gamma)} +C_2 \frac{\log(N)}{\log(1/\gamma)}}\,,}
\end{align}
for a constant $C_1$ and constant $C_2 = C_2(B)$.}

{So far, we have established an upper bound on $|\cS_{ik}| $ {and $|\cO_{ik}|$.}
We next bound {$|\Lie_{ik}|$}. {With this aim, we partition the set of lies $\Lie_{ik}$ into two subsets $\Lie_{ik}^{s}$ and $\Lie_{ik}^{o}$, defined below.
\begin{align}\label{def:lie_s}
\Lie_{ik}^s = \Big\{t: ~%\ell_{k-1}\le t\le \ell_k - 1
t\in \epi_{k-1}
,\,  \ind(v_{it} > \max\{\b, r_{it}\}) ~=~1, ~~~ \ind(b_{it} > \max\{\b, r_{it}\})~=~ 0 \Big\}\,,\\ \label{def:lie_o}
\Lie_{ik}^o = \Big\{t: ~%\ell_{k-1}\le t\le \ell_k - 1
t\in \epi_{k-1}
,\,  \ind(v_{it} > \max\{\b, r_{it}\}) ~=~0, ~~~ \ind(b_{it} > \max\{\b, r_{it}\})~=~ 1 \Big\}\,.
\end{align}  
In the following, we bound $|\Lie_{ik}^s|$ and $|\Lie_{ik}^o|$ in order to provide an upper bound on $|\Lie_{ik}|$. }

{We start with bounding $|\Lie_{ik}^s|$.}  
{Define $\cS_{ik}^c \equiv\{t:\, t\in{\epi_{k-1}},\, q_{it} = 1 \text{ or }  \shade_{it} < {1/\ell_{k-1}} \}$. Then, 
$| \Lie_{ik}^s| \le |\cS_{ik}|+  |\cS_{ik}^c\cap \Lie_{ik}^s|$. We have already bounded $ |\cS_{ik}|$. In the following, we bound $|\cS_{ik}^c\cap \Lie_{ik}^s|$.}

By definition~\eqref{def:lie_s}, we first note that for {$t\in \Lie_{ik}^s$}, $q_{it}=0$. Therefore, for $t\in \cS_{ik}^c\cap \Lie_{ik}^s$, we have $s_{it}<1/\ell_{k-1}$. {Let $\cF_t \equiv \{(x_\tau, b_{-i\tau}^+, r_{i\tau}):\, 1\le \tau \le t\}$. Then, by substituting for {$b_{it} = v_{it} - s_{it}$}\footnote{{Note that  here $s_{it}>0$ as $t\in \Lie_{ik}^s$ and consequently $b_{it}< v_{it}$.}} and $v_{it} = \<x_t,\beta_i\>+z_{it}$, we have
\begin{align}
\prob(t\in \cS_{ik}^c\cap{\Lie_{ik}^s} | \cF_t) &= \prob\left(z_{it}\in [\max\{\b,r_{it}\} - \<x_t,\beta_i\>, \max\{\b,r_{it}\} - \<x_t,\beta_i\> + s_{it}] \text{ and } s_{it}\le \frac{1}{\ell_{k-1}} \Big|\cF_t\right)\nonumber\\
&\le \prob\left(z_{it}\in \big[\max\{\b,r_{it}\} - \<x_t,\beta_i\>, \max\{\b,r_{it}\} - \<x_t,\beta_i\> + \frac{1}{\ell_{k-1}}\big] \Big|\cF_t\right)\nonumber\\
&=\int_{\max\{\b,r_{it}\} - \<x_t,\beta_i\>}^{\max\{\b,r_{it}\} - \<x_t,\beta_i\> + 1/\ell_{k-1}} f(z)\de z 
\le  \frac{c}{\ell_{k-1}}\,, \label{eq:probB}
\end{align} 
where the equality holds because $z_{it}$ is independent of $\cF_t$. In addition, in the last step, $c \equiv \max_{v\in [-\maxn,\maxn]} f(v)$ is the bound on the noise density.\footnote{{Note} that the density $f$ is continuous and hence attains its maximum over compact sets.}

Define $\zeta_t~\equiv~ \ind(t\in \cS_{ik}^c\cap \Lie_{ik})$ and  {$\omega_t~\equiv~ \prob(t\in \cS_{ik}^c\cap \Lie_{ik}|\cF_t)$.} Then,  $|\cS_{ik}^c\cap \Lie_{ik}| = \sum_{t=\ell_{k-1}}^{\ell_k-1} \zeta_t$ and 
$\E(\zeta_t - \omega_t|\cF_t) = 0$. Therefore, by using a multiplicative Azuma inequality (see e.g.~\cite[Lemma 10]{koufogiannakis2014nearly}), for any $\epsilon\in (0,1)$ and any $\eta > 0$ we have
\begin{align}\label{Azuma-Mul}
\prob\Big(|\cS_{ik}^c\cap \Lie_{ik}^s| ~\ge~ \frac{1+\eta}{1-\epsilon} \sum_{t=\ell_{k-1}}^{\ell_k - 1} \omega_t  \Big)\le \exp\Big(-\epsilon\eta \sum_{t=\ell_{k-1}}^{\ell_k - 1} \omega_t\Big)\,.
\end{align} 
We use the shorthand $A \equiv \sum_{t=\ell_{k-1}}^{\ell_k - 1} \omega_t$. By setting $\epsilon = 1/2$, $\eta = (2/A) \log(\ell_{k-1}/\delta)$, the r.h.s of Equation~\eqref{Azuma-Mul} becomes $\delta/\ell_{k-1}$.  Further, recalling Equation~\eqref{eq:probB}, we have $A\le \ell_{k-1} (c/\ell_{k-1}) = c$. Hence, rewriting bound~\eqref{Azuma-Mul}, we get that with probability at least $1-\delta/\ell_{k-1}$,
\[
{|\cS_{ik}^c\cap \Lie_{ik}^s|} ~=~ \sum_{t=\ell_{k-1}}^{\ell_k-1} \zeta_t ~\le~ 2(1+\eta) A ~\le~ 2c + 4 \log(\ell_{k-1}/\delta)\,.
\]

\ngg{Combining the above inequality with bound~\eqref{eq:S}, we get
\[
|\Lie_{ik}^s|  ~\le~ |\cS_{ik}^c\cap \Lie_{ik}^s| +|\cS_{ik}| ~\le~ 2c + 4 \log(\ell_{k-1}/\delta) +  \ngg{C_1\frac{ \log(\ell_{k-1})}{\log(1/\gamma)} +C_2 \frac{\log(N)}{\log(1/\gamma)}}\,.
\]
{One can establish a similar bound for $|\Lie_{ik}^o|$. Then, claim~\eqref{claim:lies} follows by using the bounds on $|\Lie_{ik}^s|$ and $|\Lie_{ik}^o|$.}
}}

\ngg{To prove claim~\eqref{claim:shades}, we write
\begin{align*}
\sum_{t\in E_{k-1}} \shade_{it}(1-q_{it}) &~=~ \sum_{t\in \epi_{k-1}} \shade_{it}(1-q_{it}) \ind(t\in \cS_{ik})+ \sum_{t\in \epi_{k-1}} \shade_{it}(1-q_{it}) \ind(t\in \cS_{ik}^c)\\
&~\le~ B|\cS_{ik}| +  \sum_{t\in E_{k-1}} \frac{1}{\ell_{k-1}}\\
&~\le~ B\left(\ngg{C_1\frac{ \log(\ell_{k-1})}{\log(1/\gamma)} +C_2 \frac{\log(N)}{\log(1/\gamma)}}\right)+ 1\,,  
\end{align*}
{where we used the fact that $(i)$ $\shade_{it}\le v_{it} \le B$, and $(ii)$ for any $t\in \cS_{ik}^c$, either $q_{it}=1$ or $\shade_{it} < {1}/{\ell_{k-1}}$}. This complete the proof of claim~\eqref{claim:shades}.

{Finally, we show claim~\eqref{claim:overbids}. Let $\cO_{ik}^c \equiv\{t:\, t\in{\epi_{k-1}},\, q_{it} = 0 \text{ or }  o_{it} < {1/\ell_{k-1}} \}$. Then, we write
\begin{align*}
\sum_{t\in E_{k-1}} o_{it}q_{it} &~=~ \sum_{t\in \epi_{k-1}} o_{it}q_{it} \ind(t\in \cO_{ik})+ \sum_{t\in \epi_{k-1}} o_{it}q_{it} \ind(t\in \cO_{ik}^c)\\
&~\le~ \M|\cO_{ik}| +  \sum_{t\in E_{k-1}} \frac{1}{\ell_{k-1}}\\
&~\le~ \M\left(\ngg{C_1\frac{ \log(\ell_{k-1})}{\log(1/\gamma)} +C_2 \frac{\log(N)}{\log(1/\gamma)}}\right) + 1\,.  
\end{align*}
Here,
 we used the fact that $(i)$ $o_{it} \le \M$, and $(ii)$ for any $t\in \cO_{ik}^c$, either $q_{it}=0$ or $o_{it} < {1}/{\ell_{k-1}}$}. This {completes} the proof of claim~\eqref{claim:overbids}.
 }}

%================================ Appendix for unknown distribution ==========================

\section{Proof of Proposition~\ref{propo:learning2}}\label{proof:propo-learning2} 
The proposition can be proved by following similar steps {used in} the proof of Proposition~\ref{propo:learning}. 
{For the quadratic loss function
\[
\tilde{\cal L}_{ik}(\pv) = \frac{1}{|I_k|}\sum_{t\in I_k} (BNq_{it} - \<x_t,\pv\>)^2\,,
\] 
the gradient and  Hessian are given by
\begin{align}\label{hessian-2}
\nabla\tilde\cL_{ik}(\pv) = \frac{1}{|\initial_k|} \sum_{t \in \initial_k} {\mu_{it}}(\pv) x_t\,,\quad \nabla^2 \tilde\cL_{ik}(\pv) = \frac{1}{|\initial_k|} \sum_{t \in \initial_k} 2 x_t x_t^\sT\,,
\end{align}
where with a slight abuse of notation, ${\mu_{it}}(\beta) = 2(\<x_t,\beta\> - BNq_{it})$.

By the second-order Taylor's theorem, expanding around $\beta_i$, we have  
\begin{align}\label{taylor2}
\tilde\cL_{ik}(\beta_i) - \tilde\cL_{ik}(\hbeta_{ik}) = -\<\nabla \tilde\cL_{ik}(\beta_i),\hbeta_{ik}-\beta_i\> -\frac{1}{2} \<\hbeta_{ik} - \beta_i, \nabla^2\tilde\cL_{ik}(\tilde{\beta})(\hbeta_{ik}-\beta_{ik})\>\,,
\end{align}
for some $\tilde{\beta}$ on the segment connecting $\beta_i$ and $\hbeta_{ik}$. 

By optimality of $\hbeta_{ik}$, we have  $\tilde\cL(\hbeta_{ik})\le \tilde\cL(\beta_i)$ and therefore by~\eqref{taylor2}, we have
\begin{align} 
\frac{1}{2} \<\hbeta_{ik} - \beta_i, \nabla^2\tilde\cL_{ik}(\tilde{\beta})(\hbeta_{ik}-\beta_i)\> \le -\<\nabla \tilde \cL_{ik}(\beta_i),\hbeta_{ik}-\beta_i\>\,.
\end{align}
Using Equation~\eqref{hessian-2}, the r.h.s in the above equation can be written as 
\begin{align}\frac{1}{2} \<\hbeta_{ik} - \beta_i, \nabla^2\tilde \cL_{ik}(\tilde{\beta})(\hbeta_{ik}-\beta_i)\> 
& ~=~(\hbeta_{ik} - \beta_i)^{\sT} \Big(\frac{1}{|I_k|} \sum_{t\in\epi_{k-1}} x_t x_t^\sT \Big)(\hbeta_{ik}-\beta_i)\nonumber\\ &~=~ \frac{1}{|I_k|}(\hbeta_{ik} - \beta_i)^{\sT}   \Big( X_k^{\sT} X_k \Big)(\hbeta_{ik}-\beta_i) \nonumber\\
&~=~ \frac{1}{|I_k|} \|X_k(\hbeta_{ik} - \beta_i)\|^2\,. \nonumber
    \end{align}
    Here, $X_k$ is the matrix of size $|I_k|$ by $d$, whose rows are the feature vectors $x_t$, with $t\in I_k$ (the exploration phase of episode $k$).
   Therefore, 
\begin{align}\label{eq:Bound-2}
\frac{1}{|I_k|} \|X_k(\hbeta_{ik} - \beta_i)\|^2\le -\<\nabla \tilde \cL_{ik}(\beta_i),\hbeta_{ik}-\beta_i\> \le  \|\nabla\tilde \cL_{ik}(\beta_i)\|\, \|\hbeta_{ik} - \beta_i\|\,.
\end{align}

In the next lemma, we bound the gradient of the quadratic loss function. This Lemma is analogous to Lemma~\ref{lem:grad-hessian}.
}
%%%%%
\begin{lemma}\label{lem:grad-hessian-2}
Consider the quadratic loss~\eqref{eq:L_2} and define the probability event
\begin{eqnarray}
\event \equiv \big\{\|\nabla \tilde\cL_{ik}({\beta_i})\| \le \lambda_0 \big\}\,,\quad \text{with} \quad \lambda_0 \equiv {4B(N+1)} \sqrt{\frac{\log(\ell_{k-1} d)}{|\initial_k|}} + {4B(N+1)} \frac{|\Lie_{ik}|}{|\initial_k|}\,. \label{event-prob-2}
\end{eqnarray}
Then, we have $\prob(\event)\ge 1 - d^{-0.5}\ell_{k-1}^{-1.5}$. 
\end{lemma}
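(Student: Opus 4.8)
The plan is to mirror the first part of the proof of Lemma~\ref{lem:grad-hessian}, replacing the log-likelihood score by the gradient of the quadratic loss and replacing the role of $\uF$ by the increment bound for this new score. Recall from~\eqref{hessian-2} that $\nabla\tilde\cL_{ik}(\beta_i) = \tfrac{1}{|\initial_k|}\sum_{t\in\initial_k} \mu_{it}(\beta_i)\, x_t$ with $\mu_{it}(\beta_i) = 2(\<x_t,\beta_i\> - BN q_{it})$. As in the CORP analysis, I would introduce the truthful allocation $\tilde q_{it}$ (the outcome buyer $i$ would obtain by bidding $v_{it}$ against the same realized buyer selection $i^\circ_t$ and reserve $r_t$) and the associated truthful score $\tilde\mu_{it}(\beta_i) = 2(\<x_t,\beta_i\> - BN\tilde q_{it})$. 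By the definition of the lie set in~\eqref{def:lie}, one has $q_{it}=\tilde q_{it}$ for every $t\notin\Lie_{ik}$, so $\mu_{it}(\beta_i)=\tilde\mu_{it}(\beta_i)$ off $\Lie_{ik}$.

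First I would decompose
\[
\nabla\tilde\cL_{ik}(\beta_i) = \frac{1}{|\initial_k|}\sum_{t\in\initial_k}\tilde\mu_{it}(\beta_i)\,x_t \;+\; \frac{1}{|\initial_k|}\sum_{t\in\Lie_{ik}}\big(\mu_{it}(\beta_i)-\tilde\mu_{it}(\beta_i)\big)\,x_t\,,
\]
and control the two pieces separately. The lie-correction term is handled deterministically: since $q_{it},\tilde q_{it}\in\{0,1\}$ and $|\<x_t,\beta_i\>|\le B$, both $|\mu_{it}(\beta_i)|$ and $|\tilde\mu_{it}(\beta_i)|$ are at most $2B(N+1)$, so together with $\|x_t\|\le 1$ this piece has norm at most $4B(N+1)\,|\Lie_{ik}|/|\initial_k|$, which is exactly the second term of $\lambda_0$.

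The bulk of the work is the first, stochastic, piece, and here the key structural fact is that the exploration design makes $\tilde\mu_{it}(\beta_i)$ a mean-zero martingale difference. Concretely, in each pure-exploration period the firm draws the offered buyer uniformly and the price $r_t\sim{\sf uniform}(0,B)$ independently of the history and of $x_t$; as recorded in the discussion of SCORP, this yields $\E[BN\tilde q_{it}\mid x_t] = \<x_t,\beta_i\>$ under truthful bidding, whence $\E[\tilde\mu_{it}(\beta_i)\mid\cF_{t-1},x_t]=0$. I would therefore define the vector partial sums $S_j=\sum\tilde\mu_{it}(\beta_i)\,x_t$, verify they form a martingale with increments bounded in norm by $2B(N+1)$ and predictable quadratic variation at most $|\initial_k|\,(2B(N+1))^2$, and apply the Matrix Freedman inequality (Appendix~\ref{sec:freedman}) exactly as in~\eqref{Azuma}, with the threshold $2\cdot 2B(N+1)\sqrt{\log(\ell_{k-1}d)\,|\initial_k|}$. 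Keeping $\log(\ell_{k-1}d)$ (rather than $\log(|\initial_k|d)$) inside the deviation gives $\|S_{|\initial_k|}\|\le 4B(N+1)\sqrt{\log(\ell_{k-1}d)\,|\initial_k|}$ with probability at least $1-d^{-0.5}\ell_{k-1}^{-1.5}$; dividing by $|\initial_k|$ produces the first term of $\lambda_0$. Combining the two bounds by the triangle inequality yields $\|\nabla\tilde\cL_{ik}(\beta_i)\|\le\lambda_0$ on this event, completing the proof.

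The main obstacle I anticipate is the unbiasedness verification underlying the martingale structure: one must carefully separate the three independent sources of randomness in an exploration period (the uniform buyer selection, the uniform reserve, and the valuation noise $z_{it}$) and confirm that $\E[BN\tilde q_{it}\mid x_t]=\<x_t,\beta_i\>$ holds exactly, which in turn relies on $v_{it}\in[0,B]$ so that $\prob(r_t< v_{it}\mid v_{it})=v_{it}/B$. Everything else is a faithful transcription of the CORP argument under the substitutions $\uF\rightsquigarrow 2B(N+1)$ and $\ell_{k-1}\rightsquigarrow|\initial_k|$ for the martingale length, while retaining $\log(\ell_{k-1}d)$ in the deviation bound so that the failure probability still reads $d^{-0.5}\ell_{k-1}^{-1.5}$.
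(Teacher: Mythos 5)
Your proposal is correct and follows essentially the same route as the paper's proof: the identical decomposition of $\nabla\tilde\cL_{ik}(\beta_i)$ into the truthful-score martingale sum plus a lie-correction term, the same deterministic bound $4B(N+1)|\Lie_{ik}|/|\initial_k|$ on the latter, the same unbiasedness computation $\E[BN\tilde q_{it}\mid v_{it},x_t]=v_{it}$ from the uniform buyer selection and uniform reserve, and the same application of the Matrix Freedman inequality with $\log(\ell_{k-1}d)$ retained so that the failure probability reads $d^{-0.5}\ell_{k-1}^{-1.5}$. The obstacle you flag (that the identity $\prob(r_t<v_{it}\mid v_{it})=v_{it}/B$ requires $v_{it}\in[0,B]$) is indeed the one implicit assumption the paper also makes at this step.
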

Proof of Lemma~\ref{lem:grad-hessian-2} is given in Section~\ref{proof:lem-grad-hessian-2}. 
{
Using Lemma~\ref{lem:grad-hessian-2} in bound~\eqref{eq:Bound-2}, we get
\begin{align}
\frac{1}{|I_k|} \|X_k(\hbeta_{ik} - \beta_i)\|^2\le  \lambda_0\, \|\hbeta_{ik} - \beta_i\|\,.
\end{align}
The proof of Proposition~\ref{propo:learning2} then follows exactly along the lines after Equation~\eqref{main-ineq0} in the proof of its counterpart, Proposition~\ref{propo:learning}.}
%==================
\subsection{Proof of Lemma~\ref{lem:grad-hessian-2}} \label{proof:lem-grad-hessian-2}

Let $\tilde{q}_{it} = \ind(v_{it}> {\max\{b_t^-,r_{it}\}})$ be the allocation variables as if buyer $i$ was truthful. Then by definition of set of lies $\Lie_{ik}$, as per \eqref{def:lie}, 
for {$t\notin \Lie_{ik}$}, we have $q_{it} = \tilde{q}_{it}$. We define $\tilde{\mu}_{it}(\beta)$ as the counterpart of ${\mu_{it}}(\beta)$, where we replace $q_{it}$ by $\tilde{q}_{it}$, i.e., 
$$\tilde{\mu}_{it}(\beta) = 2(\<x_t,\beta\> - BN\tilde{q}_{it})\,.$$
{Recall that ${\mu_{it}}(\beta) = 2(\<x_t,\beta\> - BNq_{it})$.} 
{Since} ${\mu}_{it}(\beta) = \tilde{\mu}_{it}(\beta)$ for $t\notin \Lie_{ik}$, we can write
\begin{align} \nonumber 
\nabla{\tilde\cL}_{ik}(\beta) &~=~ \frac{1}{|\initial_k|} \sum_{t\in\initial_{k}} \tilde{\mu}_{it}(\beta) x_t - \frac{1}{|\initial_k|} \sum_{t\in\initial_{k}}  \left\{\tilde{\mu}_{it}(\beta) x_t - {\mu_{it}}(\beta) x_t \right\}\\
&~=~  \frac{1}{|\initial_k|} \sum_{t\in\initial_{k}} \tilde{\mu}_{it}(\beta) x_t - \frac{1}{|\initial_k|} \sum_{t\in\Lie_{ik}\cap \initial_k}  \left\{\tilde{\mu}_{it}(\beta) x_t - {\mu_{it}}(\beta) x_t \right\}\,. \label{eq:grad_2}
\end{align}
 {To bound  ${\nabla{\tilde\cL}_{ik}(\beta_i)}$, we start with bounding $|{\mu}_{it}(\beta_i)|$ and $|\tilde{\mu}_{it}(\beta_i)|$.}  By our normalization $\|x_t\|\le 1$. Further, since $\|\beta_i\|\le \maxpv< B$, we obtain $|\<x_t,\beta_i\>|\le B$. This implies that  $|{\mu_{it}}(\beta_i)| = 2|\<x_t,\beta_i\> - BNq_{it} |\le 2B(N+1)$. 
Similarly, we have $|\tilde{\mu}_{it}({\beta_i})| \le 2B(N+1)$. 
 {Therefore, by  Equation (\ref{eq:grad_2}), we have }
\begin{align}\label{grad-2-2}
{\Big\|\nabla{\tilde\cL}_{ik}(\beta_i)\Big\| ~\le~ \frac{1}{|\initial_k|}  \bigg\|\sum_{t\in\initial_{k}}  \tilde{\mu}_{it}(\beta_i) x_t  \bigg\|+ \frac{4B(N+1)}{|\initial_k|} |\Lie_{ik}|  \,,}
\end{align}
where we used that $\|x_t\|\le 1$. {To complete the proof of the first part of the lemma, we bound the first term on the right hand side of~\eqref{grad-2-2} using the Matrix Freedman inequality for bounded martingale matrices (see Appendix~\ref{sec:freedman}). {Similar to the proof of Lemma \ref{lem:grad-hessian}, define $S_j = \sum_{t=\ell_{k}}^{j-1+\ell_{k}}\tilde{\mu}_{it}(\beta_i) x_{t}$ and $S_0 = 0$. In order to show that $S_j$ is a vector martingale with bounded differences,  we need to show that $\E[\tilde{\mu}_{it}(\beta_i)x_t] = 0$ and bound $\|\tilde{\mu}_{it}(\beta_i)x_t\|$. }}

Recall that in the pure exploration phase, for a buyer chosen uniformly at random, we set the reserve $r\sim {\sf uniform(0,B)}$, and for other buyers we set their reserves to $\infty$. {Therefore, for any period $t$ in the pure exploration phase of episode $k$, i.e.,  for any $t\in \initial_k$, we have}
\[\prob(\tilde{q}_{it} = 1|v_{it},x_t) ~=~ \frac{v_{it}}{BN}\,.\]
As a result, $\E[\tilde{q}_{it}|v_{it}, x_t] = v_{it}/(BN)$, {where the expectation is taken w.r.t. to the randomness in reserve prices.} Thus,  
\begin{align}
\E[\tilde{\mu}_{it}(\beta_i)|x_{t}] ~=~ 2\E[(BN\E[\tilde{q}_{it}|v_{it}, x_t] - \<x_t,\beta_i\>)~|~x_t] ~=~ 2\E[v_{it} -\<x_t,{\beta_i}\>|x_t] ~=~ 2\E[z_{it}|x_t] = 0\,. 
\end{align}
This also implies that {$\E[\tilde{\mu}_{it}(\beta_i)x_t] = 0$}. Further, {$\|\tilde{\mu}_{it}(\beta_i)x_t\| \le 2B(N+1) \|x_t\| \le 2B(N+1)$}.
Thus, by virtue of Matrix Freedman inequality, we have
\begin{align}\label{Azuma-2-2}
\prob\bigg(\frac{1}{|\initial_k|} \bigg\|\sum_{t\in\initial_{k}} \tilde{\mu}_{it}(\beta_i) {x_{t}}\Big\| \ge 4B(N+1) \sqrt{\frac{\log(\ell_{k-1}d)}{|\initial_k|}} \bigg) \le (d+1) \exp^{-(12/8) \log(\ell_{k-1}d)} = \frac{1}{d^{0.5}\ell_{k-1}^{1.5}}\,.
\end{align}  
Combining Equations~\eqref{grad-2-2} and \eqref{Azuma-2-2} shows that $\prob(\event) \ge 1- d^{-0.5}\ell_{k-1}^{-1.5}$, {where the probability event $\event$ is defined in \eqref{event-prob-2}.}

%=======================================
\section{Proof of Proposition~\ref{propo:lies2}}\label{proof:lies2}
The proof is based on comparing the utility loss of an untruthful buyer with his future utility gain and using the fact that buyers are utility-maximizing. Note that in SCORP, any utility loss due to untruthful bidding can only happen in the exploration phase of the episodes, as the submitted bids in the exploitation phase of the episodes are not used in estimating the preference vectors. Therefore, during the exploitation phase,  there is no incentive for buyers to deviate from being truthful. Hence, to bound the utility loss of a buyer due to untruthful bidding, we only need to focus on the pure exploration phase. 

By focusing on the exploration phase, it is easy to verify that by following similar steps as in the proof of Lemma~\ref{lem:utilityloss}, we have 
\begin{align}\label{eq:utilityloss2}
%{\frac{1}{BN}} \gamma^ t \shade_{it}^2 (1-q_{it}) \,,\\
{\E[u_{it}^-|{\shade_{it}, o_{it}}, q_{it}] ~\ge~ 
{\frac{1}{{2}BN}} \gamma^ t \shade_{it}^2 (1-q_{it})+{ \frac{1}{2BN}\gamma^t o_{it}^2 q_{it}}\,, }
\end{align}
{where the expectation is taken w.r.t. to the randomness in reserve prices. }

Observe that this bound is stronger than Lemma~\ref{lem:utilityloss} in that the factor ${1}/({2}BN\ell_{k-1})$ is replaced by ${1}/{{2}BN}$. The reason is that \emph{in each period} of pure exploration phase, for a randomly chosen buyer we set his reserve $r\sim{\sf uniform}(0,B)$ and we set other buyer's reserves to $\infty$. This is in contrast to the CORP policy (under known distribution $F$)  that we do such exploration only with probability $1/\ell_{k-1}$ in each period of episode $k$.
We remove the proof of Equation~\eqref{eq:utilityloss2}, as it is very similar to the proof of Lemma~\ref{lem:utilityloss}.

By having Equation~\eqref{eq:utilityloss2} in place, the rest of the proof is exactly the same as the proof of Proposition~\ref{propo:lies}.

%================================ Technical Lemmas =========================
%==================================================================
\section{{Proof of Technical Lemmas}}\label{sec:technical}

\subsection{Proof of Lemma~\ref{techlem1}}
Note that in a second-price auction with truthful buyers, $W_{it}(r)$ indicates the revenue that firm earns when buyer $i$ wins the auction and has been posted reserve price $r$. Therefore by definition of optimality $r^\star_{it}= {\arg\max_{r}} W_{it}(r)$. (In Proposition~\ref{prop:opt_reserve}, it is shown that $r^\star_{it}$ {is the optimal solution of optimization problem}~(\ref{def:rstar}).) Therefore, $W_{it}'(r^\star_{it}) = 0$.

Also, by Equation~\eqref{W-1-dev}, we have
\begin{align}
W_{it}'(r) ~=~ {H_{it}}(r) \Big((1- F(r - \<x_t,\beta_i\>)) - r f(r-\<x_t,\beta_i\>)\Big)\,.
\end{align}
Hence, %\negin{I think the following derivative was not correct. I changed it. Please double check.}
\begin{align} \nonumber 
W_{it}''(r) ~&=~ {{h_{it}}(r) \Big((1- F(r - \<x_t,\beta_i\>)) - r f(r-\<x_t,\beta_i\>)\Big)}\\
~&~- {2{H_{it}}(r) f(r-\<x_t,\beta_i\>) -{H_{it}}(r) r f'(r-\<x_t,\beta_i\>)\,.} \label{W-2-dev}
\end{align}
Since valuations and bids are bounded by constant $B$, clearly $0\le r^\star_{it}, r_{it} \le B$ and given that $r$ is between them, we also have $0\le r\le B$. {In addition, considering the fact that 
the market noise} is bounded in $[-\maxn, \maxn]$ and $f$ and $f'$ are continuous, {both  $f$ and $f'$} attain their maximum over the compact interval $[-\maxn,\maxn]$. 
Let $c_1 = \max_{y\in[-\maxn, \maxn]} f(y)$ and $c_2 = \max_{y\in[-\maxn, \maxn]} f'(y)$. Further, since $0\le v_t^-\le B$, its density $h_{it}$ is supported in $[-B,B]$ and due to continuity, it attains its maximum over this interval.
Let $c_3 = \max_{y\in[-B,B]} h(y)$. Therefore,
\[ |W_{it}''(r)| \le c_3+{2c_1}+Bc_2\,.\]
The result follows by {setting} $c\equiv c_3+{2c_1}+Bc_2$.
%\end{proof}
%===============

\subsection{Proof of Lemma~\ref{techlem2}}
We define  function $g:\reals \mapsto \reals$ as follows:
\begin{align}\label{g}
g(\theta) ~=~ \arg\max_{y} \{y(1-F(y-\theta))\}\,.
\end{align}

By this definition, {for any $t\in \epi_{k}$,} we have $r^\star_{it} = g(\<x_t,\beta_i\>)$ and $r_{it} = g({\<x_t,\hbeta_{ik}\>})$. {Then, by showing $g(\cdot)$ is 1-Lipschitz function, claim~(\ref{r-B}) follows. To see this note that 
\begin{align}
|r^\star_{it}- r_{it}|  ~=~ |g(\<x_t,\beta_i\>) - g(\<x_t,\hbeta_{ik}\>)| ~\le~ |\<x_t,\beta_i - \hbeta_{ik}\>| \,,
\end{align} 
where the inequality holds  because of 1-Lipschitz property of function $g$.}

{By definition~{(\ref{g})}, $g(\theta)$ should satisfy the following stationary condition:
\[
1-F(g(\theta) - \theta) ~=~ g(\theta) f(g(\theta)  -\theta)\,.
\]
 Define $\varphi(y) \equiv y - \frac{1-F(y)}{f(y)}$ as the \emph{virtual valuation} function. Then, 
we can write $g(\theta)$ in terms of virtual valuation function:
$\varphi(g(\theta)-\theta) = -\theta\,.
$
Since $\varphi$ is injective, by applying $\varphi^{-1}$ to both sides, we can write $g(\theta)$ explicitly in terms of virtual valuation function:
\begin{align}\label{g-phi}
g(\theta) = \theta + \varphi^{-1}(-\theta)\,.
\end{align}
Using characterization~\eqref{g-phi}, we show that $g$ is 1-Lipschitz. To do so, we verify $g'(\theta) = 1 - 1/\varphi'(\varphi^{-1}(-\theta))$ is less than  one. In particular, $g'(\theta) \le 1$ because 
  $\varphi'(y) \ge 1$. To see why this holds note that
$\varphi(y)$ can be written as 
$\varphi(y) = y + \frac{1}{\log'(1-F(y))}$. 
Then, by Assumption~\ref{assump:logcancavity},  $1-F$ is log-concave. This implies that $\log'(1-F(y))$ is decreasing,  and consequently  $\varphi(y)$ is increasing. Indeed, this implies that $\varphi'(y)\ge  1$.}

\if false
 As we showed above, $\varphi'(y) > 1$ for all $y$, and hence 
$0<g'(\theta)<1$ for all $\theta$. This clearly implies that $g$ is a 1-Lipschitz function.

 We start with  writing function $g$ in terms of the \emph{virtual valuation} function, defined as $\varphi(y) \equiv y - \frac{1-F(y)}{f(y)}$.

We can also write $\varphi(y)$ as
$\varphi(y) = y + \frac{1}{\log'(1-F(y))}
$. 
Since $1-F$ is log-concave as per Assumption~\ref{assump:logcancavity}, $\log'(1-F(y))$ is decreasing, which implies that $\varphi(y)$ is increasing. Indeed it implies that {$\varphi'(y)\ge  1$.}
\fi

\subsection{Proof of Lemma~\ref{techlem3}}\label{proof:techlem3}
We first prove Claim~\eqref{second-claim}. {Observe that when $\vm - \bm<0$, Claim~\eqref{second-claim} holds, as $\shade_{it}\ge 0$ for any $i\in [N]$. Thus, it suffices to show that ${(\vm - \bm)} \le \max\big\{\shade_{it}(1-q_{it}):\, i\in [N]\big\}$. Without loss of generality, assume $v_{1t} > v_{2t} > \dotsc> v_{Nt}$. Then, $v_t^- = v_{2t}$.
 If $b_t^-\ge b_{2t}$, then buyer $2$ will not receive the item, i.e., $q_{2t} = 0$ and we have
\[
v_t^- - b_t^- = v_{2t} - b_t^- \le v_{2t} - b_{2t} = (v_{2t} - b_{2t}) (1-q_{2t})  = \shade_{2t}(1-q_{2t})\,,
\]
proving the claim in this case. 
The other case is when $b_t^- < b_{2t}$ and hence $b_{2t}$ is the highest bid. {This implies that} $b_{1t}\le b_t^-$ and we have the following chain of inequalities:
\begin{align}
v_t^- - b_t^-  = v_{2t} - b_t^- \le v_{2t} -b_{1t} < v_{1t} - b_{1t}\,. \label{chain}
\end{align}
Further, since buyer $2$ has the highest bid, $q_{2t} =1$ and $q_{it} = 0$ for all $i\neq 2$. In particular, $q_{1t} = 0$. Combining this with~\eqref{chain}, we get
\[
v_t^- - b_t^- < (v_{1t} - b_{1t}) (1-q_{1t})\,,
\]
which proves the claim in this case as well.

{We next prove Claim~\eqref{second-claim-2}. Suppose $q_{it} = 0$ and let buyer $j$ be the winner ($q_{jt} = 1$ and $j\neq i$). Then, by definition $b_{-it}^+ = b_{jt}$ and
\[
b_{-it}^+ - v_{-it}^+ = b_{jt} - v_{-it}^+ \le b_{jt} - v_{jt} = o_{jt}q_{jt}\,.
\]
Here, we use that $v_{jt}\le v_{-it}^+$ because $j\neq i$.
}

%%%%%%%%%%%%%%%%%
\subsection{Proof of {Lemma}~\ref{techlem2-2}}
Recall that $r^\star_{it}$ and $r_{it}$ are given by the following equations:
\begin{align*}
r^\star_{it} &= \underset{r}{\arg\max\;}\min_{F\in\cF}\; r(1-F(r-\<x_t,\beta_i\>))\,,\\
r_{it} &= \underset{r}{\arg\max\;}\min_{F\in\cF}\; r(1-F(r-\<x_t,\hbeta_{ik}\>))\,.
\end{align*}
Let $\tr^\star_{it} = r^\star_{it} -\<x_t,\beta_i\>$ and $\tr_{it} = r_{it} -\<x_t,\hbeta_{ik}\>$. By a change of variable, it is easy to see that $\tr^\star_{it}$ and $\tr_{it}$ are the solutions to the following optimization problems:
\begin{align*}
\tr^\star_{it} &= \underset{r}{\arg\max\;}\min_{F\in\cF} \Big\{(r+{\<x_t,\beta_i\>}) (1-F(r))\Big\}\,,\\
\tr_{it} &= \underset{r}{\arg\max\;}\min_{F\in\cF} \Big\{(r+{\<x_t,\hbeta_{ik}\>}) (1-F(r))\Big\}\,.
\end{align*}
Define {function} $H:\reals \to \reals$ as $H(r)\equiv \max_{F\in \cF} F(r)$.  Observe that $\tr^\star_{it}+\<x_t,\beta_{i}\> = r^\star_{it} >0$ and hence,
\begin{align}
\tr^\star_{it} &~=~ \underset{r}{\arg\max\;}\min_{F\in\cF} ~(r+{\<x_t,\beta_i\>}) (1-F(r))\nonumber\\
&~=~ \underset{r}{\arg\max\;} (r+{\<x_t,\beta_i\>})~\min_{F\in\cF} (1-F(r)) \nonumber\\
&~=~ \underset{r}{\arg\max\;} (r+{\<x_t,\beta_i\>}) (1-H(r))\,. \label{opt0}
\end{align}
Using the change of variable $r\leftarrow r+\<x_t,\beta_i\>$, we obtain
\begin{align}\label{opt1}
r^\star_{it} = \underset{r}{\arg\max\;} ~r (1-H(r-\<x_t,\beta_i\>))\,.
\end{align}
Likewise, 
\begin{align}\label{opt2}
{r_{it}} = \underset{r}{\arg\max\;} ~r (1-H(r-\<x_t,\hbeta_{ik}\>))\,.
\end{align}
Now, note that by definition of function $H$, we have $\log(1-H(r)) = \min_{F\in \cF} \log(1-F(r))$. Further, $F$ is log-concave for all $F\in \cF$, as per Assumption~\ref{assump1}. Moreover, using the fact that the (pointwise) minimum of concave functions is also concave, we have that $1-H$ is log-concave.  By virtue of characterizations~\eqref{opt1} and~\eqref{opt2}, and log-concavity of $1-H$, the claim follows from the same proof of {Lemma~\ref{techlem2}} and hence is omitted. The only subtle point is that {function} $H$, although continuous, may not be differential at some points. Therefore, in using the argument of {Lemma~\ref{techlem2}}, derivative should be replaced by subgradient.
%\end{proof}
%============================
\section{Matrix Freedman Inequality}\label{sec:freedman}
For readers' convenience, {here} we state the Matrix Freedman inequality for martingales.
\begin{thm}[Rectangular Matrix Freedman] \label{thm:freedman}
Consider a matrix martingale $\{Y_k: k = 0, 1, 2,\dotsc\}$ whose values are matrices with dimension $d_1\times d_2$ and let $\{X_k: k = 1, 2,\dotsc\}$ be the difference
sequence. Assume that the difference sequence is uniformly bounded:
\begin{align}
{\|X_k\|_{\rm op}}\le R \quad \text{ almost surely }\quad \text{ for } k\ge 1\,,
\end{align}
where $\|\cdot\|_{\rm op}$ denotes the operator norm\footnote{{For a matrix $A$, its operator norm is defined as $\|A\|_{\rm op}= \inf\{c\ge 0:  \|Av\|\le c \|v\|, \text{ for any vector $v$}\}$. {Equivalently,} the operator norm is the largest singular value of a matrix.}}, {and $R$ is a constant.} 
Define two predictable quadratic variation processes for this martingale
\begin{eqnarray*}
W_{1,k} &\equiv& \sum_{j=1}^k \E[X_j X_j^\sT | Y_1, \dotsc, Y_{j-1}]\,,\\
W_{2,k} &\equiv& \sum_{j=1}^k \E[X_j^\sT X_j | Y_1, \dotsc, Y_{j-1}]\,,
\end{eqnarray*}
for $k\ge 1$. Further, for given $\sigma^2 > 0$ and $t\ge 0$, let event ${\cal A} \equiv \Big\{\exists k\ge 0:\; \|Y_k\|_{{\rm op}}\ge t, ~ \max(\|W_{1,k}\|_{\rm op}, \|W_{2,k}\|_{\rm op}) \le \sigma^2\Big\} $. Then,
\begin{eqnarray*}
\prob({\cal{A}})&\le& (d_1+d_2) \exp\left(-\frac{t^2/2}{\sigma^2+RT/3}\right)
~=~\begin{cases}
(d_1+d_2) \exp(-3t^2/8\sigma^2)\quad t\le \sigma^2/R\,,\\
(d_1+d_2) \exp(-3t/8R)\quad\;\; t\ge \sigma^2/R\,.
\end{cases}
\end{eqnarray*}
\end{thm}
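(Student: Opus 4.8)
The statement is the rectangular Matrix Freedman inequality, and my plan is to reduce it to the self-adjoint case by Hermitian dilation and then prove the self-adjoint version by the matrix Laplace-transform (Chernoff) method combined with a trace-exponential supermartingale.

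\textbf{Step 1 (Dilation).} First I would remove the rectangularity. Define the Hermitian dilation $\Phi(A)=\left(\begin{smallmatrix} 0 & A\\ A^\sT & 0\end{smallmatrix}\right)\in\reals^{(d_1+d_2)\times(d_1+d_2)}$. Two facts make this reduction clean: $\|A\|_{\rm op}=\lambda_{\max}(\Phi(A))$, and $\Phi(A)^2=\mathrm{blkdiag}(AA^\sT,A^\sT A)$. Consequently $\{\Phi(Y_k)\}$ is a self-adjoint matrix martingale of dimension $d=d_1+d_2$ whose difference sequence $\{\Phi(X_k)\}$ satisfies $\lambda_{\max}(\Phi(X_k))\le\|X_k\|_{\rm op}\le R$, and whose predictable quadratic variation $\sum_{j\le k}\E[\Phi(X_j)^2\mid\mathcal{F}_{j-1}]=\mathrm{blkdiag}(W_{1,k},W_{2,k})$ has operator norm exactly $\max(\|W_{1,k}\|_{\rm op},\|W_{2,k}\|_{\rm op})$. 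Thus the event $\mathcal{A}$ and the target bound translate verbatim into a self-adjoint Freedman event for $\{\Phi(Y_k)\}$, with the dimensional factor $d_1+d_2$ arising naturally.

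\textbf{Step 2 (Self-adjoint case, Laplace transform).} For the dilated martingale I would invoke the matrix Chernoff bound: for any fixed $\theta>0$, $\prob(\lambda_{\max}(Y_k)\ge t)\le e^{-\theta t}\,\E[\Tr e^{\theta Y_k}]$ (writing $Y_k$ now for the self-adjoint process). The core estimate is a conditional matrix mgf bound: for a conditionally mean-zero self-adjoint $X$ with $\lambda_{\max}(X)\le R$, the scalar inequality $e^{\theta x}\le 1+\theta x+g(\theta)x^2$ (valid for $x\le R$, with $g(\theta)=(e^{\theta R}-\theta R-1)/R^2$) transfers through the spectral calculus; taking the conditional expectation and using $\log(I+M)\preceq M$ gives $\log\E[e^{\theta X}\mid\mathcal{F}_{j-1}]\preceq g(\theta)\,\E[X^2\mid\mathcal{F}_{j-1}]$.

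\textbf{Step 3 (Trace-exponential supermartingale and optional stopping).} Next I would set $S_k(\theta)=\Tr\exp(\theta Y_k-g(\theta)W_k)$ and show it is a supermartingale with $\E[S_0]=d$. This is the non-commutative crux: since $e^{A+B}\ne e^A e^B$, one cannot simply multiply conditional mgf's. The device is Lieb's concavity theorem---$M\mapsto\Tr\exp(H+\log M)$ is concave on positive-definite $M$---which via conditional Jensen lets me pull $\E[\cdot\mid\mathcal{F}_{k-1}]$ inside the trace exponential and then apply the Step~2 bound to collapse one increment. To capture the uniform-in-$k$ (the ``$\exists k$'') nature of $\mathcal{A}$, I would introduce the stopping time $\tau=\min\{k:\lambda_{\max}(Y_k)\ge t,\ \|W_k\|_{\rm op}\le\sigma^2\}$ and apply the supermartingale property to the stopped process $S_{k\wedge\tau}$ (passing to the limit by Fatou), giving $\E[S_\tau(\theta)]\le d$. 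On $\mathcal{A}$ one has $W_\tau\preceq\sigma^2 I$, hence $\lambda_{\max}(\theta Y_\tau-g(\theta)W_\tau)\ge\theta t-g(\theta)\sigma^2$ and therefore $S_\tau(\theta)\ge e^{\theta t-g(\theta)\sigma^2}$; Markov's inequality then yields $\prob(\mathcal{A})\le d\,e^{-\theta t+g(\theta)\sigma^2}$. Optimizing the scalar exponent $\theta t-g(\theta)\sigma^2$ over $\theta>0$ (the standard Bernstein optimization) produces the claimed bound $(d_1+d_2)\exp\!\big(-\tfrac{t^2/2}{\sigma^2+Rt/3}\big)$ and its two stated regimes.

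\textbf{Main obstacle.} The genuinely hard step is Step~3: proving that the trace-exponential process is a supermartingale despite matrix non-commutativity. The naive multiplicative argument of the scalar Freedman proof fails, and the rigorous route requires Lieb's concavity theorem (equivalently, Tropp's subadditivity of matrix cumulant generating functions) to justify interchanging the conditional expectation with the trace exponential. Everything else---the dilation, the scalar mgf inequality, the optional-stopping step, and the Bernstein optimization---is routine once this supermartingale is in hand.
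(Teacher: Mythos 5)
Your proof is correct and follows precisely the argument of \cite{tropp2011freedman}, which is the source the paper cites for this theorem (the paper itself gives no proof): Hermitian dilation to reduce the rectangular case to the self-adjoint one with dimension factor $d_1+d_2$, the scalar Bernstein mgf bound $g(\theta)=(e^{\theta R}-\theta R-1)/R^2$ transferred through the spectral calculus, Lieb's concavity theorem to show $\Tr\exp(\theta Y_k - g(\theta)W_k)$ is a supermartingale, and a stopped-process/Markov step followed by the standard Bernstein optimization. No gaps; your identification of the supermartingale step as the non-commutative crux matches the structure of the cited proof exactly.
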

We refer to~\citep{tropp2011freedman} for the proof of Theorem~\ref{thm:freedman}. We next state the result of Matrix Freedman theorem specialized to the vector case. This corollary is used in the proof of
Propositions~\ref{proof:lem-grad-hessian} and~\ref{proof:lem-grad-hessian-2}.

\begin{coro}\label{coro:MM}
Consider a vector martingale $\{u_k: k = 0, 1, 2,\dotsc\}$ whose values are vector with dimension $d$ and let $\{v_k: k = 1, 2,\dotsc\}$ be the difference
sequence. Assume that the difference sequence is uniformly bounded:
\begin{align}
\|v_k\|\le R \quad \text{ almost surely }\quad \text{ for } k\ge 1\,,
\end{align}
Define a predictable quadratic variation processes for this martingale:
\begin{eqnarray*}
w_{k} &\equiv& \sum_{j=1}^k \E\Big[\|v_j\|^2 \Big| u_1, \dotsc, u_{j-1}\Big]\,,\quad \text{ for } k\ge 1\,.
\end{eqnarray*}
Then, for all $t\ge 0$ and $\sigma^2 > 0$, we have
\begin{eqnarray}
\prob\Big\{\exists k\ge 0:\; \|u_k\|\ge t\quad \text{and} \quad \|w_{k}\| \le \sigma^2\Big\} &\le& (d+1) \exp\left(-\frac{t^2/2}{\sigma^2+R\ajb{t}/3}\right)\label{coroMM:eq}\\
&=&\begin{cases}\nonumber
(d+1) \exp(-3t^2/8\sigma^2)\quad \text{ for }t\le \sigma^2/R\,,\\
(d+1) \exp(-3t/8R)\quad\;\; \text{ for }t\ge \sigma^2/R\,.
\end{cases}
\end{eqnarray}
\end{coro}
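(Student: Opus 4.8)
The plan is to deduce Corollary~\ref{coro:MM} directly from the rectangular Matrix Freedman inequality (Theorem~\ref{thm:freedman}) by viewing each vector as a single-column matrix. Concretely, I would set $Y_k := u_k$, regarded as a matrix of dimension $d_1\times d_2$ with $d_1 = d$ and $d_2 = 1$, so that $\{Y_k\}$ is a matrix martingale (the zero-conditional-mean property of its difference sequence being inherited from the vector martingale $\{u_k\}$) whose difference sequence is $X_k = v_k$. For a column vector the operator norm coincides with the Euclidean norm, $\|X_k\|_{\rm op} = \|v_k\| \le R$, so the boundedness hypothesis of Theorem~\ref{thm:freedman} holds with the same constant $R$, and likewise $\|Y_k\|_{\rm op} = \|u_k\|$.

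Next I would identify the two predictable quadratic variation processes of Theorem~\ref{thm:freedman} under this embedding. Since $X_j^\sT X_j = \|v_j\|^2$ is a scalar, the second process reduces to $W_{2,k} = \sum_{j=1}^k \E[\|v_j\|^2 \mid u_1, \dots, u_{j-1}] = w_k$, which is exactly the quadratic variation appearing in the corollary; in particular $\|W_{2,k}\|_{\rm op} = w_k$ (as $\|w_k\| = w_k$ for the nonnegative scalar $w_k$). The first process $W_{1,k} = \sum_{j=1}^k \E[v_j v_j^\sT \mid u_1, \dots, u_{j-1}]$ is a sum of conditional expectations of positive semidefinite matrices and is therefore itself PSD.

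The key step --- and the only point requiring an argument beyond bookkeeping --- is to show that the single scalar control $w_k \le \sigma^2$ in the corollary already implies the joint control $\max(\|W_{1,k}\|_{\rm op}, \|W_{2,k}\|_{\rm op}) \le \sigma^2$ demanded by Theorem~\ref{thm:freedman}. For a PSD matrix the operator norm is bounded by the trace, so
\[
\|W_{1,k}\|_{\rm op} \le \Tr(W_{1,k}) = \sum_{j=1}^k \E[\Tr(v_j v_j^\sT) \mid u_1, \dots, u_{j-1}] = \sum_{j=1}^k \E[\|v_j\|^2 \mid u_1, \dots, u_{j-1}] = w_k = \|W_{2,k}\|_{\rm op}\,.
\]
Hence $\max(\|W_{1,k}\|_{\rm op}, \|W_{2,k}\|_{\rm op}) = w_k$, so the event $\{\exists k: \|u_k\|\ge t,\, \|w_k\|\le\sigma^2\}$ of the corollary is contained in the event ${\cal A}$ of Theorem~\ref{thm:freedman} (the same index $k$ witnessing both). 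Applying the theorem with $d_1 + d_2 = d+1$ yields the stated bound $(d+1)\exp(-\tfrac{t^2/2}{\sigma^2 + Rt/3})$.

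Finally I would record the two-regime simplification, which is elementary: when $t \le \sigma^2/R$ one has $Rt/3 \le \sigma^2/3$, so $\sigma^2 + Rt/3 \le \tfrac{4}{3}\sigma^2$ and the exponent is at most $-3t^2/(8\sigma^2)$; when $t \ge \sigma^2/R$ one has $\sigma^2 \le Rt$, so $\sigma^2 + Rt/3 \le \tfrac{4}{3}Rt$ and the exponent is at most $-3t/(8R)$. I do not anticipate any genuine obstacle: the whole content is the observation that the scalar quadratic variation $w_k$ simultaneously upper-bounds both matrix quadratic variations through the trace inequality for PSD matrices, which collapses the two-sided hypothesis of the rectangular theorem to the single scalar hypothesis stated in the corollary.
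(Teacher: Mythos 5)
Your proposal is correct and is exactly the route the paper intends: Corollary~\ref{coro:MM} is stated as the specialization of Theorem~\ref{thm:freedman} to $d_1 = d$, $d_2 = 1$, and your embedding ($\|X_k\|_{\rm op} = \|v_k\|$, $W_{2,k} = w_k$) together with the trace bound $\|W_{1,k}\|_{\rm op} \le \Tr(W_{1,k}) = w_k$ for the PSD matrix $W_{1,k}$ is precisely the bookkeeping needed to collapse the two-sided variance hypothesis to the single scalar condition $w_k \le \sigma^2$. The two-regime simplification of the exponent is also handled correctly, so there is nothing to add.
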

\medskip

{We used Corollary~\ref{coro:MM} in the proof of Lemma~\ref{lem:grad-hessian} to bound the norm of martingale $S_{j}$ (see below Equation~\eqref{grad-2}). Specifically, we used the corollary with $u_j = S_j$, $v_j = \tilde{\mu}_{it}(\beta_i)x_t$ {(with $t = \ell_{k-1}+j-1$)}, $R = u_F$, {and} $\sigma^2 = u_F^2 \ajb{\ell_{k-1}}$. Then, using bound~\eqref{coroMM:eq} for $S_{\ell_{k-1}}$ with $t = 2u_F\sqrt{\log(\ell_{k-1}d)\ell_{k-1}}$, we obtain bound~\eqref{Azuma}.

Likewise, we used Corollary~\ref{coro:MM} in the proof of Lemma~\ref{lem:grad-hessian-2} to bound the {norm} of martingale $S_j$ (see below Equation~\eqref{grad-2-2}). Here, again we set $u_j = S_j$, $v_j = \tilde{\mu}_{it}(\beta_i)x_t$ with $t = \ell_{k-1}+j-1$ (note that in this case $\tilde{\mu}_{it}(\beta_i) = 2(\<x_t,\beta_i\> - BN\tilde{q}_{it})$.) We then have $R = 2B(N+1)$ and $\sigma^2 = 4B^2(N+1)^2 j$. We then use bound~\eqref{coroMM:eq} for $S_{\ell_{k-1}}$ with $t = 4B(N+1)\sqrt{\log(\ell_{k-1}d)|I_k|}$ to {obtain}~\eqref{Azuma-2-2}.}

\ngg{
\section{Proof of Theorem \ref{thm:uniform}\label{sec:example}} We first show the result when $w \in (\underline a, \bar a)$.  Define $F_a(\cdot)$ as the probability distribution of the uniform distribution  with the support of $[-a, a]$. To solve the optimization problem given in  Theorem \ref{thm:uniform}, we first consider the following optimization problem for any $r\ge 0$.
\begin{align}\min_{a\in [\underline a, \bar a]} (1-F_a(r-w))\,.\label{eq:first_opt}\end{align}
We will show that 
\[\min_{a\in [\underline a, \bar a]} (1-F_a(r-w)) =\left\{ \begin{array}{ll}
        \frac{\bar a -(r-w)}{2\bar a} &~~~~ \mbox{if $r< w$};\\
        \frac{1}{2} &~~~~ \mbox{if $r=w$};\\
         \max (\frac{\underline  a -(r-w)}{2\underline a}, 0) &~~~~ \mbox{if $r>w$},
        \end{array} \right.  \]
        where $\frac{\bar a -(r-w)}{2\bar a}$ and $\frac{\bar a -(r-w)}{2\bar a}$ are respectively $(1-F_{\bar a}(r-w))$ and $(1-F_{\underline  a}(r-w))$. To do so, we consider the following three cases:
        \begin{itemize}
        \item Case 1 ($r< w$): Consider any $a\in [\underline a, \bar a]$ such that $r-w\le -a$. Then, $(1-F_a(r-w)) = 1$. Now, consider any $a \in [\underline a, \bar a]$ such that $-a<r-w< a$.\footnote{Observe that $r-w$ cannot exceed $a$ as under case $1$, $r-w<0$ and $a> 0$.} When $-a<r-w< a$, we have 
        \[ (1-F_a(r-w)) =r\frac{ a -(r-w)}{2 a}\,.\]
        It is easy to see that $\frac{ a -(r-w)}{2 a}$ is decreasing in $a$ and as  a result, $\arg\min_{a\in [w-r, \bar a]}\frac{ a -(r-w)}{2 a} =\bar a$. This shows that $\arg\min_{a\in [\underline a, \bar a]}\frac{ a -(r-w)}{2 a} =\bar a$, which is the desired result.
        
        \item Case 2 ($r= w$): This case is simple as for any $a\in[\underline a, \bar a]$, we have $1- F_{a}(r-w) = 1-F_a(0) = 1/2$.
        \item Case 3 ($r> w$): If $\underline a < r-w$, then $\min_{a\in [\underline a, \bar a]} (1-F_a(r-w))$ is indeed zero. Otherwise,  $(1-F_a(r-w))$ is increasing in $a$ and obtains its minimum at $a = \underline a$. 
        \end{itemize}
        
        Next, we solve the following optimization problem:
        \begin{align} \label{eq:opt}\max_{r} \min_{a \in [\underline a , \bar a]} r (1-F_{a}(r-w))\,.\end{align}
To characterize the optimal solution of the aforementioned optimization problem, we consider the following three regions for $w$. 

\begin{itemize}
\item Region 1 ($w\in [\underline a, \bar a]$): To solve problem (\ref{eq:opt}),  we divide this problem into two subproblems. In the first subproblem,  $r \le w$, and in the second one, $r\ge w$. Precisely, the first subproblem concerns the following optimization problem:         \[\max_{r\le w} \min_{a \in [\underline a , \bar a]} r (1-F_{a}(r-w))\ = \max_{r\le w} r \frac{\bar a- (r-w)}{2\bar a}  = \frac{w}{2}\,,\]
where the first equation follows from case 1 and  last equation holds because when $r\le w< \bar a$,  $r\frac{\bar a-(r-w)}{2 \bar a}$ is increasing in $r$. 

The second subproblem is given by 
\[\max_{r \ge w} \min_{a \in [\underline a , \bar a]} r (1-F_{a}(r-w)) = \max_{r\ge  w}~~ \left(r ~\max(\frac{\underline a- (r-w)}{2 \underline a}, 0)\right) = \frac{w}{2}\,, \]
where the first equation follows from case 3 and  last equation holds because when $ w\ge  \underline  a$,  $r\frac{\underline  a-(r-w)}{2 \bar a}$ is increasing in $r$. Put these together, the optimal value of problem (\ref{eq:opt}) is $w/2$, which happens at $r= w$.

\item Region 2 ($w < \underline a$): The solution of the first subproblem is the same  as that in region 1. But, the optimal solution of the second subproblem is $\frac{w+\underline a}{2}$, which implies that the optimal solution of problem (\ref{eq:opt}) is $\frac{w+\underline a}{2}$.

\item Region 3 ($w> \bar a$): It is easy to show that in this region, the optimal solution of problem (\ref{eq:opt}) is $\frac{w+\bar a}{2}$.
\end{itemize}  }

%%%%%%%%%%%%%%%%%
\end{document}